\definecolor{offwhite}{gray}{0.92}
\definecolor{lightgrey}{gray}{0.85}
\definecolor{lightmidgrey}{gray}{0.75}
\definecolor{midgrey}{gray}{0.6}
\definecolor{darkmidgrey}{gray}{0.5}
\definecolor{darkgrey}{gray}{0.4}
\definecolor{verydarkgrey}{gray}{0.2}
\definecolor{grey9}{gray}{0.9}
\definecolor{grey8}{gray}{0.8}
\definecolor{grey7}{gray}{0.7}
\definecolor{grey6}{gray}{0.6}
\definecolor{grey5}{gray}{0.5}
\definecolor{grey4}{gray}{0.4}
\definecolor{grey3}{gray}{0.3}
\definecolor{grey2}{gray}{0.2}
\definecolor{grey1}{gray}{0.1}
\definecolor{pink}{rgb}{1,0.5,0.5}
\definecolor{lightpink}{rgb}{1,0.8,0.8}
\definecolor{orange}{cmyk}{0,0.61,0.87,0}
\definecolor{lightorange}{cmyk}{0,0.58,0.95,0}
\definecolor{darkgreen}{rgb}{0,0.3,0}
\definecolor{darkred}{rgb}{0.5,0,0}
\definecolor{darkblue}{rgb}{0,0,0.5}
\definecolor{lightgreen}{rgb}{0.6,1,0.4}
\definecolor{lightcyan}{rgb}{0.8,1,0.9}
\definecolor{purple}{rgb}{0.7,0.1,0.7}
\definecolor{darkpurple}{rgb}{0.55,0.05,0.55}
\definecolor{reasonablegreen}{rgb}{0,0.5,0}
\definecolor{header}{gray}{0.75}
\definecolor{subheader}{gray}{0.85}
\definecolor{evenrow}{gray}{0.90}
\definecolor{oddrow}{gray}{1.0}
\definecolor{er}{rgb}{1,1,0.5}
\definecolor{hdm}{rgb}{0.5,1,0.5}
\definecolor{rel}{rgb}{0.5,1,1}
\definecolor{orm}{rgb}{0.8,0.8,0.5}
\definecolor{uml}{rgb}{0.6,0.6,1}
\definecolor{www}{rgb}{1,0.5,1}
\definecolor{xml}{rgb}{0.8,0.8,1}
\definecolor{comment}{rgb}{0.75,0.75,0.75}
\definecolor{GreenYellow}{cmyk}{0.15,0,0.69,0}
\definecolor{Yellow}{cmyk}{0,0,1,0}
\definecolor{Goldenrod}{cmyk}{0,0.10,0.84,0}
\definecolor{Dandelion}{cmyk}{0,0.29,0.84,0}
\definecolor{Apricot}{cmyk}{0,0.32,0.52,0}
\definecolor{Peach}{cmyk}{0,0.50,0.70,0}
\definecolor{Melon}{cmyk}{0,0.46,0.50,0}
\definecolor{YellowOrange}{cmyk}{0,0.42,1,0}
\definecolor{Orange}{cmyk}{0,0.61,0.87,0}
\definecolor{BurntOrange}{cmyk}{0,0.51,1,0}
\definecolor{Bittersweet}{cmyk}{0,0.75,1,0.24}
\definecolor{RedOrange}{cmyk}{0,0.77,0.87,0}
\definecolor{Mahogany}{cmyk}{0,0.85,0.87,0.35}
\definecolor{Maroon}{cmyk}{0,0.87,0.68,0.32}
\definecolor{BrickRed}{cmyk}{0,0.89,0.94,0.28}
\definecolor{Red}{cmyk}{0,1,1,0}
\definecolor{OrangeRed}{cmyk}{0,1,0.50,0}
\definecolor{RubineRed}{cmyk}{0,1,0.13,0}
\definecolor{WildStrawberry}{cmyk}{0,0.96,0.39,0}
\definecolor{Salmon}{cmyk}{0,0.53,0.38,0}
\definecolor{CarnationPink}{cmyk}{0,0.63,0,0}
\definecolor{Magenta}{cmyk}{0,1,0,0}
\definecolor{VioletRed}{cmyk}{0,0.81,0,0}
\definecolor{Rhodamine}{cmyk}{0,0.82,0,0}
\definecolor{Mulberry}{cmyk}{0.34,0.90,0,0.02}
\definecolor{RedViolet}{cmyk}{0.07,0.90,0,0.34}
\definecolor{Fuchsia}{cmyk}{0.47,0.91,0,0.08}
\definecolor{Lavender}{cmyk}{0,0.48,0,0}
\definecolor{Thistle}{cmyk}{0.12,0.59,0,0}
\definecolor{Orchid}{cmyk}{0.32,0.64,0,0}
\definecolor{DarkOrchid}{cmyk}{0.40,0.80,0.20,0}
\definecolor{Purple}{cmyk}{0.45,0.86,0,0}
\definecolor{Plum}{cmyk}{0.50,1,0,0}
\definecolor{Violet}{cmyk}{0.79,0.88,0,0}
\definecolor{RoyalPurple}{cmyk}{0.75,0.90,0,0}
\definecolor{BlueViolet}{cmyk}{0.86,0.91,0,0.04}
\definecolor{Periwinkle}{cmyk}{0.57,0.55,0,0}
\definecolor{CadetBlue}{cmyk}{0.62,0.57,0.23,0}
\definecolor{CornflowerBlue}{cmyk}{0.65,0.13,0,0}
\definecolor{MidnightBlue}{cmyk}{0.98,0.13,0,0.43}
\definecolor{NavyBlue}{cmyk}{0.94,0.54,0,0}
\definecolor{RoyalBlue}{cmyk}{1,0.50,0,0}
\definecolor{Blue}{cmyk}{1,1,0,0}
\definecolor{Cerulean}{cmyk}{0.94,0.11,0,0}
\definecolor{Cyan}{cmyk}{1,0,0,0}
\definecolor{ProcessBlue}{cmyk}{0.96,0,0,0}
\definecolor{SkyBlue}{cmyk}{0.62,0,0.12,0}
\definecolor{Turquoise}{cmyk}{0.85,0,0.20,0}
\definecolor{TealBlue}{cmyk}{0.86,0,0.34,0.02}
\definecolor{Aquamarine}{cmyk}{0.82,0,0.30,0}
\definecolor{BlueGreen}{cmyk}{0.85,0,0.33,0}
\definecolor{Emerald}{cmyk}{1,0,0.50,0}
\definecolor{JungleGreen}{cmyk}{0.99,0,0.52,0}
\definecolor{SeaGreen}{cmyk}{0.69,0,0.50,0}
\definecolor{Green}{cmyk}{1,0,1,0}
\definecolor{ForestGreen}{cmyk}{0.91,0,0.88,0.12}
\definecolor{PineGreen}{cmyk}{0.92,0,0.59,0.25}
\definecolor{LimeGreen}{cmyk}{0.50,0,1,0}
\definecolor{YellowGreen}{cmyk}{0.44,0,0.74,0}
\definecolor{SpringGreen}{cmyk}{0.26,0,0.76,0}
\definecolor{OliveGreen}{cmyk}{0.64,0,0.95,0.40}
\definecolor{RawSienna}{cmyk}{0,0.72,1,0.45}
\definecolor{Sepia}{cmyk}{0,0.83,1,0.70}
\definecolor{Brown}{cmyk}{0,0.81,1,0.60}
\definecolor{Tan}{cmyk}{0.14,0.42,0.56,0}
\definecolor{Gray}{cmyk}{0,0,0,0.50}
\definecolor{Black}{cmyk}{0,0,0,1}
\definecolor{White}{cmyk}{0,0,0,0}
\definecolor{azure}{rgb}{0.94, 1.0, 1.0}
\newtheorem{result}{\ }[section]
\theoremstyle{changebreak}                
\newtheorem{theorem}[result]{Theorem}
\newtheorem{lemma}[result]{Lemma}
\newtheorem{corollary}[result]{Corollary}
\newenvironment{proof}
 {{\sl Proof.}\hspace*{1 ex}}%
 {{\nopagebreak\hspace*{\fill}$\Box$\par\vspace{12pt}}}
\newcommand{\transpose}[1]{{#1}^\top}
\newcommand{\centroid}[1]{\mathsf{centroid}(#1)}
\newcommand{\rank}[1]{\mathsf{rk}(#1)}
\newcommand{\diag}[1]{\mathsf{diag}(#1)}
\newcommand{\trace}[1]{\mathsf{tr}(#1)}
\newcommand{\dist}[1]{\mathsf{dist}(#1)}
\DeclareMathOperator*{\opt}{opt}
\begin{document}

\begin{flushleft}\fbox{\color{blue}TOP (invited survey, to appear in 2020, Issue 2)}\end{flushleft}

\thispagestyle{empty}
\begin{center} 

{\LARGE Distance Geometry and Data Science}\footnote{This research was partly funded by the European Union's Horizon 2020 research and innovation programme under the Marie Sklodowska-Curie grant agreement n. 764759 ETN ``MINOA".}
\par \bigskip
{\sc Leo Liberti${}^{1}$} 
\par \bigskip
\begin{minipage}{15cm}
\begin{flushleft}
{\small
\begin{itemize}
\item[${}^1$] {\it LIX CNRS, \'Ecole Polytechnique, Institut Polytechnique de Paris, 91128 Palaiseau, France} \\ Email:\url{liberti@lix.polytechnique.fr}
\end{itemize}
}
\end{flushleft}
\end{minipage}
\par \medskip \today
\end{center}
\par \bigskip

\begin{flushright}
{\it Dedicated to the memory of Mariano Bellasio (1943-2019).}
\end{flushright}

\begin{abstract}
  Data are often represented as graphs. Many common tasks in data science are based on distances between entities. While some data science methodologies natively take graphs as their input, there are many more that take their input in vectorial form. In this survey we discuss the fundamental problem of mapping graphs to vectors, and its relation with mathematical programming. We discuss applications, solution methods, dimensional reduction techniques and some of their limits. We then present an application of some of these ideas to neural networks, showing that distance geometry techniques can give competitive performance with respect to more traditional graph-to-vector mappings. \\
\textbf{Keywords:} Euclidean distance, Isometric embedding, Random projection, Mathematical Programming, Machine Learning, Artificial Neural Networks.
\end{abstract}

{\renewcommand{\baselinestretch}{-0.2}\footnotesize 
  \tableofcontents\par
\renewcommand{\baselinestretch}{1.00}\normalsize }

\section{Introduction}
\label{s:introduction}
This survey is about the application of Distance Geometry (DG) techniques to problems in Data Science (DS). More specifically, data are often represented as graphs, and many methodologies in data science require vectors as input. We look at the fundamental problem in DG, namely that of reconstructing vertex positions from given edge lengths, in view of using its solution methods in order to produce vector input for further data processing.

The organization of this survey is based on a ``storyline''. In summary, we want to exhibit alternative competitive methods for mapping graphs to vectors in order to analyse graphs using Machine Learning (ML) methodologies requiring vectorial input. This storyline will take us through fairly different subfields of mathematics, Operations Research (OR) and computer science. This survey does not provide exhaustive literature reviews in all these fields. Its purpose (and usefulness) rests in communicating the main idea sketched above, rather than serving as a reference for a field of knowledge. It is nonetheless a survey because, limited to the scope of its purpose, it aims at being informative and also partly educational, rather than just giving the minimal notions required to support its goal. 

Here is a more detailed account of our storyline. We first introduce DG, some of its history, its fundamental problem and its applications. Then we motivate the use of graph representations for several types of data. Next, we discuss some of the most common tasks in data science (e.g.~classification, clustering) and the related methodologies (unsupervised and supervised learning). We introduce some robust and efficient algorithms used for embedding general graphs in vector spaces. We present some dimensional reduction operations, which are techniques for replacing sets $X$ of high-dimensional vectors by lower-dimensional ones $X'$, so that some of the properties of $X$ are preserved at least approximately in $X'$. We discuss the instability of distances on randomly generated vectors and its impact on distance-based algorithms. Finally, we present an application of much of the foregoing theory: we train an Artificial Neural Network (ANN) on many training sets, so as to learn several given clusterings on sentences in natural language. Some training sets are generated using traditional methods, namely incidence vectors of short sequences of consecutive words in the corpus dictionary. Other training sets are generated by representing sentences by graphs and then using a DG method to encode these graphs into vectors. It turns out that some of the DG-generated training sets have competitive performances with the traditional methods. While the empirical evidence is too limited to support any general conclusion, it might invite more research on this topic.

The survey is interspersed with eight theorems with proofs. Aside from Thm.~\ref{thm:distres} about distance instability, the proof of which is taken almost verbatim from the original source \cite{beyer}, the proofs from the other theorems are not taken from any source. This does not mean that the theorems and their proofs are actually original. The theorems are usually quite easy to prove. Their proofs are reasonably short, and, we hope, easy to follow. There are several reasons for the presence of these theorems in this survey: (a) we have not found them stated and proved clearly anywhere else, and we wish we had during our research work (Thm.~\ref{thm:dgpnumsol}-\ref{thm:uie}); (b) their proofs showcase some point we deem important about the underlying theory (Thm.~\ref{thm:PPI}-\ref{thm:distres}); (c) they give some indication of the proof techniques involved in the overarching field (Thm.~\ref{thm:rpnorm}-\ref{thm:PPI}); (d) they justify a precise mathematical statement for which we found no citation (Thm.~\ref{thm:sdpedmcp}). While there may be some original mathematical results in this survey, e.g.~Eq.~\eqref{eq:sdpedmcp} and the corresponding Thm.~\ref{thm:sdpedmcp} (though something similar might be found in Henry Wolkowicz' work) as well as the computational comparison in Sect.~\ref{s:rpidapprox}, we believe that the only truly original part is the application of DG techniques to constructing training sets of ANNs in Sect.~\ref{s:anndg}. Sect.~\ref{s:datgph}, about representing data by graphs, may also contain some new ideas to Mathematical Programming (MP) readers, although everything we wrote can be easily reconstructed from existing literature, though some of which might perhaps be rather exotic to MP readership.


The rest of this paper is organized as follows. In Sect.~\ref{s:mp} we give a brief introduction to the field of MP, considered as a formal language for optimization. In Sect.~\ref{s:dg} we introduce the field of DG. In Sect.~\ref{s:datgph} we give details on how to represent four types of data as graphs. In Sect.~\ref{s:clustering} we introduce methods for clustering on vectors as well as directly on graphs. In Sect.~\ref{s:dgpsol} we present many methods for realizing graphs in Euclidean spaces, most of which are based on MP. In Sect.~\ref{s:dimred} we present some dimensional reduction techniques. In Sect.~\ref{s:distres} we discuss the distance instability phenomenon, which may have a serious negative inpact on distance-based algorithms. In Sect.~\ref{s:anndg} we present a case-in-point application of natural language clustering by means of an ANN, and discuss how the DG techniques can help construct the input part of the training set. 

\section{Mathematical Programming}
\label{s:mp}
Many of the methods discussed in this survey are optimization methods. Specifically, they belong to MP, which is a field of optimization sciences and OR. While most of the readers of this paper should be familiar with MP, the interpretation we give to this term is more formal than most other treatments, and we therefore discuss it in this section.

\subsection{Syntax}
MP is a formal language for describing optimization problems. The valid sentences of this language are the MP {\it formulations}. Each formulation consist of an array $p$ of {\it parameter} symbols (which encode the problem input), an array $x$ of $n$ {\it decision variable} symbols (which will contain the solution), an {\it objective function} $f(p,x)$ with an optimization direction (either $\min$ or $\max$), a set of {\it explicit constraints} $g_i(p,x)\le 0$ for all $i\le m$, and some {\it implicit constraints}, which impose that $x$ should belong to some implicitly described set $X$. For example, some of the variables should take integer values, or should belong to the non-negative orthant, or to a positive semidefinite (psd) cone. The typical MP formulation is as follows:
\begin{equation}
  \left.\begin{array}{rrcl}
    \opt_x \ & f(p,x) && \\
    \forall i\le m \ & g_i(p,x) &\le& 0 \\
    & x &\in& X.
  \end{array}\right\}
  \label{eq:mp}
\end{equation}

It is customary to define MP formulations over explicitly closed feasible sets, in order to prevent issues with feasible formulations which have infima or suprema but no optima. This prevents the use of strict inequality symbols in the MP language. 

\subsection{Taxonomy}
MP formulations are classified according to syntactical properties. We list the most important classes:
\begin{itemize}
\item if $f,g_i$ are linear in $x$ and $X$ is the whole space, Eq.~\eqref{eq:mp} is a Linear Program (LP);
\item if $f,g_i$ are linear in $x$ and $X=\{0,1\}^n$, Eq.~\eqref{eq:mp} is a Binary Linear Program (BLP);
\item if $f,g_i$ are linear in $x$ and $X$ is the whole space intersected with an integer lattice, Eq.~\eqref{eq:mp} is a Mixed-Integer Linear Program (MILP);
\item if $f$ is a quadratic form in $x$, $g_i$ are linear in $x$, and $X$ is the whole space, Eq.~\eqref{eq:mp} is a Quadratic Program (QP); if $f$ is convex, then it is a convex QP (cQP);
\item if $f$ is linear in $x$ and $g_i$ are quadratic forms in $x$, and $X$ is the whole space or a polyhedron, Eq.~\eqref{eq:mp} is a Quadratically Constrained Program (QCP); if $g_i$ are convex, it is a convex QCP (cQCP);
\item if $f$ and $g_i$ are quadratic forms in $x$, and $X$ is the whole space or a polyhedron, Eq.~\eqref{eq:mp} is a Quadratically Constrained Quadratic Program (QCQP); if $f,g_i$ are convex, it is a convex QCQP (cQCQP);
\item if $f,g_i$ are (possibly) nonlinear functions in $x$, and $X$ is the whole space or a polyhedron, Eq.~\eqref{eq:mp} is a Nonlinear Program (NLP); if $f,g_i$ are convex, it is a convex NLP (cNLP);
\item if $x$ is a symmetric matrix of decision variables, $f,g_i$ are linear, and $X$ is the set of all psd matrices, Eq.~\eqref{eq:mp} is a Semidefinite Program (SDP);
\item if we impose some integrality constraints on any decision variable on formulations from the classes QP, QCQP, NLP, SDP, we obtain their respective mixed-integer variants MIQP, MIQCQP, MINLP, MISDP.
\end{itemize}
This taxonomy is by no means complete (see \cite[\S 3.2]{refmathprog} and \cite{williams}).

\subsection{Semantics}
As in all formal languages, sentences are given a meaning by replacing variable symbols with other mathematical entities. In the case of MP, its semantics is assigned by an algorithm, called {\it solver}, which looks for a numerical solution $x^\ast\in\mathbb{R}^n$ having some optimality properties and satisfying the constraints. For example, BLPs such as Eq.~\eqref{eq:modularity2} can be solved by the CPLEX solver \cite{cplex128}. This allows users to solve optimization problems just by ``modelling'' them (i.e.~describing them as a MP formulation) instead of having to invent a specific solution algorithm. As a formal descriptive language, MP was shown to be Turing-complete \cite{undecminlp,universal_mp}.

\subsection{Reformulations}
\label{s:reformulations}
It is always the case that infinitely many formulations have the same semantics: this can be seen in a number of trivial ways, such as e.g.~multiplying some constraint $g_i\le 0$ by any positive scalar in Eq.~\eqref{eq:mp}. This will produce an uncountable number of different formulations with the same feasible and optimal set.

Less trivially, this property is precious insofar as solvers perform more or less efficiently on different (but semantically equivalent) formulations. More generally, a symbolic transformation on an MP formulation for which one can provide some guarantees on the consequent changes on the feasible or optimal set is called a {\it reformulation} \cite{refmathprog,arschapter,rose}.

Three types of reformulation guarantees will appear in this survey:
\begin{itemize}
\item the {\it exact} reformulation: the optima of the reformulated problem can be mapped easily back to those of the original problem; 
\item the {\it relaxation}: the optimal objective function value of the reformulated problem provides a bound (in the optimization direction) on the optimal objective function value of the original problem;
\item the {\it approximating} reformulation: a sequence of formulations based on a parameter which also appears in a ``guarantee statement'' (e.g.~an inequality providing a bound on the optimal objective function value of the original problem); when the parameter tends to infinity, the guarantee proves that formulations in the sequence get closer and closer to an exact reformulation or to a relaxation.
\end{itemize}

Reformulations are only useful when they can be solved more efficiently than the original problem. Exact reformulations are important because the optima of the original formulation can be retrieved easily. Relaxations are important in order to evaluate the quality of solutions of heuristic methods which provide solutions without any optimality guarantee; moreover, they are crucial in Branch-and-Bound (BB) type solvers (such as e.g.~CPLEX). Approximating reformulations are important to devise approximate solution methods for MP problems. 

There are some trivial exact reformulations which guarantee that Eq.~\eqref{eq:mp} is much more general than it would appear at first sight: for example, inequality constraints can be turned into equality constraints by the addition of slack or surplus variables; equality constraints can be turned to inequality constraints by listing the constraint twice, once with $\le$ sense and once with $\ge$ sense; minimization can be turned to maximization by the equation $\min f=-\max -f$ \cite[\S 3.2]{arschapter}. 

\subsubsection{Linearization}
\label{s:linearization}
We note two easy, but very important types of reformulations.
\begin{itemize}
\item The {\it linearization} consists in identifying a nonlinear term $t(x)$ appearing in $f$ or $g_i$, replacing it with an added variable $y_t$, and then adjoining the {\it defining constraint} $y_t=t(x)$ to the formulation.
\item The {\it constraint relaxation} consists in removing a constraint: since this means that the feasible region becomes larger, the optima may only improve. Thus, relaxing constraints yields a relaxation.
\end{itemize}
These two reformulation techniques are often used in sequence: one identifies problematic nonlinear terms, linearizes them, and then relaxes the defining constraints. Carrying this out recursively for every term in an NLP \cite{mccormick}, and only relaxing the nonlinear defining constraints yields an LP relaxation of an NLP \cite{s_and_costas,tawarmalani1,couenne}.

\section{Distance Geometry}
\label{s:dg}
DG refers to a foundation of geometry based on the concept of distances instead of those of points and lines (Euclid) or point coordinates (Descartes). The axiomatic foundations of DG were first laid out in full generality by Menger \cite{menger28}, and later organized and systematized by Blumenthal \cite{blumenthal}. A {\it metric space} is a triplet $(\mathbb{X},\mathbb{D},d)$, where $\mathbb{X}$ is an abstract set, $\mathbb{D}\subseteq\mathbb{R}_+$, and $d$ is a binary relation $d:\mathbb{X}\times\mathbb{X}\to\mathbb{D}$ obeying the metric axioms:
\begin{enumerate}
\item $\forall x,y\in\mathbb{X} \quad d(x,y)=0 \leftrightarrow x=y$ (identity);
\item $\forall x,y\in\mathbb{X} \quad d(x,y)=d(y,x)$ (symmetry);
\item $\forall x,y,z\in\mathbb{X} \quad d(x,y)+d(y,z)\ge d(x,z)$ (triangle inequality).
\end{enumerate}

Based on these notions, one can define sequences and limits (calculus), as well as open and closed sets (topology). For any triplet $x,y,z$ of distinct elements in $\mathbb{X}$, $y$ is {\it between} $x$ and $z$ if $d(x,y)+d(y,z)=d(x,z)$. This notion of {\it metric betweenness} can be used to characterize convexity: a subset $\mathbb{Y}\subseteq\mathbb{X}$ is {\it metrically convex} if, for any two points $x,z\in\mathbb{Y}$, there is at least one point $y\in\mathbb{Y}$ between $x$ and $z$. The fundamental notion of invariance in metric spaces is that of congruence: two metric spaces $\mathbb{X},\mathbb{Y}$ are {\it congruent} if there is a mapping $\mu:\mathbb{X}\to\mathbb{Y}$ such that for all $x,y\in\mathbb{X}$ we have $d(x,y)=d(\mu(x),\mu(y))$.

The word ``isometric'' is often used as a synonym of ``congruent'' in many contexts, e.g.~with {\it isometric embeddings} (Sect.~\ref{s:uie}). In this survey, we mostly use ``isometric'' in relation to mappings from graphs to sets of vectors such that the weights of the edges are the same as the length of the segments between the vectors corresponding to the adjacent vertices. In other words, ``isometric'' is mostly used for partially defined metric spaces --- only the distances corresponding to the graph edges are considered.

While a systematization of the axioms of DG were formulated in the twentieth century, DG is pervasive throughout the history of mathematics, starting with Heron's theorem (computing the area of a triangle given the side lengths) \cite{euler1766}, going on to Euler's conjecture on the rigidity of (combinatorial) polyhedra \cite{heron}, Cauchy's creative proof of Euler's conjecture for strictly convex polyhedra \cite{cauchyrigid}, Cayley's theorem for inferring point positions from determinants of distance matrices \cite{cayley1841}, Maxwell's analysis of the stiffness of frames \cite{maxwell1864}, Henneberg's investigations on rigidity of structures \cite{henneberg1911}, G\"odel's fixed point theorem for showing that a tetrahedron with nonzero volume can be embedded isometrically (with geodetic distances) on the surface of a sphere \cite{goedelDG1}, Menger's systematization of DG \cite{menger31}, yielding, in particular, the concept of the Cayley-Menger determinant (an extension of Heron's theorem to any dimension, which was used in many proofs of DG theorems), up to Connelly's disproof of Euler's conjecture \cite{connelly-countereg}. A fuller account of many of these achievements is given in \cite{six}. An extension of G\"odel's theorem on the sphere embedding in any finite dimension appears in \cite{dgpsphere}.

\subsection{The distance geometry problem}
Before the widespread use of computers, the main applied problem of DG was to congruently embed finite metric spaces in some vector space. The first mention of the need for isometric embeddings using only a partial set of distances probably appeared in \cite{yemini78}. This need arose from wireless sensor networks: by estimating a set of distances for pairs of sensors which are close enough to establish peer-to-peer communication, is it possible to recover the position for all sensors in the network? Note that (a) distances can be recovered from peer-to-peer communicating pairs by monitoring the amount of battery required to exchange data; and (b) the positions for the sensors are in $\mathbb{R}^K$, with $K=2$ (usually) or $K=3$ (sometimes).

Thus we can formulate the main problem in DG. 
\begin{quote}
  {\sc Distance Geometry Problem} (DGP): given an integer $K>0$ and a simple undirected graph $G=(V,E)$ with an edge weight function $d:E\to\mathbb{R}_+$, determine whether there exists a {\it realization} $x:V\to\mathbb{R}^K$ such that:
  \begin{equation}
    \forall \{u,v\}\in E \quad \|x(u)-x(v)\| = d(u,v). \label{eq:dgp}
  \end{equation}
\end{quote}
We let $n=|V|$ and $m=|E|$ in the following.

We can re-state the DGP as follows: given a weighted graph $G$ and the dimension $K$ of a vector space, draw $G$ in $\mathbb{R}^K$ so that each edge is drawn as a straight segment of length equal to its weight. We remark that the realization $x$, defined as a function, is usually represented as an $n\times K$ matrix $x=(x_{uk}\;|\;u\in V\land k\le K)$, which may also be seen as an element of $\mathbb{R}^{nK}$. 

Notationally, we usually write $x_u,x_v$ and $d_{uv}$. If the norm used in Eq.~\eqref{eq:dgp} is $\ell_2$, then the above equation is usually squared, so it becomes a multivariate polynomial of degree two:
\begin{equation}
  \forall \{u,v\}\in E \quad \|x_u-x_v\|_2^2 = d_{uv}^2. \label{eq:dgp2}
\end{equation}
While most of the distances in this paper will be Euclidean, we shall also mention the so-called {\it linearizable norms} \cite{oneinfnorm-lncs}, i.e.~$\ell_1$ and $\ell_\infty$, because they can be described using linear forms. We also remark that the input of the DGP can also be represented by a {\it partial $n\times n$ distance matrix} $D$ where only the entries $d_{uv}$ corresponding to $\{u,v\}\in E$ are specified.

Many more notions about the DGP can be found in \cite{dgp-sirev,dgbook}.

\subsection{Number of solutions}
A DGP instance may have no solutions if the given distances do not define a metric, a finite number of solutions if the graph is rigid, or uncountably many solutions if the graph is flexible.

Restricted to the $\ell_2$ norm, there are several different notions of rigidity. We only define the simplest, which is easiest to explain intuitively: if we consider the graph as a representation of a joint-and-bar framework, a graph is flexible if the framework can move (excluding translations and rotations) and rigid otherwise. The formal definition of rigidity of a graph $G=(V,E)$ involves: (a) a mapping $\mathsf{D}$ from a realization $x\in\mathbb{R}^{nK}$ to the partial distance matrix
\[\mathsf{D}(x)=(\|x_u-x_v\|\;|\;\{u,v\}\in E);\]
and (b) the completion $\mathsf{K}(G)$ of $G$, defined as the complete graph on $V$. We want to say that $G$ is rigid if, were we to move $x$ ever so slightly (excluding translations and rotations), $\mathsf{D}(x)$ would also vary accordingly. We formalize this idea indirectly: a graph is {\it rigid} if the realizations in a neighbourhood $\chi$ of $x$ corresponding to changes in $\mathsf{D}(x)$ are equal to those in the neighbourhood $\bar{\chi}$ of a realization $\bar{x}$ of $\mathsf{K}(G)$ \cite[Ch.~7]{dgbook}. We note that realizations $\bar{x}\in\bar{\chi}$ correspond to small variations in $\mathsf{D}(\mathsf{K}(G))$: this definition makes sense because $\mathsf{K}(G)$ is a complete graph, which implies that its distance matrix has no variable components that can change, and hence $\bar{\chi}$ may only contain congruences.

We obtain the following formal characterization of rigidity \cite{asimow1}:
\begin{equation}
  \mathsf{D}^{-1}(\mathsf{D}(x))\cap\chi = \mathsf{D}^{-1}(\mathsf{D}(\bar{x}))\cap\bar{\chi}. \label{eq:rigid}
\end{equation}

Uniqueness of solution (modulo congruences) is sometimes a necessary feature in applications. Many different sufficient conditions to uniqueness have been found \cite[\S 4.1.1]{dgp-sirev}. By way of example as concerns the number of DGP solutions in graphs, a complete graph has at most one solution modulo congruences, as remarked above. It was proved in \cite{liberti-gsi13} that protein backbone graphs have a realization set having power of two cardinality with probability 1. As shown in Fig.~\ref{f:card} (bottom row), a cycle graph on four vertices has uncountably many solutions.
\begin{figure}[!ht]
  \begin{center}
    \fbox{\includegraphics[width=10cm]{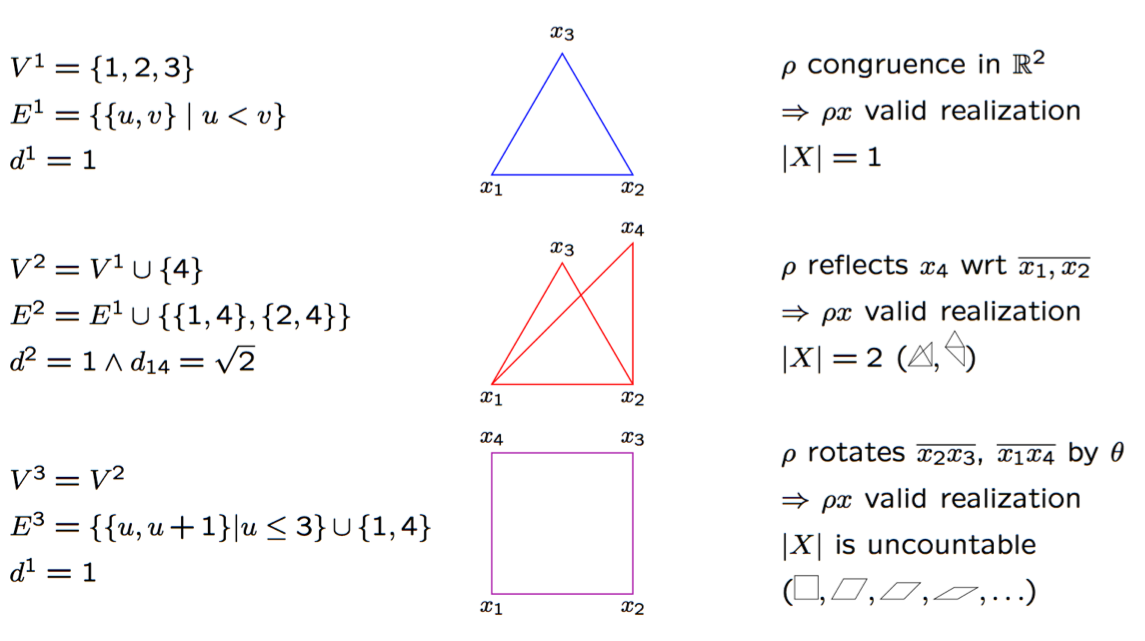}}
  \end{center}
  \caption{Instances with one, two, and uncountably many realizations.}
  \label{f:card}
\end{figure}

On the other hand, the remaining possibility of an infinite but countably many realizations of a DGP instance cannot happen, as shown in Thm.~\ref{thm:dgpnumsol}. This result is a corollary of a well-known theorem of Milnor's. It was noted informally in \cite[p.~27]{dgp-sirev} without details; we provide a proof here.
\begin{theorem}
  No DGP instance may have an infinite but countable number of solutions. \label{thm:dgpnumsol}
\end{theorem}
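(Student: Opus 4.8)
The plan is to realise the solution set as a real algebraic variety and then invoke Milnor's theorem on the number of connected components. Restricting to the $\ell_2$ norm (the other linearizable norms yield semialgebraic feasible sets, to which the same argument applies via the Milnor--Thom theory), I would describe the set of realizations as
$$S = \{\,x\in\mathbb{R}^{nK} \mid \|x_u-x_v\|_2^2 = d_{uv}^2 \text{ for all } \{u,v\}\in E\,\},$$
which by Eq.~\eqref{eq:dgp2} is exactly the common zero locus of the $m$ quadratic polynomials $p_{uv}(x)=\|x_u-x_v\|_2^2-d_{uv}^2$. Thus $S$ is a real algebraic variety in $\mathbb{R}^{nK}$, and Milnor's theorem guarantees that the zero set in $\mathbb{R}^N$ of finitely many polynomials of bounded degree has only \emph{finitely} many connected components; applying it, $S$ decomposes into a finite union of connected components $S_1,\dots,S_r$.

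The crux is then to show that each component is either a single point or uncountable. I would fix a component $S_j$ and suppose it contains two distinct realizations $a\neq b$. The map $\phi(x)=\|x-a\|$ is continuous on $S_j$, so because $S_j$ is connected its image $\phi(S_j)\subseteq\mathbb{R}$ is an interval; since $\phi(a)=0<\|b-a\|=\phi(b)$, that interval contains $[0,\|b-a\|]$ and is therefore uncountable. As the image of a countable set under any map is countable, $S_j$ must itself be uncountable. Hence every $S_j$ is a singleton or uncountable.

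Combining the two facts finishes the argument: $S$ has finitely many components, each either a singleton or uncountable, so $|S|$ is finite precisely when all components are singletons and is uncountable as soon as a single component is not; in no case can $S$ be countably infinite, which is the claim (an empty $S$, i.e.\ zero solutions, being counted among the finite cases). The only subtlety I anticipate is invoking the correct form of Milnor's result --- the qualitative finiteness of connected components of a real variety, rather than its explicit degree bounds --- and, for the $\ell_1$ and $\ell_\infty$ cases, remarking that the feasible set stays semialgebraic so that the same finiteness of components continues to hold.
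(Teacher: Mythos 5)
Your proposal is correct and follows essentially the same route as the paper's proof: realize the solution set as the real variety of the $m$ quadratics in Eq.~\eqref{eq:dgp2}, invoke Milnor's bound to get finitely many connected components, and observe that a component with more than one point is uncountable, so a countable solution set would force finitely many isolated points. Your distance-function argument usefully spells out the step the paper dismisses with ``by the notion of connectedness'', but this is a refinement of detail, not a different approach.
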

\begin{proof}
  Eq.~\eqref{eq:dgp2} is a system of $m$ quadratic equations associated with the instance graph $G$. Let $X\subseteq\mathbb{R}^{nK}$ be the variety associated to Eq.~\eqref{eq:dgp2}. Now suppose $X$ is countable: then no connected component of $X$ may contain uncountably many elements. By the notion of connectedness, this implies that every connected component is an isolated point in $X$. If $X$ is countable, it must contain a countable numbers of connected components. By \cite{milnor64}, the number of connected components of $X$ is finite;  in particular, it is bounded by $O(3^{nK})$. Hence the number of connected components of $X$ is finite. Since each is an isolated point, i.e.~a single realization of $G$, $|X|$ is finite. 
\end{proof}

\subsection{Applications}
The DGP is an inverse problem with many applications to science and engineering.

\subsubsection{Engineering}
When $K=1$ a typical application is that of clock synchronization \cite{singer4}. Network protocols for wireless sensor networks are designed so as to save power in communication. When synchronization and battery usage are key, the peer-to-peer communications needed to exchange the timestamp can be limited to the exchange of a single scalar, i.e.~the time (or phase) difference. The problem is then to retrieve the absolute times of all of the clocks, given some of the phase differences. This is equivalent to a DGP on the time line, i.e.~in a single dimension. We already sketched above the problem of Sensor Network Localization (SNL) in $K\in\{2,3\}$ dimensions. In $K=3$ we also have the problem of controlling fleets of Underwater Autonomous Vehicles (UAV), which requires the localization of each UAV in real-time \cite{bahr,dokmanic2}.

\subsubsection{Science}
\label{s:science}
An altogether different application in $K=3$ is the determination of protein structure from Nuclear Magnetic Resonance (NMR) experiments \cite{wuthrich_89}: proteins are composed of a linear backbone and some side-chains. The backbone determines a total order on the backbone atoms, by which follow some properties of the protein backbone graph. Namely, the distances from vertex $i$ to vertices $i-1$ and $i-2$ in the order are known almost exactly because of chemical information, and the distance between vertex $i$ and vertex $i-3$ is known approximately because of NMR output. Moreover, some other distances (with longer index difference) may also be known because of NMR --- typically, when the protein folds and two atoms from different folds happen to be close to each other. If we suppose all of these distances are known exactly, we obtain a subclass of DGP which is called {\sc Discretizable Molecular DGP} (DMDGP). The structure of the graph of a DMDGP instance is such that vertex $i$ is adjacent to its three immediate predecessors in the order: this yields a graph which consists of a sequence of embedded cliques on 4 vertices, the edges of which are called {\it discretization edges}, with possibly some extra edges called {\it pruning edges}.

If we had to realize this graph with $K=2$, we could use {\it trilateration} \cite{eren04}: given three points in the plane, compute the position of a fourth point at known distance from the three given points. Trilateration gives rise to a system of equations which has either no solution (if the distance values are not a metric) or a unique solution, since three distances in two dimensions are enough to disambiguate translations, rotations and reflections. Due to the specific nature of the DMDGP graph structure, it would suffice to know the positions of the first three vertices in the order to be able to recursively compute the positions of all other vertices. With $K=3$, however, there remains one degree of freedom which yields an uncertainty: the reflection.

We can still devise a combinatorial algorithm which, instead of finding a unique solution in $n-K$ trilateration steps, is endowed with back-tracking over reflections. Thus, the DMDGP can be solved completely (meaning that all incongruent solutions can be found) in worst-case exponential time by using the Branch-and-Prune (BP) algorithm \cite{lln5}. The DMDGP has other very interesting symmetry properties \cite{powerof2}, which allow for an {\it a priori} computation of its number of solutions \cite{liberti-gsi13}, as well as for generating all of the incongruent solutions from any one of them \cite{symmBPjbcb}; moreover, it turns out that BP is Fixed-Parameter Tractable (FPT) on the DMDGP \cite{bppolybook}.

\subsubsection{Machine Learning}
So far, we have only listed applications where $K$ is fixed. The focus of this survey, however, is a case where $K$ may vary: if we need to map graphs to vectors in view of preprocessing the input of a ML methodology, we may choose a dimension $K$ appropriate to the methodology and application at hand. See Sect.~\ref{s:anndg} for an example. 

\subsection{Complexity}

\subsubsection{Membership in {\bf NP}}
The DGP is clearly a decision problem, and one may ask whether it is in $\mathbf{NP}$. As stated above, with real number input in the edge weight function, it is clear that it is not, since the Turing computation model cannot be applied. We therefore consider its rational equivalent, where $d:E\to\mathbb{Q}_+$, and ask the same question. It turns out that, for $K>1$, we do not know whether the DGP is in $\mathbf{NP}$: the issue is that the solutions of sets of quadratic polynomials over $\mathbb{Q}$ may well be algebraic irrational. We therefore have the problem of establishing that a realization matrix $x$ with algebraic component verifies Eq.~\eqref{eq:dgp2} in polynomial time. While some compact representations of algebraic numbers exist \cite[\S 2.3]{undecminlp}, it is not known how to employ them in the polynomial time verification of Eq.~\eqref{eq:dgp2}. Negative results for the most basic representations of algebraic numbers were derived in \cite{dgpinnp}.

On the other hand, it is known that the DGP is in $\mathbf{NP}$ for $K=1$: as this case reduces to realizing graphs on a single real line, the fact that all of the given distances are in $\mathbb{Q}$ means that the distance between any two points on the line is rational: therefore, if one point is rational, then all the others can be obtained as sums and differences of this one point and a set of rational values, which implies that there is always a rational realization. Naturally, verifying whether a rational realization verifies Eq.~\eqref{eq:dgp2} can be carried out in polynomial time.

\subsubsection{{\bf NP}-hardness}
It was proved in \cite{saxe79} that the DGP is $\mathbf{NP}$-hard, even for $K=1$ (reduction from {\sc Partition} to the DGP on simple cycle graphs, see a detailed proof in \cite[\S 2.4.2]{dgbook}), and hence actually $\mathbf{NP}$-complete for $K=1$. In the same paper \cite{saxe79}, with more complicated gadgets it was also shown that the DGP is $\mathbf{NP}$-hard for each fixed $K$ and with edge weights restricted to taking values in $\{1,2\}$ (reduction from {\sc 3sat}).

A sketch of an adaptation of the reduction to cycle graphs is given in \cite{yemini} for DMDGP graphs, showing that they are an $\mathbf{NP}$-hard subclass of the DGP. A full proof following a similar idea can be found in \cite{dmdgp}. 

\section{Representing data by graphs}
\label{s:datgph}
It may be obvious to most readers that data can be naturally represented by graphs. This is immediately evident whenever data represent similarities or dissimilarities between entities in a vertex set $V$. In this section we make this intuition more explicit for a number of other relevant cases.

\subsection{Processes}
\label{s:gphproc}
The description of a process, be it chemical, electric/electronic, mechanical, computational, logical or otherwise, is practically always based on a directed graph, or {\it digraph}, $G=(N,A)$. The set of nodes $N$ represents the various stages of the process, while the arcs in $A$ represent transitions between stages.

Formalizations of this concept may possibly be first ascribed to the organization of knowledge proposed by Aristotle into genera and differences, commonly represented with a tree (a class of digraphs). While no graphical representation of this tree ever came to us from Aristotelian times, the commentator Porphyry of Tyre (3rd century AD) did refer to a representation which was actually drawn as a tree (at least since the 10th century \cite{verboon}). Many interesting images can be found in \url{last-tree.scottbot.net/illustrations/}, see e.g.~Fig.~\ref{fig:bacon}.
\begin{figure}[!ht]
  \begin{center}
    \includegraphics[width=6cm]{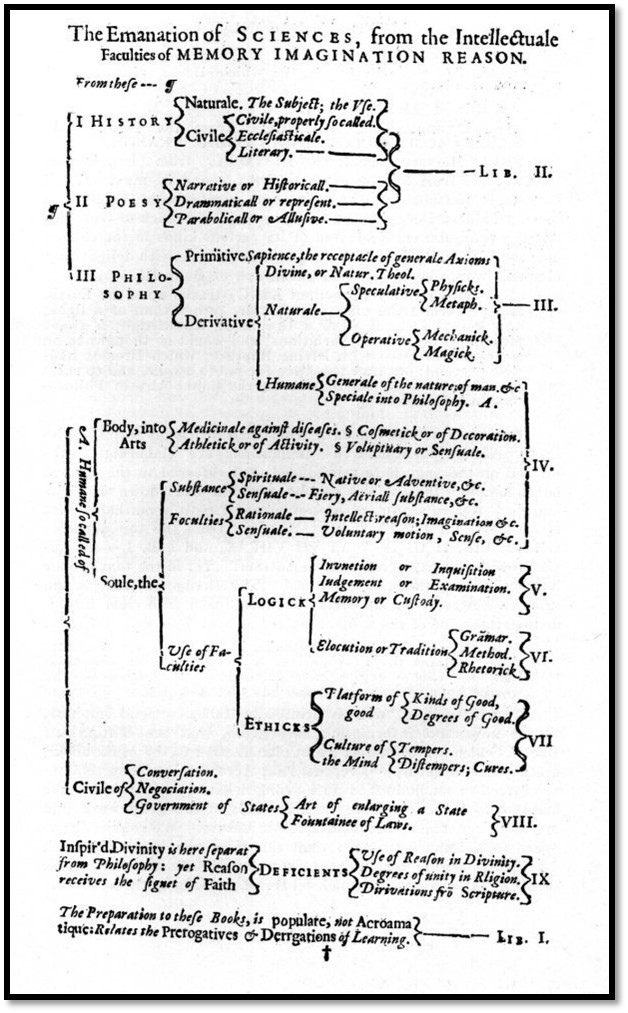}
  \end{center}
  \caption{A tree diagram from F.~Bacon's {\it Advancement of Learning}, Oxford 1640.}
  \label{fig:bacon}
\end{figure}

A general treatment of process diagrams in mechanical engineering is given in \cite{gilbreth}. Bipartite graphs with two node classes representing operations and materials have been used in process network synthesis in chemical engineering \cite{friedler}. Circuit diagrams are a necessary design tool for any electrical and electronic circuit \cite{seshu}. Software flowcharts (i.e.~graphical description of computer programs) have been used in the design of software so pervasively that one of the most important results in computer science, namely the B\"ohm-Jacopini's theorem on the expressiveness of universal computer languages, is based on a formalization of the concept of flowchart \cite{universallang}. The American National Standards Institute (ANSI) set standards for flowcharts and their symbols in the 1960s. The International Organization for Standardization (ISO) adopted the ANSI symbols in 1970 \cite{wikipedia_flowchart}. The {\it cyclomatic number} $|E|-|V|+1$ of a graph, namely the size of a cycle basis of the cycle space, was adopted as a measure of process graph complexity very early (see \cite{paton,deo82,brambilla,fcbmmor} and \cite[\S 2.3.4.1]{knuth1_3rd}).

An evalution of flowcharts to process design is the Unified Modelling Language (UML) \cite{omguml}, which was mainly conceived to aid the design of software-based systems, but was soon extended to much more general processes. With respect to flow\-charts, UML also models interactions between software systems and hardware systems, as well as with system users and stakeholders. When it is applied to software, UML is a semi-formal language, in the sense that it can automatically produce a set of header files with the description of classes and other objects, ready for code development in a variety of programming languages \cite{swarchex}.

\subsection{Text}
\label{s:gphtext}
One of the foremost issues in linguistics is the formalization of the rules of grammar in natural languages. On the one hand, text is scanned linearly, word by word. On the other hand, the sense of a sentence becomes apparent only when sentences are organized as trees \cite{chomsky}. This is immediately evident in the computer parsing of formal languages, with a ``lexer'' which carries out the linear scanning, and a ``parser'' which organizes the lexical tokens in a {\it parsing tree} \cite{lexyacc}. The situation is much more complicated for natural languages, where no rule of grammar is ever absolute, and any proposal for overarching principles has so many exceptions that it is hard to argue in their favor \cite{moro}.

The study of natural languages is usually split into syntax (how the sentence is organized), semantics (the sense conveyed by the sentence) and pragmatics (how the context when the sentence is uttered influences the meaning, and the impact that the uttered sentence has on the context itself) \cite{morris}. The current situation is that we have been able to formalize rules for natural language syntax (namely turning a linear text string into a parsing tree) fairly well, using probabilistic parsers \cite{manning} as well as supervised ML \cite{collobert}. We are still far from being able to successfully formalize semantics. Semiotics suggested many ways to assign semantics to sentences \cite{eco84}, but none of these is immediately and easily implementable as a computer program.

Two particularly promising suggestions are the organization of knowledge into an evolving encyclopedia, and the representation of the sense of words in a ``space'' with ``semantic axes'' (e.g.~``good/bad'', ``white/black'', ``left/right''\dots). The first suggestion yielded organized corpora such as WordNet \cite{wordnet}, which is a tree representation of words, synonyms and their semantical relations, not unrelated to a Porphyrian tree (Sect.~\ref{s:gphproc}). There is still a long way to go before the second is successfully implemented, but we see in the Google Word Vectors \cite{mikolov2013} the start of a promising path (although even easy semantical interpretations, such as analogies, are apparently not so well reflected in these word vectors, despite the publicity \cite{khalife19}).

For pragmatics, the situation is even more dire; some suggestions for representing knowledge and cognition w.r.t.~the state of the world are given in \cite{minsky_mind}. See \cite{wikipedia_pragmatics} for more information.

Insofar as graphs are concerned, syntax is organized into tree graphs, and semantics is often organized in corpora that are also trees, or directed acyclic graphs (DAGs), e.g.~WordNet and similar.

\subsubsection{Graph-of-words}
\label{s:graphofwords}
In Sect.~\ref{s:anndg} we consider a graph representation of sentences known as the {\it graph-of-words} \cite{vazirgiannis}. Given a sentence $s$ represented as a sequence of words $s=(s_1,\ldots,s_m)$, an $n$-gram is a subsequence of $n$ consecutive words of $s$. Each sentence obviously has at most $(m-n+1)$ $n$-grams. In a graph-of-words $G=(V,E)$ of order $n$, $V$ is the set of words in $s$; two words have an edge only if they appear in the same $n$-gram; the weight of the edge is equal to the number of $n$-grams in which the two words appear. This graph may also be enriched with semantic relations between the words, obtained e.g.~from WordNet.

\subsection{Databases}
The most common form of data collection is a database; among the existing database representations, one of the most popular is the tabular form used in spreadsheets and relational databases.

A {\it table} is a rectangular array $A$ with $n$ rows (the records) and $m$ columns (the features), which is (possibly only partially) filled with values. Specifically, each feature column must have values of the same type (when present). If $A_{rf}$ is filled with a value, we denote this $\mathsf{def}(r,f)$, for each record index $r$ and feature index $f$. We can represent this array via a bipartite graph $B=(R,F,E)$ where $R$ is the set of records, $F$ is the set of features, and there is an edge $\{r,f\}\in E$ if the $(r,f)$-th component $A_{rf}$ of $A$ is filled. A label function $\ell$ assigns the value $A_{rf}$ to the edge $\{r,f\}$. While this is an edge-labelled graph, the labels (i.e.~the contents of $A$) may not always be interpretable as edge weights --- so this representation is not yet what we are looking for. 

We now assume that there is a symmetric function $d_f:A_{\cdot,f}\times A_{\cdot,f}\to\mathbb{R}_+$ defined over elements of the column $A_{\cdot,f}$: since all elements in a column have the same type, such functions can always be defined in practice. We note that $d_f$ is undefined whenever one of the two arguments is not filled with a value. We can then define a composite function $d:R\times R\to\mathbb{R}_+$ as follows:
\begin{equation}
  \forall r\not=s\in R \quad d(r,s) =
  \left\{\begin{array}{l} \sum\limits_{f\in F\atop \mathsf{def}(r,f)\land\mathsf{def}(s,f)} d_f(A_{rf},A_{sf}) \\ \mbox{undefined if $\exists f\in F\;(\neg\mathsf{def}(r,f)\vee\neg\mathsf{def}(s,f))$.} \end{array}\right.\label{eq:datgph1}
\end{equation}
Next, we define a graph $G=(R,E')$ over the records $R$, where
\[E'=\{\,\{r,s\}\;|\;r\not=s\in R\land d(r,s)\mbox{ is defined}\},\]
weighted by the function $d:E'\to\mathbb{R}_+$ defined in Eq.~\eqref{eq:datgph1}. We call $G$ the {\it database distance graph}. Analysing this graph yields insights about record distributions, similarity and differences.


\subsection{Abductive inference}
\label{s:abduction}
According to \cite{eco83}, there are three main modes of rational thought, corresponding to three different permutations of the concepts ``hypothesis'' (call this H), ``prediction'' (call this P), ``observation'' (call this O). Each of the three permutations singles out a pair of concepts and a remaining concept. Specifically:
\begin{enumerate}
\item {\it deduction}: H $\land$ P $\to$ O;
\item {\it (scientific) induction}: O $\land$ P $\to$ H;
\item {\it abduction}: H $\land$ O $\to$ P.
\end{enumerate}
Take for example the most famous syllogism about Socrates being mortal:
\begin{itemize}
\item H: ``all humans are mortal'';
\item P: ``Socrates is human'';
\item O: ``Socrates is mortal''.
\end{itemize}
The syllogism is an example of deduction: we are given H and P, and deduce O. Note also that deduction is related to {\it modus ponens}: if we call $A$ the class of all humans and $B$ the class of all mortals, and let $s$ be the constant denoting Socrates, the syllogism can be restated as $(\forall x\; A(x)\subseteq B(x)\land A(s))\to B(s)$. Deduction infers truths (propositional logic) or provable sentences (first-order and higher-order logic), and is mostly used by logicians and mathematicians.

Scientific induction\footnote{Not to be confused with {\it mathematical induction}.} exploits observations and verifies predictions in order to derive a general hypothesis: if a large quantity of predictions is verified, a general hypothesis can be formulated. In other words, given O and P we infer H. Scientific induction can never provide proofs in sufficiently expressive logical universes, no matter the amount of observations and verified predictions. Any false prediction, however, disproves the hypothesis \cite{popper}. Scientific induction is about causality; it is mostly used by physicists and other natural scientists.

Abduction \cite{abduction} infers educates guesses about a likely state of a known universe from observed facts: given H and O, we infer P. According to \cite{mcculloch},
\begin{quote}
  {\small Deductions lead from rules and cases to facts --- the conclusions. Inductions lead toward truth, with less assurance, from cases and facts, toward rules as generalizations, valid for bound cases, not for accidents. Abductions, the apagoge of Aristotle, lead from rules and facts to the hypothesis that the fact is a case under the rule.\par}
\end{quote}
According to \cite{eco83} it can be traced back to Peirce \cite{peirce1878}, who cited Aristotle as a source. The author of \cite{proni} argues that the precise Aristotelian source cited by Peirce fails to make a valid reference to abduction; however, he also concedes that there are some forms of abduction foreshadowed by Aristotle in the texts where he defines definitions.

Let us see an example of abduction. Sherlock Holmes is called on a crime scene where Socrates lies dead on his bed. After much evidence is collected and a full-scale investigation is launched, Holmes ponders some possible hypotheses: for example, all rocks are dead. The prediction that is logically consistent with this hypothesis and the observation that Socrates is dead would be that Socrates is a rock. After some unsuccessful tests using Socrates' remains as a rock, Holmes eliminates this possibility. After a few more untenable suggestions by Dr.~Watson, Holmes considers the hypothesis that all humans are mortal. The logically consistent prediction is that Socrates is a man, which, in a dazzling display of investigating abilities, Holmes finds it to be exactly the case. Thus Holmes brilliantly solves the mystery, while Lestrade was just about ready to give up in despair. Abduction is about plausibility; it is the most common type of human inference.

Abduction is also the basis of learning: after witnessing a set of facts, and postulating hypotheses which link them together, we are able to make predictions about the future. Abductions also can, and in fact often turn out to, be wrong, e.g.:
\begin{itemize}
\item H: all beans in the bag are white;
\item O: there is a white bean next to the bag;
\item P: the bean was in the bag.
\end{itemize}
The white bean next to the bag, however, might have been placed there before the bag was even in sight. With this last example, we note that abductions are inferences often used in statistics. For an observation O, a set $\mathcal{H}$ of hypotheses and a set of possible predictions $\mathcal{P}$, we must evaluate
\begin{equation*}
  \forall \mbox{H}\in\mathcal{H}, \mbox{P}\in\mathcal{P} \quad p_{\mbox{\scriptsize HP}}=\mathbb{P}(\mbox{O}\;|\;\mbox{O,H abduce P}),
\end{equation*}
and then choose the pair $(\mbox{H,P})$ having largest probability $p_{\mbox{\scriptsize HP}}$ (see a simplified example in Fig.~\ref{f:abduction1}). 
\begin{figure}[!ht]
  \begin{center}
    \begin{tikzpicture}
      \node (observation) at (-2,0) {\color{magenta}\small white bean beside bag};
      \node (inf1) at (4,2) {\small{\color{blue}bag of white beans}$\to${\small\color{red}bean was in bag}};
      \draw [->] (observation.east) -- (inf1.west) node[midway,above,sloped]{\tiny 0.3};
      \node (inf2) at (4,1) {\small{\color{blue}white bean field closeby}$\to${\small\color{red}bean came from field}};
      \draw [->] (observation.east) -- (inf2.west) node[midway,above,sloped]{\tiny 0.25};
      \node (inf3) at (4,0) {\small{\color{blue}farmer market yesterday}$\to${\small\color{red}bean came from market}};
      \draw [->] (observation.east) -- (inf3.west) node[midway,above,sloped]{\tiny 0.1};
      \node (inf4) at (4,-1) {\small{\color{blue}kid was playing with beans}$\to${\small\color{red}kid lost a bean}};
      \draw [->] (observation.east) -- (inf4.west) node[midway,above,sloped]{\tiny 0.15};
      \node (inf4) at (4,-2) {\small{\color{blue}UFOs fueled with beans}$\to${\small\color{red}bean clearly a UFO sign}};
      \draw [->] (observation.east) -- (inf4.west) node[midway,above,sloped]{\tiny 0.2};      
    \end{tikzpicture}    
  \end{center}
  \caption{Evaluating probabilities in abduction. From left to right, observation {\color{magenta}O} abduces the inference {\color{blue}H}$\to${\color{red}P}.}
  \label{f:abduction1}
\end{figure}
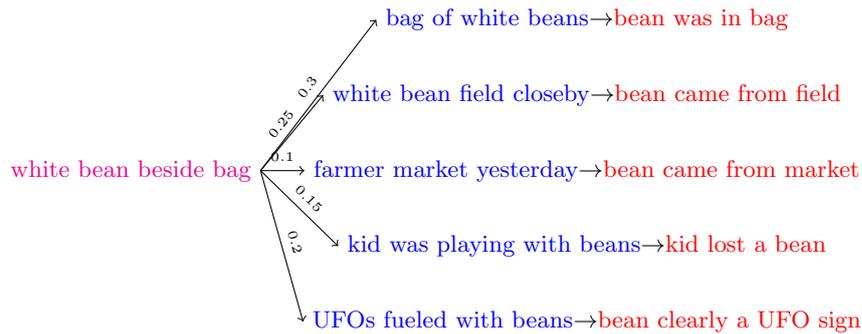

When more than one observation is collected, one can also compare distributions to make more plausible predictions, see Fig.~\ref{f:abduction2}. Abduction appears close to the kind of analysis often required by data scientists. 
\begin{figure}[!ht]
  \begin{center}
    \begin{tikzpicture}
      \node (observation1) at (-2,1) {\color{magenta}\small white bean beside bag};
      \node (inf1) at (4,2) {\small{\color{blue}bag of white beans}$\to${\small\color{red}bean was in bag}};
      \draw [->] (observation1.east) -- (inf1.west) node[midway,above,sloped]{\tiny 0.3};
      \node (inf2) at (4,1) {\small{\color{blue}white bean field closeby}$\to${\small\color{red}bean came from field}};
      \draw [->] (observation1.east) -- (inf2.west) node[midway,above,sloped]{\tiny 0.25};
      \node (inf3) at (4,0) {\small{\color{blue}farmer market yesterday}$\to${\small\color{red}bean came from market}};
      \draw [->] (observation1.east) -- (inf3.west) node[midway,above,sloped]{\tiny 0.1};
      \node (inf4) at (4,-1) {\small{\color{blue}kid was playing with beans}$\to${\small\color{red}kid lost a bean}};
      \draw [->] (observation1.east) -- (inf4.west) node[midway,above,sloped]{\tiny 0.15};
      \node (inf5) at (4,-2) {\small{\color{blue}UFOs fueled with beans}$\to${\small\color{red}bean clearly a UFO sign}};
      \draw [->] (observation1.east) -- (inf5.west) node[midway,above,sloped]{\tiny 0.2};      
      \node (observation2) at (-2,-1) {\color{magenta}\small red bean beside bag};
      \draw [->,darkgrey] (observation2.east) -- (inf1.west) node[above,sloped]{\tiny 0.01};
      \draw [->,darkgrey] (observation2.east) -- (inf2.west) node[above,sloped]{\tiny 0.01};
      \draw [->,darkgrey] (observation2.east) -- (inf3.west) node[above,sloped]{\tiny 0.49};
      \draw [->,darkgrey] (observation2.east) -- (inf4.west) node[above,sloped]{\tiny 0.29};
      \draw [->,darkgrey] (observation2.east) -- (inf5.west) node[above,sloped]{\tiny 0.2};      
    \end{tikzpicture}
  \end{center}
  \caption{Probability distributions over abduction inferences assigned to observations.}
  \label{f:abduction2}
\end{figure}
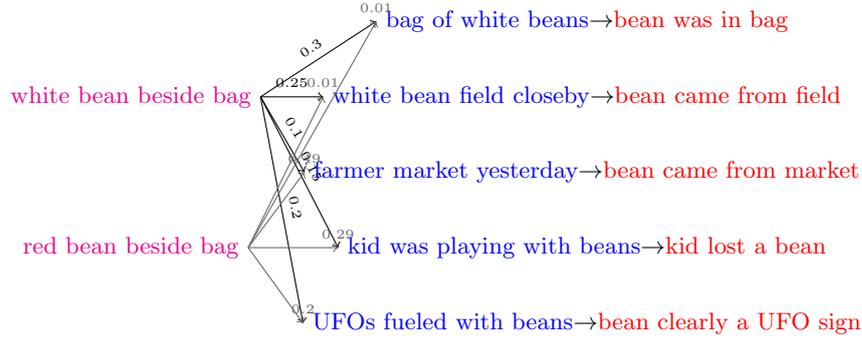

\subsubsection{The abduction graph}
We now propose a protocol for modelling good predictions from data, by means of an {\it abduction graph}. We consider:
\begin{itemize}
  \item a set $\mathcal{O}$ of observations O;
  \item a set $\mathcal{I}\subseteq\mathcal{H}\times\mathcal{P}$ of abductive premises, namely pairs $(\mbox{H},\mbox{P})$.
\end{itemize}
First, we note that different elements of $\mathcal{I}$ might be logically incompatible (e.g.~there may be contradictory sets of hypotheses or predictions). We must therefore extract a large set of logically compatible subsets of $\mathcal{I}$. Consider the relation $\sim$ on $\mathcal{I}$ with $h\sim k$ meaning that $h,k\in\mathcal{I}$ are logically compatible. This defines a graph $(\mathcal{I},\sim)$. We then find the largest (or at least large enough) clique $\bar{\mathcal{I}}$ in $(\mathcal{I},\sim)$.

Next, we define probability distributions $p^{\mbox{\scriptsize O}}$ on $\bar{\mathcal{I}}$ for each $\mbox{O}\in \mathcal{O}$. We let $E=\{\{\mbox{O},\mbox{O}'\}\;|\; \delta(p^{\mbox{\scriptsize O}},p^{\mbox{\scriptsize O}'})\le\delta_0\}$, where $\delta$ evaluates dissimilarities between probability distributions, e.g.~$\delta$ could be the Kullback-Leibler (KL) divergence \cite{kldiv}, and $\delta_0$ a given threshold. Thus $E$ defines a relation on $\mathcal{O}$ if $p^{\mbox{\scriptsize O}},p^{\mbox{\scriptsize O}'}$ are sufficiently similar. We can finally define the graph $\mathcal{F}=(\mathcal{O},E)$, with edges weighted by $\delta$. 

If we think of Sherlock Holmes again, the abduction graph encodes sets of clues compatible with the most likely consistent explanations. 

\section{Common data science tasks}
\label{s:clustering}
DS refers to practically every task or problem defined over large amounts of data. Even problems in $\mathbf{P}$, and sometimes even those for which there exist linear time algorithms, may take too long when confronted with huge-scale instances. We are not going to concern ourselves here with evaluation problems (such as computing means, variances, higher-order moments or other statistical measures), which are the realm of statistics, but rather with decision problems. In particular, it appears that a very common family of decision problems solved on large masses of data are those that help people make sense of the data themselves: in other words, classification and clustering.

There is no real functional distinction between the two, as both aim at partitioning the data into a relatively small number of subsets. However, ``classification'' usually refers to the problem of assigning class labels to data elements, while ``clustering'' indicates a classification based on the concept of similarity or distance, meaning that similar data elements should be in the same class. This difference is usually more evident in the algorithmic description: classification methods tend to exploit information inherent to elements, while clustering methods consider information relative to pairs of elements. In the rest of this paper, we shall adopt a functional view, and simply refer to ``clustering'' to indicate both classification and clustering.

Given a set $P$ of $n$ entities and some pairwise similarity function $\delta:P\times P\to\mathbb{R}_+$, clustering aims at finding a set of $k$ subsets $C_1,\ldots,C_k\subseteq P$ such that each cluster contains as many similar entities, and as few dissimilar entities, as possible. Cluster analysis --- as a field --- grew out of statistics in the course of the second half of the 20th century, encouraged by the advances in computing power. But some early forms of cluster analysis may also be attributed to earlier scientists (e.g.~Aristotle, Buffon, Cuvier, Linn\'e \cite{hansenjaumard}).

We note that ``clustering on graphs'' may refer to two separate tasks.
\begin{enumerate}[A.]
\item Cluster the vertices of a given graph.\label{taskA}
\item Cluster the graphs in a given set.\label{taskB}
\end{enumerate}
Both may arise depending on the application at hand. The proposed DG techniques for realizing graphs into vector spaces apply to both of these tasks (see Sect.~\ref{s:rlz2vec}).

As mentioned above, this paper focuses on transforming graphs into vectors so as to be able to use vector-based methods for classification and clustering. We shall first survey some of these methods. We shall then mention some methods for classifying/clustering graphs directly (i.e.~without needing to transform them into vectors first).

\subsection{Clustering on vectors}
Methods for classification and clustering on vectors are usually seen as part of ML. They are partitioned into unsupervised and supervised learning methods. The former are usually based on some similarity or dissimilarity measure defined over pairs of elements. The latter require a {\it training set}, which they exploit in order to find a set of optimal parameter values for a parametrized ``model'' of the data.

\subsubsection{The k-means algorithm}
\label{s:kmeans}
The k-means algorithm is a well-known heuristic for solving the following problem \cite{clustmp}. 
\begin{quote}
  {\sc Minimum Sum-of-Squares Clustering} (MSSC). Given an integer $k>0$ and a set $P\subset\mathbb{R}^m$ of $n$ vectors, find a set $\mathcal{C}=\{C_1,\ldots,C_k\}$ of subsets of $P$ such that the function
  \begin{equation}
    f(\mathcal{C}) = \sum\limits_{j\le k} \sum\limits_{x\in C_j} \|x-\centroid{C_j}\|_2^2 \label{mssc}
  \end{equation}
  is minimum, where
  \begin{equation}
    \centroid{C_j} = \frac{1}{|C_j|}\sum\limits_{x\in C_j} x.
  \end{equation}
\end{quote}
It is interesting to note that the MSSC problem can also be seen as a discrete analogue of the problem of partitioning a body into smaller bodies having minimum sum of moments of inertia \cite{steinhaus}. 

The k-means algorithm improves a given initial clustering $\mathcal{C}$ by means of the two following operations:
\begin{enumerate}
\item compute centroids $c_j=\centroid{C_j}$ for each $j\le k$;\label{op1}
\item for any pair of clusters $C_h,C_j\in\mathcal{C}$ and any point $x\in C_h$, if $x$ is closer to $c_j$ than to $c_h$, move $x$ from $C_h$ to $C_j$.\label{op2}
\end{enumerate}
These two operations are repeated until the clustering $\mathcal{C}$ no longer changes. Since the only decision operation (i.e.~operation \ref{op2}) is effective only if it decreases $f(\mathcal{C})$, it follows that k-means is a local descent algorithm. In particular, this very simple analysis offers no guarantee on the approximation of the objective function. For more information on the k-means algorithm, see \cite{kmeans2}.

k-means is an unsupervised learning technique \cite{jain}, insofar as it does not rest on a data model with parameters to be estimated prior to actually finding clusters. Moreover, the number ``k'' of clusters must be known {\it a priori}. 

\subsubsection{Artificial Neural Networks}
\label{s:ann}
An ANN is a parametrized model for representing an unknown function. Like all such models, it needs data in order to estimate suitable values for the parameters: this puts ANNs in the category of supervised ML. An ANN consists of two MP formulations defined over a graph and a training set.

An ANN is formally defined as a triplet $\mathcal{N}=(G,T,\phi)$, where:
\begin{itemize}
\item $G=(V,A)$ is a directed graph, with a node weight function $b:V\to\mathbb{R}$ (threshold at a node), and an edge weight function $w:A\to\mathbb{R}$ (weight on an arc); moreover, a subset $I\subset V$ of {\it input nodes} with $|I|=n$ and a subset $O\subset V$ of {\it output nodes} with $|O|=k$ are given in $G$;
\item $T=(X,Y)$ is the training set, where $X\subset\mathbb{R}^n$ (input set), $Y\subset\mathbb{R}^k$ (output set), and $|X|=|Y|$;
\item $\phi:\mathbb{R}\to\mathbb{R}$ is the {\it activation function} (many common activation functions map injectively into $[0,1]$).
\end{itemize}
The two MP formulations assigned to an ANN describe the {\it training problem} and the {\it evaluation problem}. In the training problem, appropriate values for $b,w$ are found using $T$. In the evaluation problem, a given input vector in $\mathbb{R}^n$ (usually not part of the input training set $X$) is mapped to an output vector in $\mathbb{R}^k$. The training problem decides values for the ANN parameters when seen as a model for an unknown function mapping the training input $X$ to the training output $Y$. After the model is trained, it can be evaluated on new (unseen) input.

In the following, we use standard notation on graphs. For a node $i\in V$ we let $N^-(i)=\{j\in V\;|\;(j,i)\in A\}$ be the {\it inward star} and $N^+(i)=\{j\in V\;|\;(i,j)\in A\}$ be the {\it outward star} of $i$. For undirected graphs $G=(V,E)$, we let $N(i)=\{j\in V\;|\;\{i,j\}\in E\}$ be the {\it star} of $i$. Moreover, for a tensor $s_{i_1,\ldots,i_r}$, where $i_j\in I_j$ for each $j\le r$, we denote a {\it slice} of $s$, defined by subsets $J_j\subseteq I_j$ for some $j\le r$, by $s[J_1]\cdots[J_r]$.

We discuss the evaluation phase first. Given values for $w,b$ and an input vector $x\in\mathbb{R}^n$, we decide a node weight function $u$ over $V$ as follows:
\begin{eqnarray}
  u_I &=& x \label{ann1} \\
  \forall j\in V\smallsetminus I \quad u_j &=& \phi\big(\!\!\sum_{i\in N^-(j)} w_{ij} u_i + b_j\big). \label{ann2}
\end{eqnarray}
We remark that Eq.~\eqref{ann2} is not an optimization but a decision problem. Nonetheless, it is a MP formulation (formally with zero objective function). After solving Eq.~\eqref{ann2}, one retrieves in particular $u[O]$, which correspond to an output vector in $u[0]=y\in\mathbb{R}^k$. When $G$ is acyclic, this decision problem reduces to a simple computation, which ``propagates'' the values of $u$ from the input nodes and forward through the network until they reach the output nodes. If $G$ is not acyclic, different solution methods must be used \cite{anderson,retineur,bengiobook}. 

The {\it training problem} is given in Eq.~\eqref{eq:trainprob1}. We let $N$ be the index set for the training pairs $(x,y)$ in $T$ (we recall that $|X|=|Y|$), and introduce a 2-dimensional tensor $v$ of decision variables indexed by $N$ and $V$.
\begin{equation}
  \left. \begin{array}{rrcl}
  \min\limits_{w,b,v} & \dist{v[N][O],Y} && \\
  \forall t\in N & v_t[I] &=& X  \\
  \forall t\in N, j\in V\smallsetminus I & v_{tj} &=& \phi_j\big(\sum\limits_{i\in N^-(j)} w_{ij} v_{ti} + b_j\big),
  \end{array}\right\}
  \label{eq:trainprob1}
\end{equation}
where $\dist{A,B}$ is a dissimilarity function taking dimensionally consistent tensor arguments $A,B$, which becomes closer to zero as $A$ and $B$ get closer. The solution of the training problem yields optimal values $w^\ast,b^\ast$ for the arc weights and node biases.


The training problem is in general a nonconvex optimization problem (because of the products between $w$ and $v$, and of the $\phi$ functions occurring in equations), which may have multiple global optima: finding them with state-of-the-art methods might require exponential time. For specific types of graphs and choices of objective function $\dist{\cdot,\cdot}$, the training problem may turn out to be convex. For example, if $G$ is a DAG, $V=I\dot{\cup}O$, the induced subgraphs $G[I]$ and $G[O]$ are empty (i.e.~they have no arcs), the activation functions are all sigmoids $\phi(z)=(1+\exp(-z))^{-1}$, and $\dist{\cdot,\cdot}$ is the negative logarithm of the likelihood functions
\[ \prod_{t\in N} \phi(\transpose{w}x^t+b_i)^{y_t}(1-\phi(\transpose{w}x^t+b_i))^{1-y_t} \]
summed over all output nodes $i\in O$, then it can be shown that the training problem is convex \cite{jordan95,schumacher}.

In contemporary treatments of ANNs, the underlying graph $G$ is almost always assumed to be a DAG. In modern Application Programming Interfaces (API), the acyclicity of $G$ is enforced by recursively replacing $v_{tj}$ with the corresponding expression in $\phi(\cdot)$.

Most algorithms usually solve Eq.~\eqref{eq:trainprob1} only locally and approximately. Usually, they employ a technique called Stochastic Gradient Descent (SGD) \cite{bottousgd}. This is a form of gradient descent where, at each iteration, the gradient of a multivariate function is estimated by partial gradients with respect to a randomly chosen subset of variables \cite[p.~100]{moitra}. 

The functional definition of an optimum for the training problem Eq.~\ref{eq:trainprob1} is poorly understood, as finding precise local (or global) optima is considered ``overfitting''. In other words, global or almost global optima of Eq.~\eqref{eq:trainprob1} lead to evaluations which are possibly perfect for pairs in the training set, but unsatisfactory for yet unseen input. Currently, finding ``good'' optima of ANN training problems is mostly based on experience, although a considerable effort is under way in order to reach a sound definition of optimum \cite{dauphin14,srann,vidalnn,lecun}. 

The main reason why ANNs are so popular today is that they have proven hugely successful at image recognition \cite{bengiobook}, and also extremely good at accomplishing other tasks, including natural language processing \cite{collobert}. Many efficient applications of ANNs to complex tasks involve interconnected networks of ANNs of many different types \cite{bengio}.

ANNs originated from an attempt to simulate neuronal activity in the brain: should the attempt prove successful, it would realize the old human dream of endowing a machine with human intelligence \cite{golem}. While ANNs today display higher precision than humans in some image recognition tasks, they may also be easily fooled by a few appropriately positioned pixels of different colors, which places the realization of ``human machine intelligence'' still rather far in the future --- or even unreachable, e.g.~if Penrose's hypothesis of quantum activity in the brain influencing intelligence at a macroscopic level holds \cite{penrose}. For more information about ANNs, see \cite{annhistory,bengiobook}. 

\subsection{Clustering on graphs}
While we argue in this paper that DG techniques allow the use of vector clustering methods to graph clustering, there also exist methods for clustering on graphs directly. We discuss two of them, both applicable to the task of clustering vertices of a given graph (Task~\ref{taskA} on p.~\pageref{taskA}).

\subsubsection{Spectral clustering}
\label{s:spclust}
Consider a connected graph $G=(V,E)$ with an edge weight function $w:E\to\mathbb{R}_+$. Let $A$ be the {\it adjacency matrix} of $G$, with $A_{ij}=w_{ij}$ for all $\{i,j\}\in E$, and $A_{ij}=0$ otherwise. Let $\Delta$ be the diagonal {\it weighted degree matrix} of $G$, with $\Delta_{ii}=\sum_{j\not=i} A_{ij}$ and $\Delta_{ij}=0$ for all $i\not=j$. The {\it Laplacian} of $G$ is defined as $L=\Delta-A$.

Spectral clustering aims at finding a minimum balanced cut $U\subset V$ in $G$ by looking at the spectrum of the Laplacian of $G$. For now, we give the word ``balanced'' only an informal meaning: it indicates the fact that we would like clusters to have approximately the same cardinality (we shall be more precise below). Removing the {\it cutset} $\delta(U)$ (i.e.~the set of edges between $U$ and $V\smallsetminus U$) from $G$ yields a two-way partitioning of $V$. If $|\delta(U)|$ is minimum over all possible cuts $U$, then the two sets $U,V\smallsetminus U$ should both intuitively induce subgraphs $G[U]$ and $G[V\smallsetminus U]$ having more edges than those in $\delta(U)$. In other words, the criterion we are interested in maximizes the intra-cluster edges of the subgraphs of $G$ induced by the cluster while minimizing the inter-cluster edges of the corresponding cutsets. 

We remark that each of the two partitions can be recursively partitioned again. A recursive clustering by two-way partitioning is a general methodology which is part of a family of {\it hierarchical} clustering methods \cite{schaefferclust}. So the scope of this section is not limited to generating two clusters only. 

For simplicity, we only discuss the case with unit edge weights, although the generalization to general weights is not problematic. Thus, $\Delta_{ii}$ is the degree of vertex $i\in V$. We model a balanced partition $\{B,C\}$ corresponding to a minimum cut by means of decision variables $x_i=1$ if $i\in B$ and $x_i=-1$ if $i\in C$, for each $i\le n$, with $n=|V|$. Then $f(x)=\frac{1}{4}\sum_{\{i,j\}\in E} (x_i-x_j)^2$ counts the number of intercluster edges between $B$ and $C$. We have:
\begin{eqnarray*}
 4f(x) &=&\sum_{\{i,j\}\in E} (x_i^2+x_j^2) -  2\sum_{\{i,j\}\in E} x_ix_j =  \sum_{\{i,j\}\in E} 2 -  \sum_{i,j\le n} x_i a_{ij} x_j = \\
  &=& 2|E| - \transpose{x}Ax = \sum_{i\le n} x_i d_i x_i - \transpose{x}Ax = \transpose{x}(\Delta - A)x = \transpose{x} Lx,
\end{eqnarray*}
whence $f(x)=\frac{1}{4}\transpose{x}Lx$. We can therefore obtain cuts with minimum $|\delta(B)|$ by minimizing $f(x)$.

We can now give a more precise meaning to the requirement that partitions are {\it balanced}: we require that $x$ must satisfy the constraint
\begin{equation}
  \sum_{i\le n} x_i=0. \label{balanced}
\end{equation}
Obviously, Eq.~\eqref{balanced} only ensures equal cardinality partitions on graphs having an even number of vertices. However, we relax the integrality constraints $x\in\{-1,1\}^n$ to $x\in[-1,1]^n$, so $\sum_{i\le n} x_i=0$ is applicable to any graph. With this relaxation, the values of $x$ might be fractional. We shall deal with this issue by rounding them to $\{-1,1\}$ after obtaining the solution. We also note that the constraint
\begin{equation}
  \transpose{x}x=\|x\|_2^2=n \label{sumn}
\end{equation}
holds for $x\in\{-1,1\}^n$, and so it provides a strengthening of the continuous relaxation to $x\in[-1,1]^n$. We therefore obtain a relaxed formulation of the minimum balanced two-way partitioning problem as follows:
\begin{equation}
  \left. \begin{array}{rrcl}
    \min\limits_{x\in[-1,1]^n} & \frac{1}{4} \transpose{x} L x & & \\
    \mbox{s.t.} & \transpose{\mathbf{1}}{x} &=& 0 \\
                & \|x\|_2^2 &=& n.
  \end{array} \right\} \label{balmincut}
\end{equation}
We remark that, by construction, $L$ is a {\it diagonally dominant} (dd) symmetric matrix with non-negative diagonal, namely it satisfies
\begin{equation}
  \forall i\le n \quad L_{ii} \ge \sum\limits_{j\not=i} |L_{ij}| \label{dddef}
\end{equation}
(in fact, $L$ satisfies Eq.~\eqref{dddef} at equality). Since all dd matrices are also psd \cite{wikipedia_dd}, $f(x)$ is a convex function. This means that Eq.~\eqref{balmincut} is a cQP, which can be solved at global optimality in polynomial time \cite{vavasis}.

By \cite{fiedler}, there is another polynomial time method for solving Eq.~\eqref{dddef}, which is generally more efficient than solving a cQP in polynomial time using a Nonlinear Programming (NLP) solver. This method concerns the second-smallest eigenvalue of $L$ (called {\it algebraic connectivity}) and its corresponding eigenvector. Let $\lambda_1\le\lambda_2\le\dots\le\lambda_n$ be the ordered eigenvalues of $L$ and $u_1,\ldots,u_n$ be the corresponding eigenvectors, normalized so that $\|u_i\|_2^2=n$ for all $i\le n$. It is known that $u_1={\bf 1}$, $\lambda_1=0$ and, if $G$ is connected, $\lambda_2>0$ \cite{merris,bollobas}. By the definition of eigenvalue and eigenvector, we have
\begin{equation}
  \forall i\le n\quad Lu_i=\lambda_iu_i\quad\Rightarrow\quad\transpose{u_i} Lu_i = \lambda_i\transpose{u_i}u_i = \lambda_i \|u_i\|_2^2=\lambda_i n. \label{defEE}
\end{equation}
Because of the orthogonality of the eigenvectors, if $i\ge 2$ we have $u_iu_1=0$, which implies $u_2{\bf 1}=0$ (i.e.~$u_2$ satisfies Eq.~\eqref{balanced}). We recall that eigenvectors are normalized so that $\|u_i\|_2^2=n$ for all $i\le n$ (in particular, $u_2$ satisfies Eq.~\eqref{sumn}). By Eq.~\eqref{defEE}, since $\lambda_1=0$, $\lambda_2$ yields the smallest nontrivial objective function value $\frac{n}{4}\lambda_2$ with solution $\bar{x}=u_2$, which is therefore a solution of Eq.~\eqref{balmincut}.

\begin{theorem}
  The eigenvector $u_2$ corresponding to the second smallest eigenvalue $\lambda_2$ of the graph Laplacian $L$ is an optimal solution to Eq.~\eqref{balmincut}. \label{thm:spectralclust}
\end{theorem}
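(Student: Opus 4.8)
The plan is to exploit the spectral decomposition of the Laplacian $L$ and reduce the problem to the classical Rayleigh-quotient characterization of eigenvalues. Feasibility of the candidate $u_2$ with respect to the balance constraint $\transpose{\mathbf 1}x = 0$ and the norm constraint $\|x\|_2^2 = n$ has already been checked in the text preceding the statement, and its objective value is $\tfrac14 \transpose{u_2}Lu_2 = \tfrac14\lambda_2\|u_2\|_2^2 = \tfrac{n}{4}\lambda_2$. Thus the only thing left to prove is optimality, i.e.\ that no feasible $x$ achieves an objective strictly below $\tfrac{n}{4}\lambda_2$. I would establish this as a global lower bound $\tfrac14\transpose xLx \ge \tfrac{n}{4}\lambda_2$ valid for every feasible $x$.

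For the lower bound I would work in the orthonormal basis $\hat u_i = u_i/\sqrt n$ (recall $\|u_i\|_2^2 = n$), which is an eigenbasis of the symmetric matrix $L$. Writing a feasible $x = \sum_{i\le n} a_i\hat u_i$ with $a_i = \langle x,\hat u_i\rangle$, the norm constraint gives $\sum_i a_i^2 = \|x\|_2^2 = n$, while the balance constraint $\transpose{\mathbf 1}x = \transpose{u_1}x = 0$ (using $u_1 = \mathbf 1$) forces $a_1 = 0$. Since $L\hat u_i = \lambda_i\hat u_i$ and the $\hat u_i$ are orthonormal, the objective diagonalizes:
\[ \transpose xLx = \sum_{i\le n}\lambda_i a_i^2 = \sum_{i\ge 2}\lambda_i a_i^2 \ge \lambda_2\sum_{i\ge2} a_i^2 = \lambda_2\sum_{i\le n} a_i^2 = \lambda_2 n, \]
where the inequality uses $\lambda_i\ge\lambda_2$ for $i\ge2$ together with $a_i^2\ge0$, and the last two equalities use $a_1 = 0$. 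Hence $\tfrac14\transpose xLx\ge\tfrac{n}{4}\lambda_2$ for all feasible $x$, and equality holds at $x = u_2$ (for which $a_2 = \sqrt n$ and all other $a_i = 0$). This proves $u_2$ is optimal. Equivalently, this is just the Courant--Fischer min--max theorem applied to the restriction of $L$ to the hyperplane $\{x : \transpose{\mathbf 1}x = 0\} = u_1^\perp$.

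The step I expect to need the most care is not the eigen-expansion itself (which is routine) but the role of the box constraint $x\in[-1,1]^n$ that appears in Eq.~\eqref{balmincut}. Note that $\|x\|_2^2 = n$ together with $x\in[-1,1]^n$ forces $x_i^2 = 1$ for every $i$, i.e.\ the two constraints intersect exactly in $\{-1,1\}^n$; a generic eigenvector $u_2$ has fractional entries and therefore does not literally lie in this set. The argument above (and the derivation in the text) is really optimizing over the sphere-and-hyperplane relaxation obtained by dropping the box, which is the intended content of the theorem and the reason the author speaks of rounding $u_2$ afterwards. I would make this explicit: the optimization solved by $u_2$ is the minimization of $\tfrac14\transpose xLx$ over $\{x : \transpose{\mathbf 1}x = 0,\ \|x\|_2^2 = n\}$. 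A secondary point worth flagging is that one cannot shortcut the lower bound by convexity and first-order (KKT) conditions alone, since the norm constraint $\|x\|_2^2 = n$ is an equality on a nonconvex sphere; the eigen-expansion gives the genuinely global bound that a local KKT analysis would not by itself guarantee.
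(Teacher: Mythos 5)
Your proof is correct and follows essentially the same route as the paper's: expand a feasible point in the eigenbasis of $L$, use the balance constraint $\transpose{\mathbf{1}}x=0$ to annihilate the $u_1$-component and the norm constraint to normalize the coefficients, then bound the diagonalized objective below by $\lambda_2$ (the paper phrases this last inequality as minimizing a convex combination of $\lambda_2,\ldots,\lambda_n$ via the substitution $y_i=c_i^2$, which is the same step). Your side remark about the box constraint is also well taken: the paper's own proof uses only the two equality constraints and silently drops $x\in[-1,1]^n$ (under which $\|x\|_2^2=n$ would force $x\in\{-1,1\}^n$, where $u_2$ is generically infeasible), so both arguments in fact establish optimality over the sphere-and-hyperplane relaxation, exactly as you make explicit.
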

\begin{proof}
  Since the eigenvectors $u_1,\ldots,u_n$ are an orthogonal basis of $\mathbb{R}^n$, we can express an optimal solution as $\bar{x}=\sum_i c_i u_i$. Thus,
  \begin{equation}
    \transpose{x}Lx = \sum\limits_{i,j} c_ic_j\transpose{u}_i Q u_j = \sum\limits_{i,j} c_ic_j\lambda_j\transpose{u}_i u_j = n\sum_{i>1} c_i^2\lambda_i. \label{xLx}
  \end{equation}
  The last equality in Eq.~\eqref{xLx} follows because $Lu_i=\lambda_iu_i$ for all $i\le n$, $\transpose{u}_iu_j=0$ for each $i\not=j$, and $\lambda_1=0$. Since $u_1=\mathbf{1}$ and by eigenvector orthogonality, letting $\transpose{\mathbf{1}}\bar{x}=0$ yields $c_1=0$. Lastly, requiring $\|\bar{x}\|_2=n$, again by eigenvector orthogonality, yields
  \begin{eqnarray}
    \big\|\sum_{i>1} c_i u_i \big\|_2^2 &=& \big\langle \sum_{i>1} c_i u_i, \sum_{j>1} c_j u_j\big\rangle = \sum_{i,j>1} c_i c_j \langle u_i,u_j\rangle \nonumber \\
    &=& \sum_{i>1} c_i^2 \|u_i\|_2^2 = n\sum_{i>1} c_i^2 = n. \label{xast1}
  \end{eqnarray}
  After replacing $c_i^2$ by $y_i$ in Eq.~\eqref{xLx}-\eqref{xast1}, we can reformulate Eq.~\eqref{balmincut} as
  \begin{equation*}
      n\,\min\big\{ \sum\limits_{i>1} \lambda_iy_i \;|\; \sum\limits_{i>1} y_i = 1\land y\ge 0\big\},
  \end{equation*}
  which is equivalent to finding the convex combination of $\lambda_2,\ldots,\lambda_n$ with smallest value. Since $\lambda_2\le\lambda_i$ for all $i>2$, the smallest value is achieved at $y_2=1$ and $y_i=0$ for all $i>2$. Hence $\bar{x}=u_2$ as claimed. 
\end{proof}

Normally, the components of $\bar{x}$ obtained this way are not in $\{-1,1\}$. We round $\bar{x}_i$ to its closest value in $\{-1,1\}$, breaking ties in such a way as to keep the bisection balanced. We then obtain a practically efficient approximation of the minimum balanced cut.

\subsubsection{Modularity clustering}
\label{s:modularity}
Modularity, first introduced in \cite{newmanPRE}, is a measure for evaluating the quality of a clustering of vertices in a graph $G=(V,E)$ with a weight function $w:E\to\mathbb{R}_+$ on the edges. We let $n=|V|$ and $m=|E|$. Given a vertex clustering $\mathcal{C}=(C_1,\ldots,C_k)$, where each $C_i\subseteq V$, $C_i\cap C_j=\varnothing$ for each $i\not=j$, and $\bigcup_i C_i=V$, the {\it modularity} of $\mathcal{C}$ is the proportion of edges in $E$ that fall within a cluster minus the expected proportion of the same quantity if edges were distributed at random while keeping the vertex degrees constant. This definition is not so easy to understand, so we shall assume for simplicity that $w_{uv}=1$ for all $\{u,v\}\in E$ and $w_{uv}=0$ otherwise. We give a more formal definition of modularity, and comment on its construction.

The ``fraction of the edges that fall within a cluster'' is
\begin{equation*}
  \frac{1}{m}\sum_{i\le k} \sum_{u,v\in C_k\atop \{u,v\}\in E} 1 = \frac{1}{2m} \sum_{i\le k\atop (u,v)\in (C_k)^2} w_{uv}
\end{equation*}
where $w_{uv}=w_{vu}$ turns out to be the $(u,v)$-th component of the $n\times n$ symmetric incidence matrix of the edge set $E$ in $V\times V$ --- thus we divide by $2m$ rather than $m$ in the right hand side (RHS) of the above equation. The ``same quantity if edges were distributed at random while keeping the vertex degrees constant'' is the probability that a pair of vertices $u,v$  belongs to the edge set of a random graph on $V$. If we were computing this probability over random graphs sampled uniformly over all graphs on $V$ with $m$ edges, this probability would be $1/m$; but since we only want to consider graphs with the same degree sequence as $G$, the probability is $\frac{|N(u)|\,|N(v)|}{2m}$ \cite{lehmann}. Here is an informal explanation: given vertices $u,v$, there are $k_u=|N(u)|$ ``half-edges'' out of $u$, and $k_v=|N(v)|$ out of $v$, which could come together to form an edge between $u$ and $v$ (over a total of $2m$ ``half-edges''). Thus we obtain a modularity
\begin{equation*}
  \mu(\mathcal{C})=\frac{1}{2m} \sum_{(u,v)\in C^2 \atop C\in\mathcal{C}} (w_{uv} - k_uk_v/(2m))
\end{equation*}
for the clustering $\mathcal{C}$.

We now introduce binary variables $x_{uv}$ which have value $1$ if $u,v\in V$ are in the same cluster, and $0$ otherwise. This allows us to rewrite the modularity as:
\begin{eqnarray}
  \mu(x) &=& \frac{1}{2m}\sum\limits_{u\not=v\in V} (w_{uv} - k_uk_v/(2m))x_{uv} \nonumber \\ &=& \frac{1}{m}\sum\limits_{u<v\in V} (w_{uv} - k_uk_v/(2m))x_{uv}.\label{eq:modularity1}
\end{eqnarray}
Following \cite{pre-10c}, we can reformulate the modularity maximization problem to a clique partitioning problem with the following formulation:
\begin{equation}
  \left.\begin{array}{rrcl}
    \max &\mu(x) && \\
    \forall 1\le i<j<k\le n & \quad  x_{ij} + x_{jk} - x_{ik} &\le& 1 \\
    \forall 1\le i<j<k\le n & \quad  x_{ij} - x_{jk} + x_{ik} &\le& 1 \\
    \forall 1\le i<j<k\le n & \quad  - x_{ij} + x_{jk} + x_{ik} &\le& 1 \\
    \forall 1\le i<j\le n & \quad  x_{ij} &\in& \{0,1\},
  \end{array}\right\}\label{eq:modularity2}
\end{equation}
which is a BLP formulation. The weighted variant of this problem yields a formulation like Eq.~\eqref{eq:modularity2} where $w$ are the edge weights and $k_u=\sum_{\{u,v\}\in E} w_{uv}$ for all $v\not=u$ in $V$. Another variant for graphs including loops and multiple edges is described in \cite{cafieri_loops}. We note that, by Eq.~\eqref{eq:modularity2}, maximizing modularity does not require the number of clusters to be known {\it a priori}.

There is a large literature about modularity maximization and its solution methods: for a survey, see \cite[\S VI]{fortunato2}. Solution methods based on MP are of particular interest to the topics of this survey. A BLP formulation similar to Eq.~\eqref{eq:modularity2} was proposed in \cite{wagner}. Another BLP formulation with different sets of decision variables (requiring the number of clusters to be known {\it a priori}) was proposed in \cite{papageorgiou}. Some column generation approaches, which scale better in size w.r.t.~previous formulations, were proposed in \cite{pre-10c}. Some MP based heuristics are discussed in \cite{pre-11,dam-12,dimacs-12}. 

\section{Robust solution methods for the DGP}
\label{s:dgpsol}
In this section we discuss some solution methods for the DGP which can be extended to deal with cases where distances are uncertain, noisy or wrong. Most of the methods we present are based on MP. We also discuss a different (non-MP based) class of methods in Sect.~\ref{s:fastdgp}, in view of their computational efficiency.

\subsection{Mathematical programming based methods}
\label{s:dgpmp}
DGP solution methods based on MP are robust to noisy or wrong data because MP allows for: (a) modification of the objective and constraints; (b) adjoining of side constraints. Moreover, although we do not review these here, there are MP-based methodologies for ensuring robustness of solutions \cite{robustopt}, probabilistic constraints \cite{probmp}, and scenario-based stochasticity \cite{stochmp}, which can be applied to the formulations in this section.

\subsubsection{Unconstrained quartic formulation}
\label{s:unconstrained}
A system of equations such as Eq.~\eqref{eq:dgp2} is itself a MP formulation with objective function identically equal to zero, and $X=\mathbb{R}^{nK}$. It therefore belongs to the QCP class. In practice, solvers for this class perform rather poorly when given Eq.~\eqref{eq:dgp2} as input \cite{lln1}. Much better performances can be obtained by solving the following unconstrained formulation:
\begin{equation}
  \min \sum\limits_{\{u,v\}\in E} \big(\|x_u-x_v\|_2^2 - d_{uv}^2\big)^2.\label{dgpuncon}
\end{equation}
We note that Eq.~\eqref{dgpuncon} consists in the minimization of a polynomial of degree four. It belongs to the class of nonconvex NLP formulations. In general, this is an {\bf NP}-hard class \cite{undecminlp}, which is not surprising, as it formulates the DGP which is itself an {\bf NP}-hard problem. Very good empirical results can be obtained on the DGP by solving Eq.~\eqref{dgpuncon} with a local NLP solver (such as e.g.~IPOPT \cite{ipopt} or SNOPT \cite{snopt7}) from a good starting point \cite{lln1}. This is the reason why Eq.~\eqref{dgpuncon} is very important: it can be used to ``refine'' solutions obtained with other methods, as it suffices to let such solutions be starting points given to a local solver acting on Eq.~\eqref{dgpuncon}.

Even if the distances $d_{uv}$ are noisy or wrong, optimizing Eq.~\eqref{dgpuncon} can yield good approximate realizations. If the uncertainty on the distance values is modelled using an interval $[d^L_{uv},d^U_{uv}]$ for each edge $\{u,v\}$, the following function \cite{mdgpsurvey} can be optimized instead of Eq.~\eqref{dgpuncon}:
\begin{equation}
  \min \sum\limits_{\{u,v\}\in E} \big(\max(0,(d^L_{uv})^2-\|x_u-x_v\|_2^2)+\max(0, \|x_u-x_v\|_2^2 - (d^U_{uv})^2)\big).\label{idgpuncon}
\end{equation}
The DGP variant where distances are intervals instead of values is known as the {\sc interval DGP} (iDGP) \cite{idgpsurvey,bpinterval}.

Note that Eq.~\eqref{idgpuncon} involves binary $\max$ functions with two arguments. Relatively few MP user interfaces/solvers would accept this function. To overcome this issue, we linearize (see Sect.~\ref{s:linearization}) the two $\max$ terms by two sets of added decision variables $y,z$, and obtain
\begin{equation}
  \left.\begin{array}{rrcl}
    \min & \sum\limits_{\{u,v\}\in E} (y_{uv}+z_{uv}) && \\
    \forall \{u,v\}\in E & \|x_u-x_v\|_2^2 &\ge& (d^L_{uv})^2 - y_{uv}  \\
    \forall \{u,v\}\in E & \|x_u-x_v\|_2^2 &\le& (d^U_{uv})^2 + z_{uv} \\
    & y,z &\ge& 0,
  \end{array}\right\} \label{idgpuncon2}
\end{equation}
which follows from Eq.~\eqref{idgpuncon} because of the objective function direction, and because $a\ge\max(b,c)$ is equivalent to $a\ge b\land a\ge c$. We note that Eq.~\eqref{idgpuncon2} is no longer an unconstrained quartic, however, but a QCP. It expresses a minimization of penalty variables to the quadratic inequality system
\begin{equation}
  \forall \{u,v\}\in E \quad (d_{uv}^L)^2 \le \|x_u-x_v\|_2^2 \le (d_{uv}^U)^2.\label{idgp}
\end{equation}
We also note that many local NLP solvers take very arbitrary functions in input (such as functions expressed by computer code), so the reformulation Eq.~\eqref{idgpuncon2} may be unnecessary when only locally optimal solutions of Eq.~\eqref{idgpuncon} are needed.

\subsubsection{Constrained quadratic formulations}
\label{s:qcqp}
We propose two formulations in this section. The first is derived directly from Eq.~\eqref{eq:dgp2}:
\begin{equation}
  \left.\begin{array}{rrcl}
    \min & \sum\limits_{\{u,v\}\in E} s_{uv}^2 && \\
    \forall \{u,v\}\in E & \|x_u-x_v\|_2^2 &=& d_{uv}^2 + s_{uv}.
    \end{array}\right\} \label{dgpqcqp}
\end{equation}
We note that Eq.~\eqref{dgpqcqp} is a QCQP formulation. Similarly to Eq.~\eqref{idgpuncon2} it uses additional variables to penalize feasibility errors w.r.t.~\eqref{eq:dgp2}. Differently from Eq.~\eqref{idgpuncon2}, however, it removes the need for two separate variables to model slack and surplus errors. Instead, $s_{uv}$ is unconstrained, and can therefore take any value. The objective, however, minimizes the sum of the squares of the components of $s$. In practice, Eq.~\eqref{dgpqcqp} performs much better than Eq.~\eqref{eq:dgp2}; on average, the performance is comparable to that of Eq.~\eqref{dgpuncon}. We remark that Eq.~\eqref{dgpqcqp} has a convex objective function but nonconvex constraints. 

The second formulation we propose is an exact reformulation of Eq.~\eqref{dgpuncon}. First, we replace the minimization of squared errors by absolute values, yielding
\begin{equation*}
  \min \sum\limits_{\{u,v\}\in E} \big|\|x_u-x_v\|_2^2 - d_{uv}^2\big|, 
\end{equation*}
which clearly has the same set of global optima as Eq.~\eqref{dgpuncon}. We then rewrite this similarly to Eq.~\eqref{idgpuncon2} as follows:
\begin{equation*}
  \left.\begin{array}{rrcl}
    \min & \sum\limits_{\{u,v\}\in E} (y_{uv}+z_{uv}) && \\
    \forall \{u,v\}\in E & \|x_u-x_v\|_2^2 &\ge& d_{uv}^2 - y_{uv}  \\
    \forall \{u,v\}\in E & \|x_u-x_v\|_2^2 &\le& d_{uv}^2 + z_{uv} \\
    & y,z &\ge& 0,
  \end{array}\right\} 
\end{equation*}
which, again, does not change the global optima. Next, we note that we can fix $z_{uv}=0$ without changing global optima, since they all have the property that $z_{uv}=0$. Now we replace $y_{uv}$ in the objective function by $d_{uv}^2-\|x_u-x_v\|_2^2$, which we can do without changing the optima since the first set of constraints reads $y_{uv}\ge d_{uv}^2-\|x_u-x_v\|_2^2$. We can discard the constant $d_{uv}^2$ from the objective, since adding constants to the objective does not change optima, and change $\min -f$ to $-\max f$, yielding:
\begin{equation}
  \left.\begin{array}{rrcl}
    \max & \sum\limits_{\{u,v\}\in E} \|x_u-x_v\|_2^2  && \\
    \forall \{u,v\}\in E & \|x_u-x_v\|_2^2 &\le& d_{uv}^2,
  \end{array}\right\} \label{pushpull}
\end{equation}
which is a QCQP known as the ``push-and-pull'' formulation of the DGP, since the constraints ensure that $x_u,x_v$ are pushed closer together, while the objective attempts to pull them apart \cite[\S 2.2.1]{mwu}.

Contrariwise to Eq.~\eqref{dgpqcqp}, Eq.~\eqref{pushpull} has a nonconvex (in fact, concave) objective function and convex constraints. Empirically, this often turns out to be somewhat easier than tackling the reverse situation. The theoretical justification is that finding a feasible solution in a nonconvex set is a hard task in general, whereas finding local optima of a nonconvex function in a convex set is tractable: the same cannot be said for global optima, but in practice one is often satisfied with ``good'' local optima.

\subsubsection{Semidefinite programming}
\label{s:sdp}
SDP is linear optimization over the cone of psd matrices, which is convex: if $A,B$ are two psd matrices, $C=\alpha A + (1-\alpha) B$ is psd for $\alpha\in[0,1]$. Suppose there is $x\in\mathbb{R}^n$ such that $\transpose{x}Cx<0$. Then $\alpha \transpose{x} A x + (1-\alpha) \transpose{x} B x <0$, so $0\le \alpha \transpose{x} A x < -(1-\alpha) \transpose{x} B x\le 0$, i.e.~$0<0$, which is a contradiction, hence $C$ is also psd, as claimed. Therefore, SDP is a subclass of cNLP.

The SDP formulation we propose is a relaxation of Eq.~\eqref{eq:dgp2}. First, we write $\|x_u-x_v\|_2^2 = \langle x_u,x_u\rangle + \langle x_v,x_v\rangle - 2\langle x_u,x_v\rangle$. Then we linearize all of the scalar products by means of additional variables $X_{uv}$:
\begin{eqnarray*}
    \forall \{u,v\}\in E \quad X_{uu}+X_{vv}-2X_{uv} &=& d_{uv}^2  \\
     X &=& x\transpose{x}.
\end{eqnarray*}
We note that $X=x\transpose{x}$ constitutes the whole set of defining constraints $X_{uv}=\langle x_u,x_v\rangle$ (for each $u,v\le n$) introduced by the linearization procedure (Sect.~\ref{s:linearization}).

The relaxation we envisage does not entirely drop the defining constraints, as in Sect.~\ref{s:linearization}. Instead, it relaxes them from $X-x\transpose{x}=0$ to $X-x\transpose{x}\succeq 0$. In other words, instead of requiring that all of the eigenvalues of the matrix $X-x\transpose{x}$ are zero, we simply require that they should be $\ge 0$. Moreover, since the original variables $x$ do not appear anywhere else, we can simply require $X\succeq 0$, obtaining:
\begin{equation}
  \left.\begin{array}{rrcl}
    \forall \{u,v\}\in E & X_{uu}+X_{vv}-2X_{uv} &=& d_{uv}^2 \\
    & X &\succeq & 0.\label{sdpfeas}
  \end{array}\right\}   
\end{equation}

The SDP relaxation in Eq.~\eqref{sdpfeas} has the property that it provides a solution $\bar{X}$, which is an $n\times n$ symmetric matrix. Spectral decomposition of $\bar{X}$ yields $P\Lambda\transpose{P}$, where $P$ is a matrix of eigenvectors and $\Lambda=\diag{\lambda}$ where $\lambda$ is a vector of eigenvalues of $\bar{X}$. Since $\bar{X}$ is psd, $\lambda\ge 0$, which means that $\sqrt{\Lambda}$ is a real matrix. Therefore, by setting $Y=P\sqrt{\Lambda}$ we have that
\[Y\transpose{Y}=(P\sqrt{\Lambda})\transpose{(P\sqrt{\Lambda})}= P\sqrt{\Lambda}\sqrt{\Lambda}\transpose{P}=P\Lambda\transpose{P}=\bar{X},\]
which implies that $\bar{X}$ is the Gram matrix of $Y$. Thus we can take $Y$ to be a realization satisfying Eq.~\eqref{eq:dgp2}. The only issue is that $Y$, as an $n\times n$ matrix, is a realization in $n$ dimensions rather than $K$. Naturally, $\rank{Y}=\rank{\bar{X}}$ need not be equal to $n$, but could be lower; in fact, in order to find a realization of the given graph, we would like to find a solution $\bar{X}$ with rank at most $K$. Imposing this constraint is equivalent to asking that $X=x\transpose{x}$ (which have been relaxed in Eq.~\eqref{sdpfeas}).

We note that Eq.~\eqref{sdpfeas} is a pure feasibility problem. Every SDP solver, however, also accepts an objective function as input. In absence of a ``natural'' objective in a pure feasibility problem, we can devise one to heuristically direct the search towards parts of the psd cone which we believe might contain ``good'' solutions. A popular choice is
\begin{eqnarray*}
  \min \trace{X} &=& \min \trace{P\Lambda \transpose{P}}
  = \min\trace{P\transpose{P}\Lambda}= \\
  &=& \min\trace{PP^{-1}\Lambda}
  = \min \lambda_1+\cdots+\lambda_n, 
\end{eqnarray*}
where \textsf{tr} is the trace, the first equality follows by spectral decomposition (with $P$ a matrix of eigenvectors and $\Lambda$ a diagonal matrix of eigenvalues of $X$), the second by commutativity of matrix products under the trace, the third by orthogonality of eigenvectors, and the last by definition of trace. This aims at minimizing the sum of the eigenvalues of $X$, hoping this will decrease the rank of $\bar{X}$. 

For the DGP applied to protein conformation (Sect.~\ref{s:science}), the objective function
\begin{equation*}
  \min \sum\limits_{\{u,v\}\in E} (X_{uu} + X_{vv} - 2X_{uv}) \label{pushpullobj}
\end{equation*}
was empirically found to be a good choice \cite[\S 2.1]{isco16}. More (unpublished) experimentation showed that the scalarization of the two objectives:
\begin{equation}
  \min \sum\limits_{\{u,v\}\in E} (X_{uu} + X_{vv} - 2X_{uv}) + \gamma\trace{X},\label{sdpobj}
\end{equation}
with $\gamma$ in the range $O(10^{-2})$-$O(10^{-3})$, is a good objective function for solving Eq.~\eqref{sdpfeas} when it is applied to protein conformation.

In the majority of cases, solving SDP relaxations does not yield solution matrices with rank $K$, even with objective functions such as Eq.~\eqref{sdpobj}. We discuss methods for constructing an approximate rank $K$ realization from $\bar{X}$ in Sect.~\ref{s:dimred}.

SDP is one of those problems which is not known to be in {\bf P} (nor {\bf NP}-complete) in the Turing machine model. It is, however, known that SDPs can be solved in polynomial time up to a desired error tolerance $\epsilon>0$, with the complexity depending on $\frac{1}{\epsilon}$ as well as the instance size. Currently, however, the main issue with SDP is technological: state-of-the art solvers do not scale all that well with size. One of the reasons is that $K$ is usually fixed (and small) with respect to $n$, so the while the original problem has $O(n)$ variables, the SDP relaxation has $O(n^2)$. Another reason is that the Interior Point Method (IPM), which often features as a ``state of the art'' SDP solver, has a relatively high computational complexity \cite{wrightIPM}: a ``big oh'' notation estimate of $O(\max(m,n)mn^{2.5})$ is given in Bubeck's blog at ORFE, Princeton.\footnote{\url{blogs.princeton.edu/imabandit/2013/02/19/orf523-ipms-for-lps-and-sdps/}}

\subsubsection{Diagonally dominant programming}
\label{s:ddp}
In order to address the size limitations of SDP, we employ some interesting linear approximations of the psd cone proposed in \cite{majumdar,ahmadimajumdar}. An $n\times n$ real symmetric matrix $X$ is diagonally dominant (dd) if
\begin{equation}
  \forall i\le n\quad \sum\limits_{j\not=i} |X_{ij}| \le X_{ii}. \label{eq:dd}
\end{equation}
As remarked in Sect.~\ref{s:spclust}, it is well known that every dd matrix is also psd, while the converse may not hold. Specifically, the set of dd matrices form a sub-cone of the cone of psd matrices \cite{barker2}.

The interest of dd matrices is that, by linearization of the absolute value terms, Eq.~\eqref{eq:dd} can be reformulated so it becomes linear: we introduce an added matrix $T$ of decision variables, then write:
\begin{eqnarray}
  \forall i\le n\quad \sum\limits_{j\not=i} T_{ij} &\le& X_{ii} \label{eq:ddp1} \\
  -T \le X &\le& T, \label{eq:ddp2}
\end{eqnarray}
which are linear constraints equivalent to Eq.~\eqref{eq:dd} \cite[Thm.~10]{ahmadimajumdar}. One can see this easily whenever $X\ge 0$ or $X\le 0$. Note that
\begin{eqnarray*}
  \forall i\le n\quad X_{ii}&\ge& \sum\limits_{j\not=i} T_{ij}\ge \sum\limits_{j\not=i} X_{ij}\\
  \forall i\le n\quad X_{ii}&\ge& \sum\limits_{j\not=i} T_{ij}\ge \sum\limits_{j\not=i} -X_{ij}
\end{eqnarray*}
follow directly from Eq.~\eqref{eq:ddp1}-\eqref{eq:ddp2}. Now one of the RHSs is equal to $\sum_{j\not=i} |X_{ij}|$, which implies Eq.~\eqref{eq:dd}. For the general case, the argument uses the extreme points of Eq.~\eqref{eq:ddp1}-\eqref{eq:ddp2} and elimination of $T$ by projection.

We can now approximate Eq.~\eqref{sdpfeas} by the pure feasibility LP:
\begin{equation}
  \left.\begin{array}{rrcl}
    \forall \{u,v\}\in E & X_{uu}+X_{vv}-2X_{uv} &=& d_{uv}^2 \\
    \forall i\le n & \sum\limits_{j\not=i} T_{ij} &\le& X_{ii} \\
     &-T \le X &\le& T,\label{lpfeas}
  \end{array}\right\}   
\end{equation}
which we call a {\it diagonally dominant program} (DDP). As in Eq.~\eqref{sdpfeas}, we do not explicitly give an objective function, since it depends on the application. Since the DDP in Eq.~\eqref{lpfeas} is an inner approximation of the corresponding SDP in Eq.~\eqref{sdpfeas}, the DDP feasible set is a subset of that of the SDP. This situation yields both an advantage and a disadvantage: any solution $\tilde{X}$ of the DDP is psd, and can be obtained at a smaller computational cost; however, the DDP might be infeasible even if the corresponding SDP is feasible (see Fig.~\ref{f:ddp1}, left).
\begin{figure}[!ht]
  \begin{center}
    \includegraphics[width=5cm]{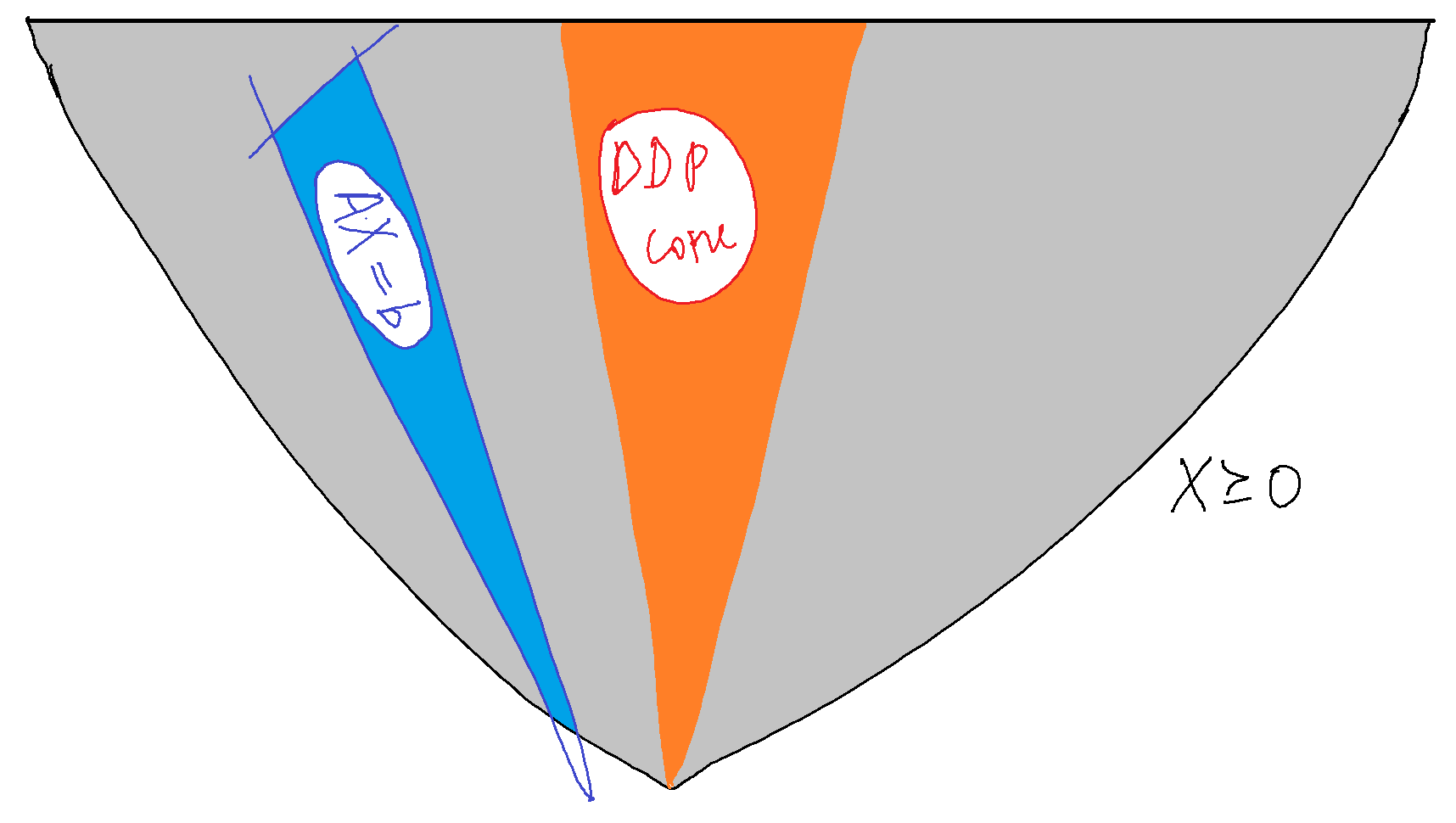} \hfill
    \includegraphics[width=5cm]{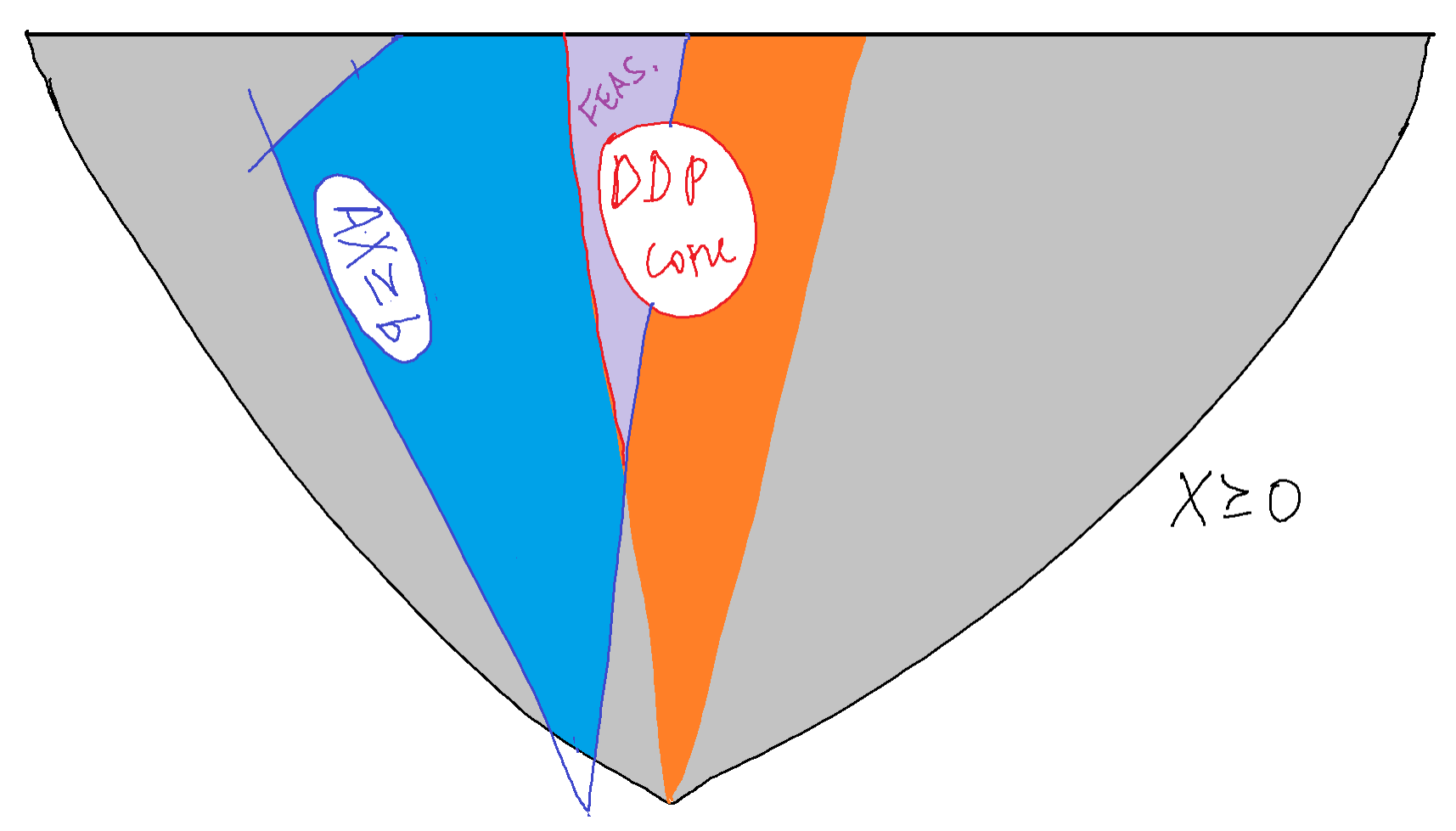}    
  \end{center}
  \caption{On the left, the DDP is infeasible even if the SDP is not; on the right, a relaxed set of constraints makes the DDP feasible.}
  \label{f:ddp1}
\end{figure}
In order to decrease the risk of infeasibility of Eq.~\eqref{lpfeas}, we relax the equation constraints to inequality, and impose an objective as in the push-and-pull formulation Eq.~\eqref{pushpull}:
\begin{equation}
  \left.\begin{array}{rrcl}
    \max & \sum\limits_{\{u,v\}\in E} (X_{uu}+X_{vv}-2X_{uv}) && \\
    \forall \{u,v\}\in E & X_{uu}+X_{vv}-2X_{uv} &\le & d_{uv}^2 \\
    \forall i\le n & \sum\limits_{j\not=i} T_{ij} &\le& X_{ii} \\
     &-T \le X &\le& T.\label{ddp}
  \end{array}\right\}   
\end{equation}
This makes the DDP feasible set larger, which means it is more likely to be feasible (see Fig.~\ref{f:ddp1}, right). Eq.~\eqref{ddp} was successfully tested on protein graphs in \cite{isco16}.

If $C$ is any cone in $\mathbb{R}^n$, the {\it dual cone} $C^\ast$ is defined as:
\[C^\ast = \{y\in\mathbb{R}^n \;|\; \forall x\in C\;\langle x,y\rangle \ge 0\}.\]
Note that the dual cone contains the set of vectors making a non-obtuse angle with all of the vectors in the original (primal) cone. We can exploit the dual dd cone in order to provide another DDP formulation for the DGP which turns out to be an outer approximation. Outer approximations have symmetric advantages and disadvantages w.r.t.~the inner ones: if the original SDP is feasible, than the outer DDP approximation is also feasible (but the DDP may be feasible even if the SDP is not); however, the solution $\tilde{X}$ we obtain from the DDP need not be a psd matrix. Some computational experience related to \cite{salgado3} showed that it often happens that more or less  half of the eigenvalues of $\tilde{X}$ are negative. 

We now turn to the actual DDP formulation related to the dual dd cone. A cone $C$ of $n\times n$ real symmetric matrices is {\it finitely generated} by a set $\mathcal{X}$ of matrices if:
\[\forall X\in C \; \exists\delta\in\mathbb{R}^{|\mathcal{X}|}_+ \quad X = \sum\limits_{x\in\mathcal{X}} \delta_x x\transpose{x}.\]
It turns out \cite{barker2} that the dd cone is finitely generated by
\[\mathcal{X}_{\mathsf{dd}} = \{e_i\;|\;i\le n\}\cup\{e_i\pm e_j\;|\;i<j\le n\},\]
where $e_1,\ldots,e_n$ is the standard orthogonal basis of $\mathbb{R}^n$. This is proved in \cite{barker2} by showing that the following rank-one matrices are extreme rays of the dd cone:
\begin{itemize}
  \item $E_{ii} = \diag{e_i}$, where $e_i=\transpose{(0,\ldots,0,1_i,0,\ldots,0)}$;
  \item $E_{ij}^+$ has a minor $\left(\begin{array}{cc} 1_{ii}&1_{ij}\\1_{ji}&1_{jj}\end{array}\right)$ and is zero elsewhere;
  \item $E_{ij}^-$ has a minor $\left(\begin{array}{rr} 1_{ii}&-1_{ij}\\-1_{ji}&1_{jj}\end{array}\right)$ and is zero elsewhere,
\end{itemize}
and, moreover, that the extreme rays are generated by the standard basis vectors as follows:
\begin{eqnarray*}
  \forall i\le n \quad E_{ii} &=& e_i\transpose{e}_i \\
  \forall i<j\le n \quad E_{ij}^+ &=& (e_i+e_j)\transpose{(e_i+e_j)} \\
  \forall i<j\le n \quad E_{ij}^- &=& (e_i-e_j)\transpose{(e_i-e_j)}.
\end{eqnarray*}
This observation allowed Ahmadi and his co-authors to write the DDP formulation Eq.~\eqref{ddp} in terms of the extreme rays $E_{ii},E_{ij}^{\pm}$ \cite{ahmadimajumdar}, and also to define a column generation algorithms over them \cite{ahmadi}.

If a matrix cone is finitely generated, the dual cone has the same property. Let $\mathbb{S}_n$ be the set of real symmetric $n\times n$ matrices; for $A,B\in\mathbb{S}_n$ we define an inner product $\langle A,B\rangle=A\bullet B\triangleq\trace{A\transpose{B}}$.
\begin{theorem}
  Assume $C$ is finitely generated by $\mathcal{X}$. Then $C^\ast$ is also finitely generated. Specifically, $C^\ast = \{Y\in\mathbb{S}_n\;|\;\forall x\in\mathcal{X}\; (Y\bullet x\transpose{x}\ge 0)\}$. \label{thm:dualcone}
\end{theorem}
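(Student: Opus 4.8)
The plan is to establish the explicit description of $C^\ast$ first, since the finite generation of $C^\ast$ will then drop out of it by a standard polyhedrality argument. Write $D=\{Y\in\mathbb{S}_n\;|\;\forall x\in\mathcal{X}\;(Y\bullet x\transpose{x}\ge 0)\}$ for the candidate set on the right-hand side, so that the goal is the set equality $C^\ast=D$. Throughout I would exploit the bilinearity of the Frobenius product $A\bullet B=\trace{A\transpose{B}}$, together with the identity $Y\bullet x\transpose{x}=\trace{Yx\transpose{x}}=\transpose{x}Yx$, which shows that each defining inequality of $D$ is simply the nonnegativity of a quadratic form evaluated at a generator.

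I would prove $C^\ast=D$ by two inclusions. For $C^\ast\subseteq D$: each rank-one generator $x\transpose{x}$ with $x\in\mathcal{X}$ belongs to $C$, being the conic combination $\sum_{x'\in\mathcal{X}}\delta_{x'}x'\transpose{x'}$ with $\delta_x=1$ and all other coefficients zero; hence if $Y\in C^\ast$, applying the defining inequality of the dual cone to $X=x\transpose{x}\in C$ gives $Y\bullet x\transpose{x}\ge 0$ for every $x\in\mathcal{X}$, i.e.~$Y\in D$. For $D\subseteq C^\ast$: let $Y\in D$ and take any $X\in C$; by hypothesis $X=\sum_{x\in\mathcal{X}}\delta_x x\transpose{x}$ for some $\delta\ge 0$, so bilinearity yields
\[ Y\bullet X=\sum_{x\in\mathcal{X}}\delta_x\,(Y\bullet x\transpose{x})\ge 0, \]
each summand being the product of the nonnegative scalar $\delta_x$ with the nonnegative quantity $Y\bullet x\transpose{x}$. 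As $X\in C$ was arbitrary, $Y\in C^\ast$, completing the equality $C^\ast=D$.

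The equality just proved exhibits $C^\ast$ as the intersection of the finitely many closed half-spaces $\{Y\in\mathbb{S}_n\;|\;\transpose{x}Yx\ge 0\}$, one per $x\in\mathcal{X}$; thus $C^\ast$ is a polyhedral cone in the finite-dimensional space $\mathbb{S}_n$. By the Minkowski--Weyl theorem, any cone expressible as a finite intersection of half-spaces is also the conic hull of a finite set of elements, which is precisely the claim that $C^\ast$ is finitely generated.

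I do not expect a substantive obstacle: both inclusions are one-line consequences of bilinearity and of the definition of finite generation, and the remark that the generators lie in $C$ is immediate. The one point needing care is the sense of ``finitely generated'' in the conclusion. The primal hypothesis furnishes \emph{rank-one} generators $x\transpose{x}$, whereas the extreme rays of the polyhedral cone $C^\ast$ need not be rank one; the statement should therefore be read in the general sense of a finite conic hull, and the passage from the half-space description to a finite generating set is exactly where the classical Minkowski--Weyl theorem is invoked.
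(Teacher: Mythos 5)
Your proposal is correct, and its core --- the two-inclusion proof of the set equality via bilinearity of $\bullet$ and the conic expansion $X=\sum_{x\in\mathcal{X}}\delta_x x\transpose{x}$ --- is essentially the argument the paper gives. Two differences are worth noting. First, for the inclusion $C^\ast\subseteq D$ you argue directly: each generator $x\transpose{x}$ lies in $C$, so the dual inequality applies to it. The paper instead argues by contradiction, collecting the subset $\mathcal{X}'\subseteq\mathcal{X}$ of violated generators and forming $Y=\sum_{x\in\mathcal{X}'}\delta_x x\transpose{x}$ with $\delta\ge 0$; as written this has a trivial slip ($\delta=0$ yields $Z\bullet Y=0$, not $<0$, so one needs some $\delta_x>0$ --- a single violating generator already suffices), which your direct version sidesteps entirely. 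Second, you go beyond the paper on the ``finitely generated'' clause: the paper's proof establishes only the half-space description $C^\ast=D$ and implicitly treats that as finite generation, whereas you invoke Minkowski--Weyl to convert the finite intersection of half-spaces into a finite conic hull. Your closing caveat is also apt: the paper's definition of ``finitely generated by $\mathcal{X}$'' requires rank-one generators $x\transpose{x}$, which are psd, yet $C^\ast_{\mathsf{dd}}$ contains non-psd matrices (as the paper itself observes when discussing the dual DDP), so $C^\ast$ cannot be finitely generated in that literal rank-one sense; reading the conclusion in the general conic-hull sense, justified by Minkowski--Weyl, is the correct repair, and on this point your write-up is more complete than the paper's own proof.
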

\begin{proof}
  By assumption, $C = \{X\in\mathbb{S}_n\;|\;\exists \delta\in\mathbb{R}^{|\mathcal{X}|}_+ \; X=\sum_{x\in\mathcal{X}} \delta_x x\transpose{x}\}$. \\ [0.3em]
  ($\Rightarrow$) Let $Y\in\mathbb{S}_n$ be such that, for each $x\in\mathcal{X}$, we have $Y\bullet x\transpose{x}\ge 0$. We are going to show that $Y\in C^\ast$, which, by definition, consists of all matrices $Y$ such that for all $X\in C$, $Y\bullet X\ge 0$. Note that, for all $X\in C$, we have $X=\sum_{x\in\mathcal{X}} \delta_x x\transpose{x}$ (by finite generation). Hence $Y\bullet X = \sum_x \delta_x Y\bullet x\transpose{x}\ge 0$ (by definition of $Y$), whence $Y\in C^\ast$. \\ [0.3em]
  ($\Leftarrow$) Suppose $Z\in C^\ast\smallsetminus \{Y\;|\;\forall x\in\mathcal{X}\;(Y\bullet x\transpose{x}\ge 0)\}$. Then there is $\mathcal{X}'\subset\mathcal{X}$ such that for any $x\in\mathcal{X}'$ we have $Z\bullet x\transpose{x}<0$. Consider any $Y=\sum_{x\in\mathcal{X}'} \delta_x x\transpose{x}\in C$ with $\delta\ge 0$. Then $Z\bullet Y=\sum_{x\in\mathcal{X}'}\delta_{x} Z\bullet x\transpose{x}<0$, so $Z\not\in C^\ast$, which is a contradiction. Therefore $C^\ast=\{Y\;|\;\forall x\in\mathcal{X}\;(Y\bullet x\transpose{x}\ge 0)\}$ as claimed. 
\end{proof}

We are going to exploit Thm.~\ref{thm:dualcone} in order to derive an explicit formulation of the following DDP formulation based on the dual cone $C_{\mathsf{dd}}^\ast$ of the dd cone $C_{\mathsf{dd}}$ finitely generated by $\mathcal{X}_{\mathsf{dd}}$:
\begin{equation*}
  \left.\begin{array}{rrcl}
    \forall \{u,v\}\in E & X_{uu}+X_{vv}-2X_{uv} &=& d_{uv}^2 \\
    & X &\in& C^\ast_{\mathsf{dd}}. 
  \end{array}\right\}   
\end{equation*}
We remark that $X\bullet v\transpose{v}=\transpose{v}Xv$ for each $v\in\mathbb{R}^n$. By Thm.~\ref{thm:dualcone}, $X\in C^\ast_{\mathsf{dd}}$ can be restated as
$\forall v\in\mathcal{X}_{\mathsf{dd}}\; \transpose{v}Xv\ge 0$. We obtain the following LP formulation:
\begin{equation}
  \left.\begin{array}{rrcl}
    \max & \sum\limits_{\{u,v\}\in E} (X_{uu}+X_{vv}-2X_{uv}) && \\
    \forall \{u,v\}\in E & X_{uu}+X_{vv}-2X_{uv} &=& d_{uv}^2 \\
    \forall v\in\mathcal{X}_{\mathsf{dd}} & \transpose{v}Xv &\ge & 0. \label{dualddp}
  \end{array}\right\}   
\end{equation}
With respect to the primal DDP, the dual DDP formulation in Eq.~\eqref{dualddp} provides a very tight bound to the objective function value of the push-and-pull SDP formulation Eq.~\eqref{pushpull}. On the other hand, the solution $\bar{X}$ is usually far from being a psd matrix.

\subsection{Fast high-dimensional methods}
\label{s:fastdgp}
In Sect.~\ref{s:dgpmp} we surveyed methods based on MP, which are very flexible, insofar as they can accommodate side constraints and noisy data, but computationally demanding. In this section we discuss two very fast, yet robust, methods for embeddings graphs in Euclidean spaces. 

\subsubsection{Incidence vectors}
\label{s:incvec}
The simplest, and most naive methods for mapping graphs into vectors are given by exploiting various incidence information in the graph structure. By contrast, the resulting embeddings are unrelated to Eq.~\eqref{eq:dgp2}.

Given a simple graph $G=(V,E)$ with $|V|=n$, $|E|=m$ and edge weight function $w:E\to\mathbb{R}_+$, we present two approaches: one which outputs an $n\times n$ matrix, and one which outputs a single vector in $\mathbb{R}^K$ with $K=\frac{1}{2}n(n-1)$.
\begin{enumerate}
\item For each $u\in V$, let $x_u=(x_{uv}\;|\;v\in V)\in\mathbb{R}^n$ be the incidence vector of $N(u)$ on $V$, i.e.:
  \[\forall u\in V\quad x_{uv} = \left\{ \begin{array}{ll} w_{uv} & \mbox{ if } \{u,v\}\in E \\ 0 & \mbox{ otherwise.} \end{array}\right. \]
\item Let $K=\frac{1}{2}n(n-1)$, and $x^E=(x_e\;|\;e\in E)\in\mathbb{R}^K$ be the incidence vector of the edge set $E$ into the set $\{ \{i,j\}\;|\; i<j\le n\}$, i.e.:
  \[ x_e = \left\{\begin{array}{ll} w_e & \mbox{ if } e\in E \\ 0 & \mbox{ otherwise.} \end{array}\right. \]
\end{enumerate}
Both embeddings can be obtained in $O(n^2)$ time. Both embeddings are very high dimensional. So that they may be useful in practice, it is necessary to post-process them using dimensional reduction techniques (see Sect.~\ref{s:dimred}). 

\subsubsection{The universal isometric embedding}
\label{s:uie}
This method, also called {\it Fr\'echet embedding}, is remarkable in that it maps any finite metric space congruently into a set of vectors in the $\ell_\infty$ norm \cite[\S 6]{kuratowski}. No other norm allows exact congruent embeddings in vector spaces \cite{matousekmetric}. The Fr\'echet embedding provided the foundational idea for several other probabilistic approximate embeddings in various other norms and dimensions \cite{bourgain,linial}.

\begin{theorem}
  \label{thm:uie}
Given any finite metric space $(X,d)$, where $|X|=n$ and $d$ is a distance function defined on $X$, there exists an embedding $\rho:X\to\mathbb{R}^n$ such that $(\rho(X),\ell_\infty)$ is congruent to $(X,d)$. 
\end{theorem}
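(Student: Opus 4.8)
The plan is to construct the embedding explicitly as a ``distance-to-landmarks'' map, where every point of $X$ serves as a landmark, and then to verify congruence by a short two-sided estimate. First I would fix an enumeration $X=\{x_1,\ldots,x_n\}$ and define $\rho:X\to\mathbb{R}^n$ by setting, for each $x_i$,
\[
\rho(x_i) = \big(d(x_i,x_1),\, d(x_i,x_2),\, \ldots,\, d(x_i,x_n)\big).
\]
In other words, the $k$-th coordinate of $\rho(x_i)$ records the distance from $x_i$ to the $k$-th landmark $x_k$. This is well defined because $d$ is a genuine distance function on the finite set $X$, so each coordinate is a nonnegative real number, and the image lands in $\mathbb{R}^n$ as required.

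Next I would compute the $\ell_\infty$ distance between two images and show it equals $d(x_i,x_j)$. By definition of the $\ell_\infty$ norm,
\[
\|\rho(x_i)-\rho(x_j)\|_\infty = \max_{k\le n} \big| d(x_i,x_k) - d(x_j,x_k)\big|.
\]
The argument then splits into two inequalities. For the upper bound I would invoke the reverse triangle inequality, which follows from the metric axioms (specifically the triangle inequality applied in both directions): for every landmark $x_k$ we have $|d(x_i,x_k)-d(x_j,x_k)|\le d(x_i,x_j)$, so the maximum over $k$ is at most $d(x_i,x_j)$. For the matching lower bound I would simply evaluate the coordinate indexed by $k=j$ (equivalently $k=i$): there the term is $|d(x_i,x_j)-d(x_j,x_j)| = |d(x_i,x_j)-0| = d(x_i,x_j)$, so the maximum is at least $d(x_i,x_j)$. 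Combining the two gives $\|\rho(x_i)-\rho(x_j)\|_\infty = d(x_i,x_j)$ exactly.

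This equality is precisely the congruence condition between $(X,d)$ and $(\rho(X),\ell_\infty)$ in the sense defined earlier, so $\rho$ is the desired embedding and the theorem follows. I do not expect a genuine obstacle here; the only point requiring a moment's care is recognizing that the upper bound is governed by the reverse triangle inequality (which is where \emph{all three} metric axioms, and in particular symmetry together with the triangle inequality, are actually used) while the lower bound is free, obtained merely by reading off a single well-chosen coordinate. It is worth remarking that the construction uses $n$ coordinates in a possibly redundant way, and that the sharpness of the result — the fact, cited in the text, that no other norm admits such a universal exact embedding — is what makes the $\ell_\infty$ choice essential, though establishing that converse is outside the scope of this statement.
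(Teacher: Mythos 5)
Your proposal is correct and is essentially the paper's own proof: mapping each point to its vector of distances to all points of $X$ (your rows of the distance matrix coincide with the paper's columns, by symmetry of $d$), bounding above via the reverse triangle inequality, and attaining the bound at the coordinate $k=j$ where $d(x_j,x_j)=0$. This is precisely the Fr\'echet--Kuratowski argument given in the paper, so there is nothing to add.
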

This theorem is surprising because of its generality in conjunction with the exactness of the result: it works on {\it any} (finite) metric space. The ``magic hat'' out of which we shall pull the vectors in $\rho(X)$ is simply the only piece of data we are given, namely the distance matrix of $X$. More precisely the $i$-th element of $X$ is mapped to the vector corresponding to the $i$-th column of the distance matrix. 
\begin{proof}
  Let $\mathsf{D}(X)$ be the distance matrix of $(X,d)$, namely $\mathsf{D}_{ij}(X)=(d(x_i,x_j))$ where $X=\{x_1,\ldots,x_n\}$. We denote $d(x_i,x_j)=d_{ij}$ for brevity. For any $j\le n$ we let $\rho(x_j)=\delta_j$, where $\delta_j$ is the $j$-th column of $\mathsf{D}(X)$. We have to show that $\|\rho(x_i)-\rho(x_j)\|_\infty=d_{ij}$ for each $i<j\le n$. By definition of the $\ell_\infty$ norm, for each $i<j\le n$ we have
  \[\|\rho(x_i)-\rho(x_j)\|_\infty=\|\delta_i-\delta_j\|_\infty=\max\limits_{k\le n}|\delta_{ik}-\delta_{jk}| = \max\limits_{k\le n}|d_{ik}-d_{jk}|.\quad(\ast)\]
By the triangular inequality on $(X,d)$, for $i<j\le n$ and $k\le n$ we have:
\begin{eqnarray*}
  && d_{ik}\le d_{ij}+d_{jk}\quad \land\quad d_{jk}\le d_{ij}+d_{ik} \\
  &\Rightarrow& d_{ik}- d_{jk}\le d_{ij}\quad \land\quad d_{jk}- d_{ik}\le d_{ij} \\
  &\Rightarrow& |d_{ik}-d_{jk}|\le d_{ij};
\end{eqnarray*}
since these inequalities are valid for each $k$, by ($\ast$) we have:
  \[\|\rho(x_i)-\rho(x_j)\|_\infty\le \max_k d_{ij} = d_{ij}, \quad (\dag) \]
  where the last equality follows because $d_{ij}$ does not depend on $k$. Now we note that the maximum of $|d_{ik}-d_{jk}|$ over $k$ must exceed the value of the same expression when either of the terms $d_{ik}$ or $d_{jk}$ is zero, i.e.~when $k\in\{i,j\}$, since, when $k=i$, then $|d_{ik}-d_{jk}|=|d_{ii}-d_{ji}|=d_{ij}$, and the same holds when $k=j$. Hence,
  \[\max\limits_{k\le n}|d_{ik}-d_{jk}|\ge d_{ij}.\quad(\ddag)\]
By ($\ast$), ($\dag$) and ($\ddag$), we finally have:
\[\forall i<j\le n\quad \|\rho(x_i)-\rho(x_j)\|_\infty=d_{ij}\]
as claimed. 
\end{proof}

We remark that Thm.~\ref{thm:uie} is only applicable when $\mathsf{D}(X)$ is a distance matrix, which corresponds to the case of a graph $G$ edge-weighed by $d$ being a complete graph. We address the more general case of any (connected) simple graph $G=(V,E)$, corresponding to a partially defined distance matrix, by completing the matrix using the shortest path metric (this distance matrix completion method was used for the Isomap heuristic, see \cite{tenenbaum_00,isomapdg} and Sect.~\ref{s:isomap}):
\begin{equation}
  \forall \{i,j\}\not\in E \quad d_{ij} = \mathsf{shortest\_path\_length}_G(i,j). \label{shpathmetric}
\end{equation}
In practice, we can compute the lengths of all shortest paths in $G$ by using the Floyd-Warshall algorithm, which runs in $O(n^3)$ time (but in practice it is very fast). 

This method yields a realization of $G$ in $\ell_\infty^n$, which is a high-dimensional embedding. It is necessary to post-process it using dimensional reduction techniques (see Sect.~\ref{s:dimred}). 

\subsubsection{Multidimensional scaling}
\label{s:mds}
The literature on Multidimensional Scaling (MDS) is extensive \cite{coxcox,borg_10}, and many variants exist. The basic version, called {\it classic MDS}, aims at finding an approximate realization of a partial distance matrix. In other words, it is a heuristic solution method for the
\begin{quote}
  {\sc Euclidean Distance Matrix Completion Problem} (EDMCP). Given a simple undirected graph $G=(V,E)$ with an edge weight function $w:E\to\mathbb{R}_+$, determine whether there exists an integer $K>0$ and a realization $x:V\to\mathbb{R}^K$ such that Eq.~\eqref{eq:dgp2} holds.
\end{quote}

The difference between EDMCP and DGP may appear diminutive, but it is in fact very important. In the DGP the integer $K$ is part of the input, whereas in the EDMCP it is part of the output. This has a large effect on worst-case complexity: while the DGP is {\bf NP}-hard even when only an $\varepsilon$-approximate realization is sought \cite[\S 5]{saxe79}, $\varepsilon$-approximate realizations of EDMCPs can be found in polynomial time by solving an SDP \cite{wolkowicz}. Consider the following matrix:
\begin{equation*}
  \Delta(E,d) = \left\{\begin{array}{ll}
  w_{ij}^2 & \mbox{ if } \{i,j\}\in E \\
  d_{ij} & \mbox{ otherwise,}
  \end{array}\right.
\end{equation*}
where $d=(d_{ij}\;|\;\{i,j\}\not\in E)$ is a vector of decision variables, and $J=I_n-\frac{1}{n}\mathbf{1}\transpose{\mathbf{1}}$, with $\mathbf{1}$ being the all-one vector. Then the following formulation is valid for the EDMCP:
\begin{equation}
  \left. \begin{array}{rll}
    \min\limits_{d,T,G} && \mathbf{1}\bullet T  \\
     -T &\le& G  + \frac{1}{2} J\, \Delta(E,d)\, J \le T \\
     G &\succeq& 0,
  \end{array}\right\} \label{eq:sdpedmcp}
\end{equation}
where $\mathbf{1}$ is the $n\times n$ all-one matrix.

\begin{theorem}
  The SDP in Eq.~\eqref{eq:sdpedmcp} correctly models the EDMCP.
  \label{thm:sdpedmcp}
\end{theorem}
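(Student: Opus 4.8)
The plan is to read ``correctly models'' as the assertion that the EDMCP instance $(G,w)$ admits a Euclidean realization if and only if the SDP \eqref{eq:sdpedmcp} has optimal value $0$, and that in the affirmative case an optimal matrix $G$ is, up to the usual decomposition, the Gram matrix of such a realization. The whole argument hinges on recognizing that the objective, once the $T$ variables are understood, is simply an entrywise $\ell_1$ penalty measuring the failure of $G$ to equal $-\frac12 J\,\Delta(E,d)\,J$, combined with the classical Schoenberg characterization of squared Euclidean distance matrices.

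First I would dispose of the $T$ variables. The constraint $-T \le G + \frac12 J\Delta(E,d)J \le T$ forces $T \ge |M|$ entrywise, where $M := G + \frac12 J\Delta(E,d)J$; since the objective $\mathbf1\bullet T = \sum_{i,j}T_{ij}$ is increasing in each entry of $T$ and $T\ge 0$ is implied, the optimal $T$ is exactly $T^\ast_{ij}=|M_{ij}|$ and the optimal value equals $\sum_{i,j}|M_{ij}|\ge 0$. Hence the optimum is $0$ precisely when $M=0$ is attainable, i.e.~when there exist a completion vector $d$ and a matrix $G\succeq0$ with $G = -\frac12 J\Delta(E,d)J$. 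Because the objective is always nonnegative, this is a genuine ``value $=0$'' statement, so I need not worry about infimum-versus-minimum subtleties: value $0$ is achievable or it is not.

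Next I would invoke the Schoenberg / Young--Householder criterion, which I would state and prove as the only nontrivial ingredient: a symmetric matrix $\Delta$ with zero diagonal is a squared Euclidean distance matrix (there exist points $x_1,\dots,x_n$ in some $\mathbb{R}^K$ with $\Delta_{ij}=\|x_i-x_j\|_2^2$) if and only if $B:=-\frac12 J\Delta J\succeq0$, in which case $B$ is a Gram matrix of centered points and $K=\rank{B}$ suffices. The forward direction writes $\Delta_{ij}=\|x_i\|^2+\|x_j\|^2-2\transpose{x_i}x_j$ and uses $J\mathbf1=0$ to annihilate the rank-one ``diagonal'' terms, leaving $B=(JX)\transpose{(JX)}\succeq0$. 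The converse factors $B=Y\transpose{Y}$, takes the rows of $Y$ as points, and checks the key identity
\[ B_{ii}+B_{jj}-2B_{ij} = -\frac12\transpose{(e_i-e_j)}\,\Delta\,(e_i-e_j) = \Delta_{ij}, \]
where the first equality uses $J(e_i-e_j)=e_i-e_j$ (valid since $e_i-e_j\perp\mathbf1$) and the second uses $\Delta_{ii}=\Delta_{jj}=0$. This identity is exactly why the diagonal of $\Delta(E,d)$ must be held fixed at $0$, so that only the off-diagonal non-edge entries are free variables $d_{ij}$; without zero diagonal the recovered distances would not match.

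Assembling the pieces gives the chain of equivalences: the optimum of \eqref{eq:sdpedmcp} is $0$ $\iff$ some completion $d$ makes $-\frac12 J\Delta(E,d)J\succeq0$ (so that we may take $G=-\frac12 J\Delta(E,d)J$) $\iff$ $\Delta(E,d)$ is a squared Euclidean distance matrix whose edge entries are the prescribed $w_{ij}^2$ $\iff$ the partial matrix can be completed to an EDM, which is exactly a ``yes'' answer to the EDMCP. Finally, to recover an actual realization from an optimal $G$, I would reuse the spectral decomposition argument given after \eqref{sdpfeas}: write $G=P\Lambda\transpose{P}$ with $\Lambda\ge0$ and set $Y=P\sqrt{\Lambda}$, so the rows of $Y$ realize the completed distances in dimension $K=\rank{G}$ --- legitimate here precisely because in the EDMCP $K$ is part of the output rather than the input. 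The main obstacle, and the only place real work is needed, is the clean proof of the Schoenberg biconditional, especially the centering identity above and the bookkeeping that the fixed zero diagonal (not a free variable) is what ties $B$ back to $\Delta$.
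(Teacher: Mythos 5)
Your proof is correct and follows the same skeleton as the paper's: collapse the $T$-sandwich into the entrywise $\ell_1$ norm of $G+\frac{1}{2}J\,\Delta(E,d)\,J$, invoke the Gram/psd correspondence, and recover a realization by spectral decomposition of the optimal $G$. Two of your deviations are worth noting. First, where the paper merely cites the Gram--EDM relation Eq.~\eqref{eq:gramdist}, you state and prove the Schoenberg biconditional in both directions; moreover your centered form $B=-\frac{1}{2}J\Delta J$ is the correct one, whereas Eq.~\eqref{eq:gramdist} as printed, $G=I_n-\frac{1}{2}JD^2J$, carries a stray $I_n$ that your derivation silently repairs. Second, your zero-diagonal bookkeeping is not pedantry but a genuine soundness point the paper glosses over: as $\Delta(E,d)$ is literally defined, the diagonal entries are free variables, and since $J\mathbf{1}=0$ the matrix $J\Delta J$ is blind to perturbations $\Delta_{ij}\mapsto\Delta_{ij}+\frac{1}{2}(\delta_i+\delta_j)$; allowing negative $\delta_i$ would then let a NO instance reach objective value $0$ (e.g.\ the triangle $w_{12}=w_{13}=1$, $w_{23}=10$ with $\delta_1=0$, $\delta_2=\delta_3=-98$ turns the off-diagonal part into the legitimate squared EDM $(50,50,198)$), so pinning $\Delta_{ii}=0$ is exactly what makes your final chain of equivalences valid. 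One shared soft spot: both you and the paper identify ``optimal value $0$'' with attainment, but an SDP infimum of $0$ need not be attained --- the affine set $\{\Delta(E,d)\}$ can approach the EDM cone asymptotically without meeting it --- so your remark that ``value $0$ is achievable or it is not'' dismisses this a little quickly, though the YES direction (a true completion certifies value $0$) is of course unaffected.
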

By ``correctly models'' we mean that the solution of the EDMCP can be obtained in polynomial time from the solution of the SDP in Eq.~\eqref{eq:sdpedmcp}.
\begin{proof}
  First, we remark that, given a realization $x:V\to\mathbb{R}^n$, its Gram matrix is $G=x\transpose{x}$, and its squared Euclidean distance matrix (EDM) is
  \[D^2=(\|x_u-x_v\|_2^2\;|\;u\le n\land v\le n)\in\mathbb{R}^{n\times n}.\]
  Next, we recall that
  \begin{equation}
    G = I_n - \frac{1}{2} J D^2 J \label{eq:gramdist}
  \end{equation}
  by \cite{dattorro} (after \cite{schoenberg} --- see \cite[\S 7]{six} for a direct proof). Now we note that minimizing $\mathbf{1}\bullet T$ subject to $-T \le G + \frac{1}{2} J \Delta(E,d) J \le T$ is an exact reformulation of
  \[\min_{G,d} \|G - (-1/2) J \Delta(E,d) J)\|_1, \quad (\ast)\]
  since $\mathbf{1}\bullet T =\sum_{i,j} T_{ij}$, and $T$ is used to ``sandwich'' the argument of the $\ell_1$ norm in ($\ast$).
  
  We also recall another basic fact of linear algebra: a matrix is Gram if and only if it is psd: hence, requiring $G\succeq 0$ forces $G$ to be a Gram matrix. Consequently, if the optimal objective function value of Eq.~\eqref{eq:sdpedmcp} is zero with corresponding solution $d^\ast,T^\ast,G^\ast$, then $\trace{T^\ast}=0\Rightarrow T^\ast=0\Rightarrow (\ast)$. Moreover, $G^\ast$ is a Gram matrix, so $\Delta(E,d^\ast)$ is its corresponding EDM. Lastly, the realization $x^\ast$ corresponding to the Gram matrix $G^\ast$ can be obtained by spectral decomposition of $G^\ast=P\Lambda\transpose{P}$, which yields $x^\ast=P\sqrt{\Lambda}$: this implies that the EDMCP instance is YES. Otherwise $T^\ast\not=0$, which means that the EDMCP instance is NO (otherwise there would be a contradiction on $\trace{T^\ast}>0$ being optimal). 
\end{proof}
The practically useful corollary to Thm.~\eqref{thm:sdpedmcp} is that solving Eq.~\eqref{eq:sdpedmcp} provides an approximate solution $x^\ast$ even if $\Delta(E,d)$ cannot be completed to an EDM.

Classic MDS is an efficient heuristic method for finding an approximate realization of a partial distance matrix $\Delta(E,d)$. It works as follows:
\begin{enumerate}
\item complete $\Delta(E,d)$ to an approximate EDM $\tilde{D}^2$ using the shortest-path metric (Eq.~\eqref{shpathmetric}); \label{mds1}
\item let $\tilde{G}=I_n - \frac{1}{n} J\tilde{D}^2J$;\label{mds2}
\item let $P\tilde{\Lambda}\transpose{P}$ be the spectral decomposition of $\tilde{G}$;\label{mds3}
\item if $\tilde{\Lambda}\ge 0$ then, by Eq.~\eqref{eq:gramdist}, $\tilde{D}^2$ is a EDM, with corresponding (exact) realization $\tilde{x}=P\sqrt{\Lambda}$;\label{mds4}
\item otherwise, let $\Lambda^+=\diag{(\max(\lambda,0)\;|\;\lambda\in\Lambda)}$: then $\tilde{x}=P\sqrt{\Lambda^+}$ is an approximate realization of $\tilde{D}^2$.\label{mds5}
\end{enumerate}
Note that both Eq.\eqref{eq:sdpedmcp} and classic MDS determine $K$ as part of the output, i.e.~$K$ is the rank of the realization (respectively $x^\ast$ and $\tilde{x}$).

\section{Dimensional reduction techniques}
\label{s:dimred}
Dimensional reduction techniques reduce the dimensionality of a set of vectors according to different criteria, which may be heuristic, or give some (possibly probabilistic) guarantee of keeping some quantity approximately invariant. They are necessary in order to make many of the methods in Sect.~\ref{s:dgpsol} useful in practice.

\subsection{Principal component analysis}
\label{s:pca}
Principal Component Analysis (PCA) is one of the foremost dimensional reduction techniques. It is ascribed to Harold Hotelling\footnote{A young and unknown George Dantzig had just finished his presentation of LP to an audience of ``big shots'', including Koopmans and Von Neumann. Harold Hotelling raised his hand, and stated: ``but we all know that the world is nonlinear!'', thereby obliterating the simplex method as a mathematical curiosity. Luckily, Von Neumann answered on Dantzig’s behalf and in his defence \cite{dantzig_remin}.} \cite{hotelling}. 

Consider an $n\times m$ matrix $X$ consisting of $n$ data row vectors in $\mathbb{R}^m$, and let $K<m$ be a given integer. We want to find a change of coordinates for $X$ such that the first component has largest variance over the transformed vectors, the second component has second-largest variance, and so on, until the $K$-th component. The other components can be neglected, as the variance of the data in those directions is low.

The usual geometric interpretation of PCA is to take the smallest enclosing ellipsoid $\mathcal{E}$ for $X$: then the required coordinate change maps component $1$ to the line parallel to the largest radius of $\mathcal{E}$, component $2$ to the line parallel to the second-largest radius of $\mathcal{E}$, and so on until component $K$ (see Fig.~\ref{f:pca}).
\begin{figure}[!ht]
  \begin{center}
    \includegraphics[width=7cm]{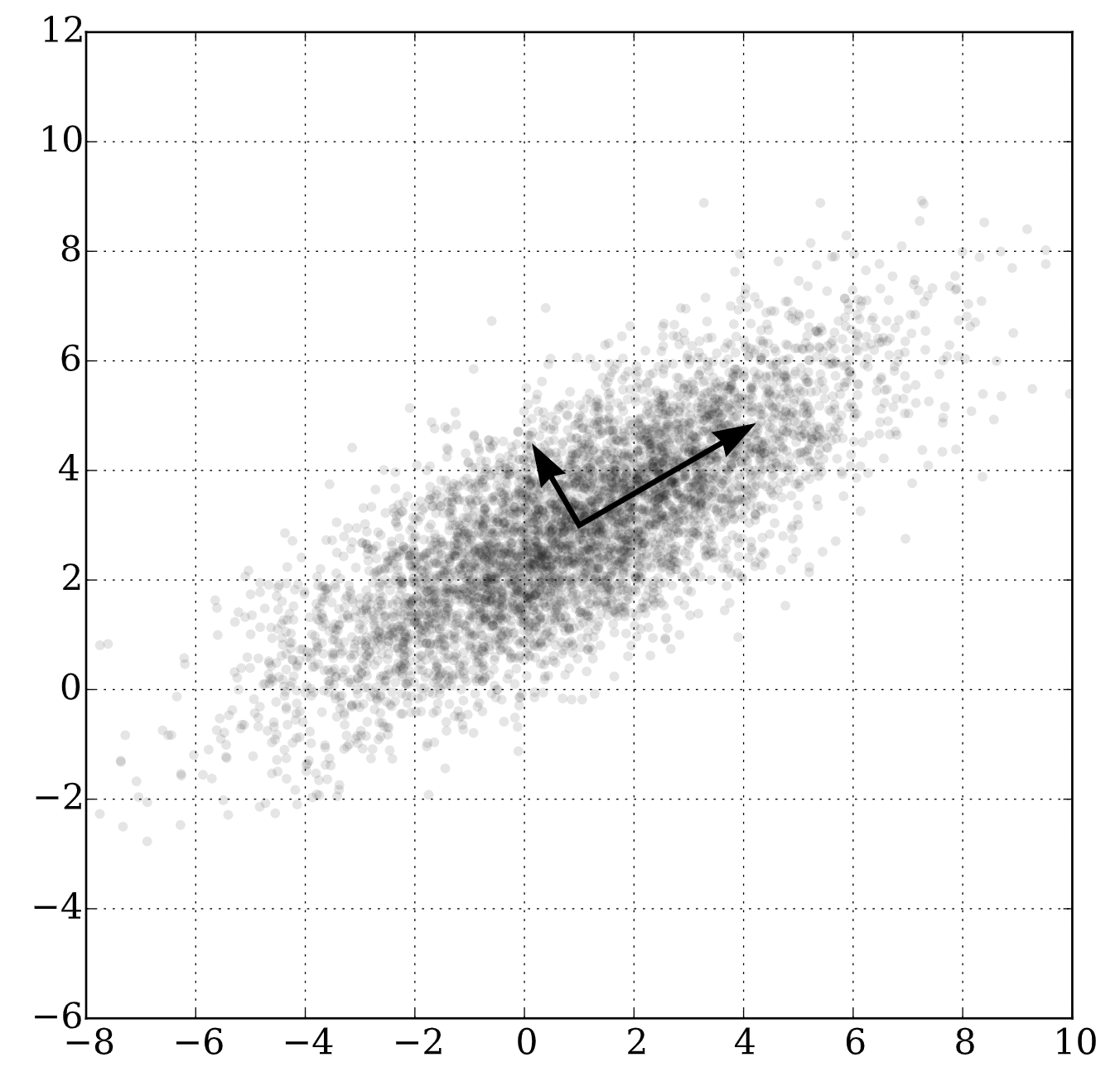}
  \end{center}
  \caption{Geometric interpretation of PCA (image from \cite{wikipedia_pca}).}
  \label{f:pca}
\end{figure}
The statistical interpretation of PCA looks for the change of coordinates which makes the data vectors be uncorrelated in their components. Fig.~\ref{f:pca} should give an intuitive idea about why this interpretation corresponds with the ellipsoid of the geometric interpretation. The cartesian coordinates in Fig.~\ref{f:pca} are certainly correlated, while the rotated coordinates look far less correlated. The zero correlation situation corresponds to a perfect ellipsoid. An ellipsoid is described by the equation $\sum_{j\le n} \big(\frac{x_j}{r_j}\big)^2 = 1$, which has no mixed terms $x_ix_j$ contributing to correlation. Both interpretations are well (and formally) argued in \cite[\S 2.1]{gpca}. 

The interpretation we give here is motivated by DG, and related to MDS (Sect.~\ref{s:mds}). PCA can be seen as a modification of MDS which only takes into account the $K$ (nonnegative) principal components. Instead of $\Lambda^+$ (step \ref{mds5} of the MDS algorithm), PCA uses a different diagonal matrix $\Lambda^{\mathsf{pca}}$: the $i$-th diagonal component is
\begin{equation}
  \Lambda^{\mathsf{pca}}_{ii}=\left\{\begin{array}{ll} \max(\Lambda_{ii},0) & \mbox{ if } i\le K \\ 0 & \mbox{ otherwise,}\end{array}\right.
  \label{eq:pca}
\end{equation}
where $P\Lambda\transpose{P}$ is the spectral decomposition of $\tilde{G}$. In this interpretation, when given a partial distance matrix and the integer $K$ as input, PCA can be used as an approximate solution method for the DGP. 

On the other hand, the PCA algorithm is most usually considered as a method for dimensionality reduction, so it has a data matrix $X$ and an integer $K$ as input. It is as follows:
\begin{enumerate}
\item let $\tilde{G} = X\transpose{X}$ be the $n\times n$ Gram matrix of the data matrix $X$;
\item let $P\tilde{\Lambda}\transpose{P}$ be the spectral decomposition of $\tilde{G}$;
\item return $\tilde{x}=P\sqrt{\Lambda^{\mathsf{pca}}}$.
\end{enumerate}
Then $\tilde{x}$ is an $n\times K$ matrix, where $K<n$. The $i$-th row vector in $\tilde{x}$ is a dimensionally reduced representation of the $i$-th row vector in $X$.


There is an extensive literature on PCA, ranging over many research papers, dedicated monographs and textbooks \cite{wikipedia_pca,jolliffe_10,gpca}. Among the variants and extensions, see \cite{cca,saerens_pca,bach_pca,allen_pca,dey_pca}.

\subsubsection{Isomap}
\label{s:isomap}
One of the most interesting applications of PCA is possibly the Isomap algorithm \cite{tenenbaum_00}, already mentioned above in Sect.~\ref{s:uie}, which is able to use PCA in order to perform a nonlinear dimensional reduction from the original dimension $m$ to a given target dimension $K$, as follows.
\begin{enumerate}
\item Form a connected graph $H=(V,E)$ with the column indices $1,\ldots,n$ of $X$ as vertex set $V$: determine a threshold value $\tau$ such that, for each column vector $x_i$ in $X$ (for $i\le n$), and for each $x_j$ in $X$ such that $\|x_i-x_j\|_2\le\tau$, the edge $\{i,j\}$ is in the edge set $E$; the graph $H$ should be as sparse as possible but also connected.
\item Complete $H$ using the shortest path metric (Eq.~\eqref{shpathmetric}).
\item Use PCA in the MDS interpretation mentioned above: interpret the completion of $(V,E)$ as a metric space, construct its (approximate) EDM $\tilde{D}$, compute the corresponding (approximate) Gram matrix $\tilde{G}$, compute the spectral decomposition of $\tilde{G}$, replace its diagonal eigenvalue matrix $\Lambda$ as in Eq.~\eqref{eq:pca}, and return the corresponding $K$-dimensional vectors.
\end{enumerate}
Intuitively, Isomap works well because in many practical situations where a set $X$ of points in $\mathbb{R}^m$ are close to a (lower) $K$-dimensional manifold, the shortest path metric is likely to be a better estimation of the Euclidean distance in $\mathbb{R}^K$ than the Euclidean distance in $\mathbb{R}^m$, see \cite[Fig.~3]{tenenbaum_00}.

\subsection{Barvinok's naive algorithm}
\label{s:barvinok}
By Eq.~\eqref{sdpfeas}, we can solve an SDP relaxation of the DGP and obtain an $n\times n$ psd matrix solution $\bar{X}$ which, in general, will not have rank $K$ (i.e., it will not yield an $n\times K$ realization matrix, but rather an $n\times n$ one). In this section we shall derive a dimensionality reduction algorithm to obtain an approximation of $\bar{X}$ which has the correct rank $K$.

\subsubsection{Quadratic Programming feasibility}
\label{s:qpsys}
Barvinok's naive algorithm \cite[\S 5.3]{barvinok2} is a probabilistic algorithm which can find an approximate vector solution $x'\in\mathbb{R}^n$ to a system of quadratic equations
\begin{equation}
  \forall i\le m \quad \transpose{x} Q^i x = a_i, \label{eq:qpsys}
\end{equation}
where the $Q^i$ are $n\times n$ symmetric matrices, $a\in\mathbb{R}^m$, $x\in\mathbb{R}^n$, and $m$ is polynomial in $n$. The analysis of this algorithm provides a probabilistic bound on the maximum distance that $x'$ can have from the set of solutions of Eq.~\eqref{eq:qpsys}. Thereafter, one can run a local NLP solver with $x'$ as a starting point, and obtain a hopefully good (approximate) solution to Eq.~\eqref{eq:qpsys}. We note that this algorithm is still not immediately applicable to the our setting where $K$ could be different from $1$: we shall address this issue in Sect.~\ref{s:bvknK}.

Barvinok's naive algorithm solves an SDP relaxation of Eq.~\eqref{eq:qpsys}, and then retrieves a certain randomized vector from the solution:
\begin{enumerate}
\item form the SDP relaxation
  \begin{equation}
    \forall i\le m\;(Q^i\bullet X = a_i)\land X\succeq 0\label{eq:qpsdp}
  \end{equation}
  of Eq.~\eqref{eq:qpsys} and solve it to obtain $\bar{X}\in\mathbb{R}^{n\times n}$;
\item let $T=\sqrt{\bar{X}}$, which is a real matrix since $\bar{X}\succeq 0$ ($T$ can be obtained by spectral decomposition, i.e.~$\bar{X}=P\Lambda\transpose{P}$ and $T=P\sqrt{\Lambda}$);
\item let $y$ be a vector sampled from the multivariate normal distribution $\mathsf{N}^n(0,1)$;\label{bvkstep3}
\item compute and return $x'=Ty$.
\end{enumerate}
The analysis provided in \cite{barvinok2} shows that $\exists c>0$ and an integer $n_0\in\mathbb{N}$ such that $\forall n\ge n_0$
\begin{equation}
  \mathsf{P}\left(\forall i\le m\quad \dist{x',\mathcal{X}_{i}}\le c\,\sqrt{\|\bar{X}\|_2\ln n}\right)\ge 0.9. \label{eq:bvk1}
\end{equation}
In Eq.~\eqref{eq:bvk1}, $\mathsf{P}(\cdot)$ denotes the probability of an event,
\[\dist{b,B}=\inf_{\beta\in B} \|b-\beta\|_2\]
is the Euclidean distance between the point $b$ and the set $B$, and $c$ is a constant that only depends on $\log_n m$. We note that the term $\sqrt{\|\bar{X}\|_2}$ in Eq.~\eqref{eq:bvk1} arises from $T$ being a factor of $\bar{X}$. We note also that $0.9$ follows from assigning some arbitrary value to some parameter --- i.e.~$0.9$ can be increased as long as the problem size is large enough.

For cases of Eq.~\eqref{eq:qpsys} where one of the quadratic equations is $\|x\|_2^2=1$ (namely, the solutions of Eq.~\eqref{eq:qpsys} must belong to the unit sphere), it is noted in \cite[Eg.~5.5]{barvinok2} that, if $\bar{X}$ is ``sufficiently generic'', then it can be argued that $\|\bar{X}\|_2=O(1/n)$, which implies that the bounding function $c\sqrt{\bar{X}_2\ln n}\to 0$ as $n\to\infty$. This, in turn, means that $x'$ converges towards a feasible solution of the original problem in the limit. 

\subsubsection{Concentration of measure}
\label{s:concmeas}
The term $\ln n$ in Eq.~\eqref{eq:bvk1} arises from a phenomenon of high-dimensional geometry called ``concentration of measure''. 

We recall that a function $f:\mathcal{X}\to\mathbb{R}$ is {\it Lipschitz} if there is a constant $M>0$ s.t.~for any $x,y\in \mathcal{X}$ we have $|f(x)-f(y)|<M\|x-y\|_2$. A measure space $(\mathcal{X},\mu)$ has the {\it concentration of measure} property if for any Lipschitz function $f$, there are constants $C,c>0$ such that:
\begin{equation}
  \forall \varepsilon>0 \quad \mathsf{P}(|f(x)-\mathsf{E}_\mu(f)|>\varepsilon\;|\;x\in \mathcal{X})\le C\, e^{-c\varepsilon^2} \label{eq:concmeas}
\end{equation}
where $\mathsf{E}_\mu(f)=\int_{\mathcal{X}} f(x)d\mu$. In other words, $\mathcal{X}$ has measure concentration if for any Lipschitz function $f$, its discrepancy from its mean value is small with arbitrarily high probability. It turns out that the Euclidean space $\mathbb{R}^n$ with the Gaussian density measure $\phi(x)=(2\pi)^{n/2} e^{-\|x\|_2^2/2}$ has measure concentration \cite[\S 5.3]{barvinokcc}.

Measure concentration is interesting in view of applications since, given any large enough closed subset $A$ of $\mathcal{X}$, its $\varepsilon$-neighbourhood
\begin{equation}
  A(\varepsilon)=\{x\in \mathcal{X} \;|\; \dist{x,A}\le\varepsilon\} \label{eq:bvk2}
\end{equation}
contains almost the whole measure of $\mathcal{X}$. More precisely, if $(\mathcal{X},\mu)$ has measure concentration and $A\subset \mathcal{X}$ is closed, for any $p\in(0,1)$ there is a $\varepsilon_0(p)>0$ such that \cite[Prop.~2]{barvinok_orl}:
\begin{equation}
  \forall \varepsilon\ge \varepsilon_0(p)\quad \mu(A(\varepsilon)) > 1-p.\label{eq:bvk3}
\end{equation}
Eq.~\eqref{eq:bvk3} is useful for applications because it defines a way to analyse probabilistic algorithms. For a random point sampled in $(\mathcal{X},\mu)$ that happens to be in $A$ on average, Eq.~\eqref{eq:bvk3} ensures that it is unlikely that it should be far from $A$. This can be used to bound errors, as Barvinok did with his naive algorithm. Concentration of measure is fundamental in data science, insofar as it may provide algorithmic analyses to the effect that some approximation errors decrease in function of the increasing instance size. 

\subsubsection{Analysis of Barvinok's algorithm}
We sketch the main lines of the analysis of Barvinok's algorithm (see \cite[Thm.~5.4]{barvinok} or \cite[\S 3.2]{barvinok_orl} for a more detailed proof). We let $\mathcal{X}=\mathbb{R}^n$ and $\mu(x)=\phi(x)$ be the Gaussian density measure. It is easy to show that
\[\mathsf{E}_\mu(\transpose{x}Q^ix\;|\;x\in \mathcal{X})=\trace{Q^i}\]
for each $i\le m$, and, from this, that given the factorization $\bar{X}=T\transpose{T}$, that
\[\mathsf{E}_\mu(\transpose{x}\transpose{T} Q^i\, Tx\;|\;x\in \mathcal{X})=\trace{\transpose{T}Q^i\, T}=\trace{Q^i\bar{X}}=Q^i\bullet \bar{X}=a_i.\]
This shows that, for any $y\sim\mathsf{N}^n(0,1)$, the average of $\transpose{y}\transpose{T} Q^i\, Ty$ is $a_i$.

The analysis then goes on to show that, for some $y\sim\mathsf{N}^n(0,1)$, it is unlikely that $\transpose{y}\transpose{T} Q^i\, Ty$ should be far from $a_i$. It achieves this result by defining the sets $A_i^+ = \{x\in\mathbb{R}^n \;|\; \transpose{x}Q^i x \ge a_i\}$, $A_i^- = \{x\in\mathbb{R}^n \;|\; \transpose{x}Q^i x \le a_i\}$, and their respective neighbourhoods $A_i^+(\varepsilon)$, $A_i^-(\varepsilon)$. Using a technical lemma \cite[Lemma 4]{barvinok_orl} it is possible to apply Eq.~\eqref{eq:bvk3} to $A_i^+(\varepsilon)$ and $A_i^-(\varepsilon)$ to argue for concentration of measure. Applying the union bound it can be shown that their intersection $A_i(\varepsilon)$ is the neighbourhood of $A_i = \{x\in\mathbb{R}^n \;|\; \transpose{x}Q^ix = a_i\}$. Another application of the union bound to all the sets $A_i(\varepsilon)$ yields the result \cite[Thm.~5]{barvinok_orl}.

We note that concentration of measure proofs often have this structure: (a) prove that a certain event holds on average; (b) prove that the discrepancy from average gets smaller and/or more unlikely with increasing size. Usually proving (a) is easier than proving (b). 

\subsubsection{Applicability to the DGP}
\label{s:bvknK}
The issue with trying to apply Barvinok's naive algorithm to the DGP is that we should always assume $K=1$ by Eq.~\eqref{eq:qpsys}. To circumvent this issue, we might represent an $n\times K$ realization matrix as a vector in $\mathbb{R}^{nK}$ by stacking its columns (or concatenating its rows). This, on the other hand, would require solving SDPs with $nK\times nK$ matrices, which is prohibitive because of size.

Luckily, Barvinok's naive algorithm can be very easily extended to arbitrary values of $K$. We replace Step \ref{bvkstep3} by:
\begin{itemize}
  \item[3{\it b}.] let $y$ be an $n\times K$ matrix sampled from $\mathsf{N}^{n\times K}(0,1)$.\label{bvkstep3b}
\end{itemize}
The corresponding analysis needs some technical changes \cite{barvinok_orl}, but the overall structure is the same as the case $K=1$. The obtained bound replaces $\sqrt{\ln n}$ in Eq.~\eqref{eq:bvk1} with $\sqrt{\ln nK}$.

In the DGP case, the special structure of the matrices $Q^i$ (for $i$ ranging over the edge set $E$) makes it possible to remove the factor $K$, so we retrieve the exact bound of Eq.~\eqref{eq:bvk1}. As noted in Sect.~\ref{s:qpsys}, if the DGP instance is on a sphere \cite{dgpsphere}, this means that $x'=Ty$ converges to an exact realization with probability 1 in the limit of $n\to\infty$. Similar bounds to Eq.~\eqref{eq:bvk1} were also derived for the iDGP case \cite{barvinok_orl}.

Barvinok also described concentration of measure based techniques for finding low-ranking solutions solutions of the SDP in Eq.~\eqref{eq:qpsdp} (see \cite{barvinok} and \cite[\S 6.2]{barvinokcc}), but these do not allow the user to specify an arbitrary rank $K$, so they only apply to the EDMCP.

\subsection{Random projections}
\label{s:rp}
{\it Random projections} are another dimensionality reduction technique exploiting high-dimensional geometry properties and, in particular, the concentration of measure phenomenon (Sect.~\ref{s:concmeas}). They are more general than Barvinok's naive algorithm (Sect.~\ref{s:barvinok}) in that they apply to sets of vectors in some high-dimensional Euclidean space $\mathbb{R}^n$ (with $n\gg 1$). These sets are usually finite and growing polynomially with instance sizes \cite{vempala}, but they may also be infinite \cite{woodruff}, in which case the technical name used is {\it subspace embeddings}.


\subsubsection{The Johnson-Lindenstrauss Lemma}
\label{s:jll}
The foremost result in RPs is the celebrated Johnson-Lindenstrauss Lemma (JLL) \cite{jllemma}. For a set of vectors $\mathcal{X}\subset\mathbb{R}^n$ with $|\mathcal{X}|=\ell$, and an $\varepsilon\in(0,1)$ there is a $k=O(\frac{1}{\varepsilon^2}\ln\ell)$ and a mapping $f:\mathcal{X}\to\mathbb{R}^k$ such that:
\begin{equation}
  \forall x,y\in\mathcal{X} \quad (1-\varepsilon)\|x-y\|_2\le\|f(x)-f(y)\|_2\le (1+\varepsilon)\|x-y\|_2. \label{jll}
\end{equation}
The proof of this result \cite[Lemma 1]{jllemma} is probabilistic, and show that an $f$ satisfying Eq.~\eqref{jll} exists with some nonzero probability.

Later and more modern proofs (e.g.~\cite{dasgupta}) clearly point out that $f$ can be a linear operator represented by a $k\times n$ matrix $T$, each component of which can be sampled from a {\it subgaussian distribution}. This term refers to a random variable $\mathfrak{V}$ for which there are constants $C,c$ s.t.~for each $t>0$ we have
\[\mathsf{P}(|\mathfrak{V}|>t)\le C\,e^{-ct^2}.\]
In particular, the Gaussian distribution is also subgaussian. Then the probability that a randomly sampled $T$ satisfies Eq.~\eqref{jll} can be shown to exceed $1/\ell$. The union bound then provides an estimate on the number of samplings of $T$ necessary to guarantee Eq.~\eqref{jll} with a desired probability.

Some remarks are in order.
\begin{enumerate}
\item Computationally Eq.~\eqref{jll} is applied to some given data as follows: given a set $\mathcal{X}$ of $\ell$ vectors in $\mathbb{R}^n$ and some error tolerance $\varepsilon\in(0,1)$, find an appropriate $k=O(\frac{1}{\varepsilon^2}\ln\ell)$, construct the $k\times n$ RP $T$ by sampling each of its components from $\mathsf{N}(0,\frac{1}{\sqrt{k}})$, then define the set $T\mathcal{X} = \{Tx\;|\;x\in X\}$. By the JLL, $T\mathcal{X}$ is {\it approximately congruent} to $\mathcal{X}$ in the sense of Eq.~\eqref{jll}; however, $T\mathcal{X}\subset\mathbb{R}^k$ whereas $\mathcal{X}\subset\mathbb{R}^n$, and, typically, $k\ll n$.
\item The computation of an appropriate $k$ would appear to require an estimation of the constant in the expression $O(\frac{1}{\varepsilon^2}\ln\ell)$. Values computed theoretically are often so large as to make the technique useless in practice. As far as we know, this constant has only been done empirically in some cases \cite{venkatasubramanian}, ending up with an estimation of the constant at $1.8$ (which is the value we employed in most of our experiments). 
\item The term $\frac{1}{\sqrt{k}}$ is the standard deviation of the normal distribution from which the components of $T$ must be sampled. It corresponds to a scaling of the vectors in $T\mathcal{X}$ induced by the loss in dimensions (see Thm.~\ref{thm:rpnorm}).
\item In the expression $O(\frac{1}{\varepsilon^2}\ln\ell)$, the logarithmic term is the one that counts for analysis purposes, but in practice $\varepsilon^{-2}$ can be large. Our advice is to take $\varepsilon\in(0.1,0.2)$ and then fine-tune $\varepsilon$ according to results.
\item Surprisingly, the target dimension $k$ is independent of the original dimension $n$.
\item Even if the data in $\mathcal{X}$ is sparse, $T\mathcal{X}$ ends up being dense. Different classes of sparse RPs have been investigated \cite{achlioptas,kane} in order to tackle this issue. A simple algorithm \cite[\S 5.1]{rpqptr} consists in initializing $T$ as the $k\times n$ zero matrix, and then only fill components using samples from $\mathsf{N}(0,\frac{1}{\sqrt{kp}})$ with some given probability $p$. The value of $p$ corresponds to the density of $T$. In general, and empirically, it appears that the larger $n$ and $\ell$ are, the sparser $T$ can be.
\item Obviously, a Euclidean space of dimension $k$ can embed at most $k$ orthogonal vectors. An easy, but surprising corollary of the JLL is that as many as $O(2^k)$ approximately orthogonal vectors can fit in $\mathbb{R}^k$. This follows by \cite[Prop.~1]{jllmor} applied to the standard basis $S=\{e_1,\ldots,e_n\}$ of $\mathbb{R}^n$: we obtain $\forall i<j\le n\;(-\varepsilon\le \langle Te_i,Te_j\rangle-e_ie_j\le\varepsilon)$, which implies $|\langle Te_i,Te_j\rangle|\le\varepsilon$ with $TS\subset\mathbb{R}^k$ and $k=O(\ln n)$. Therefore $TS$ is a set of $O(2^k)$ almost orthogonal vectors in $\mathbb{R}^k$, as claimed. 
\item Typical applications of RPs arise in clustering databases of large files (e.g.~e-mails, images, songs, videos), performing basic tasks in ML (e.g.~k-means \cite{boutsidis2010}, k-nearest neighbors (k-NN) \cite{indyk}, robust learning \cite{vempala06} and more \cite{indyk4}), and approximating large MP formulations (e.g.~LP, QP, see Sect.~\ref{s:jllmp}).
\item The JLL seems to suggest that most of the information encoded by the congruence of a set of vectors can be maintained up to an $\varepsilon$ tolerance in much smaller dimensional spaces. This is not true for sets of vectors in low dimensions. For example, with $n\in\{2,3\}$ a few attempts immediately show that RPs yield sets of projected vectors which are necessarily incongruent with the original vectors. 
\end{enumerate}

In this paper, we do not give a complete proof of the JLL, since many different ones have already been provided in research articles \cite{jllemma,dasgupta,indyk3,chazelle,kane,matousek-jll,micali} and textbooks \cite{vempala,matousekmetric,mathplusplus,vershynin}. We only prove the first part of the proof, namely the easy result that RPs preserve norms on average. This provides an explanation for the variance $1/k$ of the distribution from which the components of $T$ are sampled.
\begin{theorem}
  Let $T$ be a $k\times n$ RP sampled from $\mathsf{N}(0,\frac{1}{\sqrt{k}})$, and $u\in\mathbb{R}^n$; then $\mathsf{E}(\|Tu\|_2^2)=\|u\|_2^2$.
  \label{thm:rpnorm}
\end{theorem}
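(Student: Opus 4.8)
The plan is to expand the squared norm entrywise and push the expectation inside using linearity, after which the independence and the first two moments of the Gaussian entries finish the job almost immediately. First I would write $Tu \in \mathbb{R}^k$ componentwise as $(Tu)_i = \sum_{j\le n} T_{ij} u_j$ for each $i\le k$, so that
\begin{equation*}
  \|Tu\|_2^2 = \sum_{i\le k} \Big(\sum_{j\le n} T_{ij} u_j\Big)^2 = \sum_{i\le k} \sum_{j\le n} \sum_{l\le n} T_{ij} T_{il}\, u_j u_l.
\end{equation*}
Taking expectations and using linearity, the whole computation reduces to evaluating the mixed second moments $\mathsf{E}(T_{ij} T_{il})$ of the entries of $T$.

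Next I would invoke the two defining properties of the sampling scheme: the entries $T_{ij}$ are independent, each drawn from $\mathsf{N}(0,\tfrac{1}{\sqrt{k}})$, so each has mean $0$ and variance $\tfrac1k$. Independence together with zero mean kills every off-diagonal term, $\mathsf{E}(T_{ij}T_{il}) = \mathsf{E}(T_{ij})\mathsf{E}(T_{il}) = 0$ whenever $j\ne l$, while the diagonal terms give the second moment $\mathsf{E}(T_{ij}^2) = \tfrac1k$ (variance plus squared mean). Substituting,
\begin{equation*}
  \mathsf{E}(\|Tu\|_2^2) = \sum_{i\le k} \sum_{j\le n} u_j^2 \,\mathsf{E}(T_{ij}^2) = \sum_{i\le k} \frac{1}{k}\sum_{j\le n} u_j^2 = \sum_{i\le k} \frac{1}{k}\,\|u\|_2^2 = \|u\|_2^2,
\end{equation*}
as claimed.

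There is no real obstacle here; the result is essentially a one-line second-moment computation once the expansion is written out. The only point deserving care is the bookkeeping that makes the dimensions cancel: summing over the $k$ rows contributes a factor $k$, and this is exactly compensated by the variance $\tfrac1k$ of each entry. This is precisely why the standard deviation of the sampling distribution is chosen to be $\tfrac{1}{\sqrt{k}}$ rather than $1$, and it is the remark the theorem is meant to illuminate: the scaling $1/\sqrt{k}$ is the normalization that makes random projections norm-preserving in expectation.
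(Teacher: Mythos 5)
Your proof is correct and takes essentially the same route as the paper's: the paper normalizes to $\|u\|_2=1$ and computes $\mathsf{E}(v_i^2)=\mathsf{Var}(v_i)=\frac{1}{k}$ for each row using additivity of variance over independent entries, which is exactly the cross-term cancellation you perform explicitly by expanding the square. The only cosmetic differences are that you treat general $u$ directly instead of reducing to the unit-norm case by scaling, and that you make the vanishing of the mixed moments $\mathsf{E}(T_{ij}T_{il})$ for $j\neq l$ explicit rather than hiding it inside the variance-of-a-sum identity.
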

\begin{proof}
  We prove the claim for $\|u\|_2=1$; the result will follow by scaling. For each $i\le k$ we define $v_i=\sum_{j\le n}T_{ij} u_j$. Then $\mathsf{E}(v_i)=\mathsf{E}\big(\sum_{j\le m} T_{ij}u_j\big)=\sum_{j\le m}\mathsf{E}(T_{ij})u_j=0$. Moreover,
  \[\mathsf{Var}(v_i)=\sum\limits_{j\le m}\mathsf{Var}(T_{ij}u_j)= \sum\limits_{j\le m}\mathsf{Var}(T_{ij})u_j^2=\sum\limits_{j\le m} \frac{u_j^2}{{k}} = \frac{1}{{k}}\|u\|^2=\frac{1}{{k}}.\]
  Now, $\frac{1}{{k}}=\mathsf{Var}(v_i)=\mathsf{E}(v_i^2-(\mathsf{E}(v_i))^2) = \mathsf{E}(v_i^2-0) = \mathsf{E}(v_i^2)$. Hence
  \[\mathsf{E}(\|Tu\|^2)=\mathsf{E}(\|v\|^2)=\mathsf{E}\big(\sum\limits_{i\le{k}}v_i^2\big)=\sum\limits_{i\le{k}}\mathsf{E}(v_i^2) =\sum\limits_{i\le{k}} \frac{1}{{k}} = 1,\]
  as claimed. 
\end{proof}

\subsubsection{Approximating the identity}
\label{s:rpidapprox}
If $T$ is a $k\times n$ RP where $k=O(\varepsilon^{-2}\ln n)$, both $T\transpose{T}$ and $\transpose{T}T$ have some relation with the identity matrices $I_k$ and $I_n$. This is a lesser known phenomenon, so it is worth discussing it here in some detail.

We look at $T\transpose{T}$ first. By \cite[Cor.~7]{zhang2013} for any $\epsilon\in(0,\frac{1}{2})$ we have
\[\|\frac{1}{n}\,T\,\transpose{T} - I_k\|_2\le\varepsilon\]
with probability at least $1-\delta$ as long as $n\ge \frac{(k+1)\ln (2k/\delta)}{\mathscr{C}\varepsilon^2}$, where $\mathcal{C}\ge\frac{1}{4}$ is a constant.

In Table \ref{t:PPTI} we give values of $\|s\,T\transpose{T} - I_d\|_2$ for $s\in\{1/n,1/d,1\}$, $n\in\{1000,2000,\ldots,10000\}$ and $d=\lceil \ln(n) / \epsilon^2\rceil$ where $\epsilon=0.15$.
\begin{table}[!ht]
  \begin{center}
    \begin{tabular}{|r|rrrrrrrrrr|} \hline
      & \multicolumn{10}{c|}{$n$} \\ 
      $s$ & 1e3 & 2e3 & 3e3 & 4e3 & 5e3 & e3 & 7e3 & 8e3 & 9e3 & 1e4 \\ \hline
      $1/n$ & 9.72 & 7.53 & 6.55 & 5.85 & 5.36 & 5.01 & 4.71 & 4.44 & 4.26 & 4.09 \\
      $1/d$ & 5e1 & 1e2 & 1.5e2 & 2e2 & 2.5e2 & 3e2 & 3.5e2 & 3.9e2 & 4.4e2 & 4.8e2 \\
      $1$ & 2e5 & 4e5 & 6e5 & 8e5 & 1e6 & 1.2e6 & 1.4e6 & 1.6e6 & 1.8e6 & 2e6 \\ \hline
    \end{tabular}
  \end{center}
  \caption{Values of $\|sT\transpose{T}- I_d\|$ in function of $s,n$.\label{t:PPTI}}
\end{table}
  It is clear that the error decreases as the size increases only in the case $s=\frac{1}{n}$. This seems to indicate that the scaling is a key parameter in approximating the identity.

Let us now consider the product $\transpose{T}T$. It turns out that, for each fixed vector $x$ not depending on $T$, the matrix $\transpose{T} T$ behaves like the identity w.r.t.~$x$.
\begin{theorem}
  Given any fixed $x\in\mathbb{R}^n$, $\epsilon\in(0,1)$ and a RP $P\in\mathbb{R}^{d\times n}$, there is a universal constant $\mathcal{C}$ such that
  \begin{equation}
    -\mathbf{1}\varepsilon \le \transpose{T}Tx - x \le \mathbf{1}\varepsilon.\label{eq:PTPI}
  \end{equation}
  with probability at least $1-4e^{\mathcal{C}\epsilon^2d}$.
  \label{thm:PPI}
\end{theorem}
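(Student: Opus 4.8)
The plan is to read off the $p$-th component of the residual as an inner product and then invoke the norm-preservation behaviour of $T$. First I would note that for every index $p\le n$,
\[(\transpose{T}Tx)_p = \transpose{(Te_p)}(Tx)=\langle Te_p,Tx\rangle,\qquad x_p=\langle e_p,x\rangle,\]
where $e_p$ is the $p$-th standard basis vector, so the $p$-th entry of $\transpose{T}Tx-x$ is exactly $\langle Te_p,Tx\rangle-\langle e_p,x\rangle$, the discrepancy between the projected and the true inner product of $e_p$ and $x$. Thus the statement is precisely the claim that an RP approximately preserves inner products, specialised to the pair $(e_p,x)$, which is the same ingredient (\cite[Prop.~1]{jllmor}) already invoked in the remarks following the JLL.

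Next I would reduce inner products to norms by polarization. Writing $\langle a,b\rangle=\tfrac14(\|a+b\|_2^2-\|a-b\|_2^2)$ for both $(Te_p,Tx)$ and $(e_p,x)$ and using linearity of $T$, I get
\[\langle Te_p,Tx\rangle-\langle e_p,x\rangle=\tfrac14\Big[\big(\|T(e_p+x)\|_2^2-\|e_p+x\|_2^2\big)-\big(\|T(e_p-x)\|_2^2-\|e_p-x\|_2^2\big)\Big].\]
This expresses the residual entry through the two norm-distortion quantities $\|Tw\|_2^2-\|w\|_2^2$ for $w=e_p+x$ and $w=e_p-x$. Thm.~\ref{thm:rpnorm} already shows each of these has mean zero; the missing ingredient is concentration, i.e. the distributional JLL tail bound $\mathsf{P}(\,|\,\|Tw\|_2^2-\|w\|_2^2\,|>\varepsilon\|w\|_2^2)\le 2e^{-c\varepsilon^2 d}$ for a universal $c>0$ and fixed $w$. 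Applying this to $w=e_p\pm x$ and taking a union bound over the two events, both distortions stay within tolerance simultaneously with probability at least $1-4e^{-c\varepsilon^2 d}$; this is exactly where the factor $4$ and the constant $\mathcal{C}=-c$ in the statement originate.

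On that event the polarization identity gives $|\langle Te_p,Tx\rangle-\langle e_p,x\rangle|\le \tfrac{\varepsilon}{4}(\|e_p+x\|_2^2+\|e_p-x\|_2^2)=\tfrac{\varepsilon}{2}(1+\|x\|_2^2)$ by the parallelogram law and $\|e_p\|_2=1$. Absorbing the harmless factor $\tfrac12(1+\|x\|_2^2)$ into a rescaled tolerance (equivalently into $\mathcal{C}$) yields the clean two-sided bound $-\varepsilon\le(\transpose{T}Tx)_p-x_p\le\varepsilon$ for the fixed component $p$.

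Finally, the vector inequality $-\mathbf{1}\varepsilon\le \transpose{T}Tx-x\le\mathbf{1}\varepsilon$ asks for this in all $n$ coordinates at once, which I would obtain by a union bound over $p\le n$. The main obstacle is that this naively degrades the failure probability by a factor $n$, so to preserve an $n$-independent bound of the stated form one must take the projection dimension in the JLL regime $d=\Omega(\varepsilon^{-2}\ln n)$, at which point $n\,e^{-c\varepsilon^2 d}$ is reabsorbed into the same exponential. This interplay between the per-coordinate concentration and the union bound --- rather than any single estimate --- is the real content, the individual tail bound being the standard JLL ingredient I would simply cite.
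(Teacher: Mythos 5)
Your proof is correct, and its skeleton coincides with the paper's: both begin by writing the $i$-th entry of the residual as $\langle Te_i,Tx\rangle-\langle e_i,x\rangle$. From there the paper simply cites \cite[Lemma 3.1]{rpqptr}, which asserts $\langle e_i,x\rangle-\varepsilon\|x\|_2\le\langle Te_i,Tx\rangle\le\langle e_i,x\rangle+\varepsilon\|x\|_2$ with high probability, and stops; you instead re-derive that lemma from first principles via polarization, the sub-Gaussian norm-concentration tail for the two fixed vectors $e_i\pm x$, and a union bound over those two events. This is in fact exactly how the cited lemma is proved, and it correctly locates the origin of the factor $4$ in the stated probability (and tacitly of the sign typo, $e^{\mathcal{C}\epsilon^2d}$ for $e^{-\mathcal{C}\epsilon^2d}$). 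Two places where your write-up is more careful than the paper's are worth noting. First, the dependence on $\|x\|_2$: the paper's displayed inequality carries the factor $\|x\|_2$ while the theorem states a bare $\varepsilon$, so your parallelogram-law constant $\tfrac12(1+\|x\|_2^2)$, absorbed into a rescaled tolerance, is exactly as (il)legitimate as what the paper does implicitly --- polarizing with $x/\|x\|_2$ in place of $x$ would recover the paper's $\varepsilon\|x\|_2$ bound on the nose. Second, simultaneity over coordinates: the paper's proof establishes only the per-coordinate bound and dismisses the rest with ``with arbitrarily high probability'', whereas the vector inequality requires a union bound over $i\le n$, costing a factor $n$ that is reabsorbed into the exponential only because $T$ is taken in the regime $k=O(\varepsilon^{-2}\ln n)$ stipulated at the start of Sect.~\ref{s:rpidapprox}. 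Your making this explicit patches a step the paper glosses over rather than deviating from its argument.
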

\begin{proof}
By definition, for each $i\le n$ we have $x_i = \langle e_i,x\rangle$, where $e_i$ is the $i$-th unit coordinate vector. By elementary linear algebra we have $\langle e_i, \transpose{T} Tx \rangle = \langle Te_i, Tx\rangle$. By \cite[Lemma 3.1]{rpqptr}, for $i\le n$ we have
\[ \langle e_i, x\rangle-\epsilon\|x\|_2\le \langle T e_i, Tx\rangle\le \langle e_i,x\rangle + \epsilon\|x\|\]
with arbitrarily high probability, which implies the result. 
\end{proof}

One might be tempted to infer from Thm.~\ref{thm:PPI} that $\transpose{T}T$ ``behaves like the identity matrix'' (independently of $x$). This is generally false: Thm.~\ref{thm:PPI} only holds for a given (fixed) $x$.

In fact, since $T$ is a $k\times n$ matrix with $k<n$, $\transpose{T} T$ is a square symmetric psd $n\times n$ matrix with rank $k$, hence $n-k$ of its eigenvalues are zero --- and the nonzero eigenvalues need not have value one. On the other hand, $\transpose{T}T$ looks very much like a slightly perturbed identity, on average, as shown in Table \ref{PPItab1}.
\begin{table}[!ht]
  \begin{center}
    {\footnotesize
    \begin{tabular}{|r||r|r|} \hline
      $n$ & diagonal & off-diag \\ \hline
       500 & 1.00085 & 0.00014 \\
      1000 & 1.00069 & 0.00008 \\
      1500 & 0.99991 &-0.00006 \\
      2000 & 1.00194 & 0.00005 \\
      2500 & 0.99920 &-0.00004 \\
      3000 & 0.99986 &-0.00000 \\
      3500 & 1.00044 & 0.00000 \\
      4000 & 0.99693 & 0.00000 \\ \hline
    \end{tabular}
    }
  \end{center}
  \caption{Average values of diagonal and off-diagonal components of $\transpose{T}T$ in function of $n$, where $T$ is a $k\times n$ RP with $k=O(\varepsilon^{-2}\ln n)$ and $\epsilon=0.15$.\label{PPItab1}}
\end{table}

\subsubsection{Using RPs in MP}
\label{s:jllmp}
Random projections have mostly been applied to probabilistic approximation algorithms. By randomly projecting their (vector) input, one can execute algorithms with lower-dimensional vector more efficiently. The approximation guarantee is usually derived from the JLL or similar results.

A line of research about applying RPs to MP fromulations has been started in \cite{jll-dam,jllmor,ipco19,rpqptr}. Whichever algorithm one may choose in order to solve the MP, the RP properties guarantee an approximation on optimality and/or feasibility. Thus, this approach leads to stronger/more robust results with respect to applying RPs to algorithmic input.

Linear and integer feasibility problems (i.e.~LP and MILP formulations without objective function) are investigated in \cite{jll-dam} from a purely theoretical points of view. The effect of RPs on LPs (with nonzero objective) are investigated in \cite{jllmor}, both theoretically and computationally. Specifically, the randomly projected LP formulation is shown to have bounded feasibility error and an approximation guarantee on optimality. The computational results suggest that the range of practical application of this technique starts with relatively small LPs (thousands of variables/constraints). In both \cite{jll-dam,jllmor} we start from a (MI)LP in standard form
\[\mathcal{P}\equiv \min\{\transpose{c}x\;|\;Ax=b\land x\ge 0\land x\in X\}\]
(where $X=\mathbb{R}^n$ or $\mathbb{Z}^n$ respectively), and obtain a randomly projected formulation under the RP $T\sim\mathsf{N}^{n\times k}(0,\frac{1}{\sqrt{k}})$ with the form 
\[T\mathcal{P}\equiv \min\{\transpose{c}x\;|\;TAx=Tb\land x\ge 0\land x\in X\},\]
i.e.~$T$ reduces the number of constraints in $\mathcal{P}$ to $O(\ln n)$, which can therefore be solved more efficiently.

The RP technique in \cite{ipco19,rpqptr} is different, insofar as it targets the number of variables. In \cite{rpqptr} we consider a QP of the form:
\[\mathcal{Q}\equiv\max\{\transpose{x}Qx +\transpose{c}x \;|\;Ax\le b\},\]
where $Q$ is $n\times n$, $c\in\mathbb{R}^n$, $A$ is $m\times n$, and $b\in\mathbb{R}^m$, $x\in\mathbb{R}^n$. This is projected as follows:
\[T\mathcal{Q}\equiv\max\{\transpose{u}\bar{Q}x +\transpose{\bar{c}}u \;|\;\bar{A}u\le b\},\]
where $\bar{Q}=TQ\transpose{T}$ is $k\times k$, $\bar{A}=A\transpose{T}$ is $m\times k$, $\bar{c}=Tc$ is in $\mathbb{R}^k$, and $u\in\mathbb{R}^k$. 
In \cite{ipco19} we consider a QCQP $\mathcal{Q}'$ like $\mathcal{Q}$ but subject to a ball constraint $\|x\|_2\le 1$. In the projected problem $T\mathcal{Q}'$, this is replaced by a ball constraint $\|u\|_2\le 1$. Both \cite{rpqptr,ipco19} are both theoretical and computational. In both cases, the number of variables of the projected problem is $O(\ln n)$.

In applying RPs to MPs, one solves the smaller projected problems in order to obtain an answer concerning the corresponding original problems. In most cases one has to devise a way to retrieve a solution for the original problem using the solution of the projected problem. This may be easy or difficult depending on the structure of the formulation and the nature of the RP. 

\section{Distance instability}
\label{s:distres}
Most of the models and methods in this survey are based on the concept of distance: usually Euclidean, occasionally with other norms. The k-means algorithm (Sect.~\ref{s:kmeans}) is heavily based on Euclidean distances in Step \ref{op2} (p.~\pageref{op2}), where the reassignment of a point to a cluster is carried out based on proximity: in particular, one way to implement Step \ref{op2} is to solve a 1-nearest neighbor problem. The training of an ANN (Sect.~\ref{s:ann}) repeatedly solves a minimum distance subproblem in Eq.~\eqref{eq:trainprob1}. In spectral clustering (Sect.~\ref{s:spclust}) we have a Euclidean norm constraint in Eq.~\eqref{sumn}. All DGP solution methods (Sect.~\ref{s:dgpsol}), with the exception of incidence vectors (Sect.~\ref{s:incvec}), are concerned with distances by definition. PCA (Sect.~\ref{s:pca}), in its interpretation of a modified MDS, can be seen as another solution method for the DGP. Barvinok's naive algorithm (Sect.~\ref{s:barvinok}) is a dimensional reduction method for SDPs the analysis of which is based on a distance bound; moreover, it was successfully applied to the DGP \cite{barvinok_orl}. The RP-based methods discussed in Sect.~\ref{s:rp} have all been derived from the JLL (Sect.~\ref{s:jll}), which is a statement about the Euclidean distance. We also note that the focus of this survey is on typical DS problems, which are usually high-dimensional.

It is therefore absolutely essential that all of these methods should be able to take robust decisions based on comparing distance values computed on pairs of high-dimensional vectors. It turns out, however, that smallest and largest distances $D_{\mathsf{min}},D_{\mathsf{max}}$ of a random point $Z\in\mathbb{R}^n$ to a set of random points $X_1,\ldots,X_\ell\subset\mathbb{R}^n$ are almost equal (and hence, difficult to compare) as $n\to\infty$ under some reasonable conditions. This holds for any distribution used to sample $Z,X_i$. This result, first presented in \cite{beyer} and subsequently discussed in a number of papers \cite{aggarwal1,aggarwal2,distconc,durrant,radovanovic,mansouri,flexer}, appears to jeopardize all of the material presented in this survey, and much more beyond. The phenomenon leading to the result is known as {\it distance instability} and {\it concentration of distances}. 

\subsection{Statement of the result}
Let us look at the exact statement of the distance instability result.

First, we note that the points $Z,X_1,\ldots,X_\ell$ are not given points in $\mathbb{R}^n$ but rather multivariate random variables with $n$ components, so distance instability is a purely statistical statement rather than a geometric one. We consider
\begin{eqnarray*}
  Z &=& (Z_1,\ldots,Z_n) \\
  \forall i\le\ell \quad X_i &=& (X_{i1},\ldots,X_{in}),
\end{eqnarray*}
where $Z_1,\ldots,Z_n$ are random variables with distribution $\mathcal{D}_1$; $X_{11},\ldots,X_{\ell n}$ are random variables with distribution $\mathcal{D}_2$; and all of these random variables are independently distributed.

Secondly, $D_{\mathsf{min}},D_{\mathsf{max}}$ are functions of random variables:
\begin{eqnarray}
  D_{\mathsf{min}} &=& \min \{\dist{Z,X_i} \;|\;i\le\ell\} \label{eq:Dmin}\\
  D_{\mathsf{max}} &=& \max \{\dist{Z,X_i} \;|\;i\le\ell\}, \label{eq:Dmax}
\end{eqnarray}
and are therefore random variables themselves. In the above, $\mathsf{dist}$ denotes a function mapping pairs of points in $\mathbb{R}^n$ to a non-negative real number, which makes distance instability a very general phenomenon. Specifically, $\mathsf{dist}$ need not be a distance at all.

Third, we now label every symbol with an index $m$, which will be used to compute limits for $m\to\infty$: $Z^m$, $X^m$, $\mathcal{D}_1^m$, $\mathcal{D}_2^m$, $D_{\mathsf{min}}^m$, $D_{\mathsf{max}}^m$, $\mathsf{dist}^m$. We shall see that the proof of the distance instability result is wholly syntactical: its steps are very simple and follow from basic statistical results. In particular, we can see $m$ as an abstract parameter under which we shall take limits, and the proof will hold. Since the proof holds independently of the value of $n$, it also holds if we assume that $m=n$, i.e.~if we give $m$ the interpretation of dimensionality of the Euclidean space embedding the points. While this assumption is not necessary for the proof to hold, it may simplify its understanding: $m=n$ makes the proof somewhat less general, but it gives the above indexing a more concrete meaning. Specifically, $Z,X,\mathcal{D},D,\mathsf{dist}$ are points, distributions, extreme distance values and a distance function in dimension $m$, and the limit $m\to\infty$ is a limit taken on increasing dimension.

Fourth, the ``reasonable conditions'' referred to above for the distance instability result to hold are that there is a constant $p>0$ such that
\begin{equation}
  \exists i\le\ell \quad \lim\limits_{m\to\infty} \mathsf{Var}\left(\frac{(\dist{Z^m,X^m_i})^p}{\mathsf{E}((\dist{Z^m,X^m_i})^p)}\right) = 0.\label{eq:distrescond}
\end{equation}
A few remarks on Eq.~\eqref{eq:distrescond} are in order.
\begin{enumerate}[(a)]
\item The existential quantifier simply encodes the fact that the $X_i$ are all identically distributed, so a statement involving variance and expectation of quantities depending on the $X_i$ random variables holds for all $i\le\ell$ if it holds for just one $X_i$.
\item The constant $p$ simply gives more generality to the result, but plays no role whatsoever in the proof; it can be used in order to simplify computations when $\mathsf{dist}$ is an $\ell_p$ norm.
\item The fraction term in Eq.~\eqref{eq:distrescond} measures a spread relative to an expectation. Requiring that the limit of this relative spread goes to zero for increasing dimensions looks like an asymptotic concentration requirement (hence the alternative name ``distance concentration'' for the distance instability phenomenon). Considering the effect of concentration of measure phenomena in high dimensions (Sect.~\ref{s:concmeas}), distance instability might now appear somewhat less surprising. 
\end{enumerate}

With these premises, we can state the distance instability result.
\begin{theorem}
  If $D_{\mathsf{min}}^m$ and $D_{\mathsf{max}}^m$ are as in Eq.~\eqref{eq:Dmin}-\eqref{eq:Dmax} and satisfy Eq.~\eqref{eq:distrescond}, then, for any $\varepsilon>0$, we have
  \begin{equation}
    \lim\limits_{m\to\infty} \mathsf{P}\left(D^m_{\mathsf{max}}\le(1+\varepsilon)D^m_{\mathsf{min}}\right) = 1.\label{eq:distres}
  \end{equation}
  \label{thm:distres}
\end{theorem}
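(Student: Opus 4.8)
The plan is to show that the normalized $p$-th powers of all the distances concentrate at a single common value, so that the smallest and largest among them become indistinguishable as $m\to\infty$. Fix the exponent $p>0$ from Eq.~\eqref{eq:distrescond} and set, for each $i\le\ell$,
\[ V_i^m=(\dist{Z^m,X_i^m})^p,\qquad \mu_m=\mathsf{E}(V_i^m),\qquad W_i^m=V_i^m/\mu_m, \]
where $\mu_m$ does not depend on $i$ because the $X_i^m$ are identically distributed. By construction $\mathsf{E}(W_i^m)=1$, and Eq.~\eqref{eq:distrescond} states precisely that $\mathsf{Var}(W_i^m)\to 0$. Chebyshev's inequality then gives, for any $\delta>0$, the bound $\mathsf{P}(|W_i^m-1|>\delta)\le \mathsf{Var}(W_i^m)/\delta^2\to 0$, so each $W_i^m\to 1$ in probability. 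Since there are only $\ell$ indices (a fixed, finite number), the union bound yields $\mathsf{P}(\exists i\le\ell:\ |W_i^m-1|>\delta)\to 0$, so the event $\mathcal{E}_m^\delta=\{\forall i\le\ell:\ 1-\delta\le W_i^m\le 1+\delta\}$ has probability tending to $1$.

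The next step is to translate this into a statement about $D_{\mathsf{min}}^m$ and $D_{\mathsf{max}}^m$. Because $t\mapsto t^{1/p}$ is increasing on $[0,\infty)$, we have $D_{\mathsf{max}}^m=(\max_i V_i^m)^{1/p}$ and $D_{\mathsf{min}}^m=(\min_i V_i^m)^{1/p}$, and dividing through by the common factor $\mu_m$ gives
\[ \frac{D_{\mathsf{max}}^m}{D_{\mathsf{min}}^m}=\left(\frac{\max_i W_i^m}{\min_i W_i^m}\right)^{1/p}. \]
On the event $\mathcal{E}_m^\delta$ the numerator is at most $1+\delta$ and the denominator at least $1-\delta$, so this ratio is at most $((1+\delta)/(1-\delta))^{1/p}$. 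Given $\varepsilon>0$, I would choose $\delta>0$ small enough that $((1+\delta)/(1-\delta))^{1/p}\le 1+\varepsilon$ (possible since the left-hand side tends to $1$ as $\delta\to 0$). Then $\mathcal{E}_m^\delta$ is contained in the event $\{D_{\mathsf{max}}^m\le(1+\varepsilon)D_{\mathsf{min}}^m\}$, so $\mathsf{P}(D_{\mathsf{max}}^m\le(1+\varepsilon)D_{\mathsf{min}}^m)\ge\mathsf{P}(\mathcal{E}_m^\delta)\to 1$, which is exactly Eq.~\eqref{eq:distres}.

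There is no deep obstacle here; the entire content lies in the concentration delivered by Chebyshev's inequality applied to the variance hypothesis. The points that require care are mostly bookkeeping: the normalization by $\mu_m$ presumes $\mu_m>0$ (valid unless the distances vanish almost surely, a degenerate case one can set aside), and the passage from the per-index convergence $W_i^m\to 1$ to simultaneous control of both $\max_i W_i^m$ and $\min_i W_i^m$ relies essentially on $\ell$ being fixed and finite. Without that finiteness the union bound fails, and indeed the phenomenon can disappear when the number of points grows with the dimension.
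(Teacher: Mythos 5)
Your proof is correct, and its core is the same as the paper's: normalize the $p$-th powers of the distances by their common mean $\mu_m$, use the variance hypothesis Eq.~\eqref{eq:distrescond} to force each normalized term to concentrate at $1$, and deduce that $D^m_{\mathsf{max}}/D^m_{\mathsf{min}}$ tends to $1$. The execution, however, is genuinely more elementary. The paper routes the argument through the calculus of convergence in probability: Lemma~\ref{lem:convprob} gives $V_m\to_{\mathsf{P}}1$, Slutsky's theorem (Lemma~\ref{lem:slutsky}) applied to the continuous maps $\min$ and $\max$ gives $\min(\mathbf{V}^m)\to_{\mathsf{P}}1$ and $\max(\mathbf{V}^m)\to_{\mathsf{P}}1$, and Cor.~\ref{cor:slutsky} handles the ratio. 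You instead prove the concentration directly with Chebyshev's inequality (in effect reproving Lemma~\ref{lem:convprob} in the special case where the mean is exactly $1$), replace the continuous-mapping step by a union bound over the fixed finite index set, and replace the ratio corollary by the explicit event inclusion $\mathcal{E}_m^\delta\subseteq\{D^m_{\mathsf{max}}\le(1+\varepsilon)D^m_{\mathsf{min}}\}$. This buys a quantitative failure bound of order $\ell\,\mathsf{Var}(W_i^m)/\delta^2$ and makes fully visible where the finiteness of $\ell$ enters (in the paper it is hidden in the passage from componentwise to joint convergence of $\mathbf{V}^m$). You are also more careful on a point the paper glosses over: since the $V_i^m$ are $p$-th powers, the paper's identity $D^m_{\mathsf{max}}/D^m_{\mathsf{min}}=\mu_m\max(\mathbf{V}^m)/\big(\mu_m\min(\mathbf{V}^m)\big)$ is literally the ratio of $p$-th powers of distances; your monotonicity argument via $t\mapsto t^{1/p}$ supplies the missing (harmless, but silently omitted) step, and your caveat that $\mu_m>0$ is a legitimate degenerate-case remark the paper leaves implicit. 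The paper's qualitative route is shorter and reusable wherever Slutsky-style reasoning is already set up; yours is self-contained and exposes the rate.
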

Thm.~\ref{thm:distres} basically states that closest and farthest neighbors of $Z$ are indistinguishable up to an $\varepsilon$. If the closest and farthest are indistinguishable, trying to discriminate between the closest and the second closest neighbors of a given point might well be hopeless due to floating point errors (note that this discrimination occurs at each iteration of the well known k-means algorithm). This is why distance instability is sometimes cited as a reason for convergence issues in k-means \cite{gayraud}.

\subsection{Related results}
\label{s:distreslitrev}
In \cite{beyer}, several scenarios are analyzed to see where distance instability occurs --- even if some of the requirement of distance instability are relaxed \cite[\S 3.5]{beyer} --- and where it does not \cite[\S 4]{beyer}. Among the cases where distance instability does not apply, we find the case where the data points $X$ are well separated and the case where the dimensionality is implicitly low. Among the cases where it does apply, we find k-NN: in their experiments, the authors of \cite{beyer} find that k-NN becomes unstable already in the range $n\in\{10,20\}$ dimensions. Obviously, the instability of k-NN propagates to any algorithm using k-NN, such as k-means.

Among later studies, \cite{aggarwal1} proposes an alternative definition of $\mathsf{dist}$ where high-dimensional points are projected into lower dimensional spaces. In \cite{aggarwal1}, the authors study the impact of distance instability on different $\ell_p$ norms, and concludes that smallest values of $p$ lead to more stable norms; in particular, quasinorms with $0<p<1$ are considered. Some counterexamples are given against a generalization of this claim for quasinorms in \cite{distconc}. In \cite{durrant}, the converse of Thm.~\ref{thm:distres} is proved, namely that Eq.~\eqref{eq:distrescond} follows from Eq.~\eqref{eq:distres}: from this fact, the authors find practically relevant cases where Eq.~\eqref{eq:distrescond} is not verified, and propose them as ``good'' examples of where k-means can help. In \cite{mansouri}, the authors propose multiplicative functions $\mathsf{dist}$ and show that they are robust w.r.t.~distance instability. In \cite{radovanovic}, distance instability is related to ``hubness'', i.e.~the number of times a point appears among the $k$ nearest neighbors of other points. In \cite{flexer}, an empirical study is provided which shows how to show an appropriate $\ell_p$ norm that should avoid distance instability w.r.t.~hubness. 

\subsection{The proof}
The proof of the instability theorem can be found in \cite{beyer}. We repeat it here to demonstrate the fact that it is ``syntactical'': every step follows from the previous ones by simple logical inference. There is no appeal to any results other than convergence in probability, Slutsky's theorem, and a simple corollary as shown below. The proof does not pass from object language to meta-language, nor does it require exotic interpretations of symbols in complicated contexts. Although one may find this result surprising, there appears to be no reason to doubt it, and no complication in the proof warranting sophisticated interpretations. The only point worth re-stating is that this is a result about probability distributions, not about actual instances of real data. 

\begin{lemma}
  Let $\{B^m\;|\;m\in\mathbb{N}\}$ be a sequence of of random variables with finite variance. Assume that $\lim_{m\to\infty}\mathsf{E}(B^m)=b$ and that $\lim_{m\to\infty}\mathsf{Var}(B^m)=0$. Then
  \begin{equation}
    \forall\varepsilon>0\;\lim\limits_{m\to\infty}\mathsf{P}(\|B^m-b\|\le\varepsilon)=1.\label{eq:convprob}
  \end{equation}
  \label{lem:convprob}
\end{lemma}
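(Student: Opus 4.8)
The plan is to recognize this lemma as the standard statement that convergence of the mean together with vanishing variance implies convergence in probability to the limiting mean, and to prove it via Chebyshev's inequality combined with a triangle-inequality splitting trick. The only mild subtlety is that Chebyshev naturally centers a random variable at its own expectation $\mathsf{E}(B^m)$, whereas the claim in Eq.~\eqref{eq:convprob} centers at the fixed limit $b$; bridging this gap using the hypothesis $\mathsf{E}(B^m)\to b$ is the crux of the argument.

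First I would fix $\varepsilon>0$ and decompose the deviation from $b$ via the triangle inequality as $\|B^m-b\|\le\|B^m-\mathsf{E}(B^m)\|+\|\mathsf{E}(B^m)-b\|$. Since $\lim_{m\to\infty}\mathsf{E}(B^m)=b$ by hypothesis, there exists $m_0$ such that for all $m\ge m_0$ we have $\|\mathsf{E}(B^m)-b\|\le\varepsilon/2$. For such $m$, the event $\{\|B^m-b\|>\varepsilon\}$ is contained in the event $\{\|B^m-\mathsf{E}(B^m)\|>\varepsilon/2\}$, because if the former holds then the latter term alone must exceed $\varepsilon/2$.

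Next I would apply Chebyshev's inequality to the centered variable $B^m-\mathsf{E}(B^m)$, which is legitimate since each $B^m$ has finite variance by assumption, to obtain
\[
\mathsf{P}\big(\|B^m-\mathsf{E}(B^m)\|>\varepsilon/2\big)\le\frac{\mathsf{Var}(B^m)}{(\varepsilon/2)^2}=\frac{4\,\mathsf{Var}(B^m)}{\varepsilon^2}.
\]
Combining this with the set inclusion above gives, for all $m\ge m_0$, the bound $\mathsf{P}(\|B^m-b\|>\varepsilon)\le 4\,\mathsf{Var}(B^m)/\varepsilon^2$. Since $\lim_{m\to\infty}\mathsf{Var}(B^m)=0$ by hypothesis, the right-hand side tends to $0$, so $\mathsf{P}(\|B^m-b\|>\varepsilon)\to 0$, and taking complements yields $\mathsf{P}(\|B^m-b\|\le\varepsilon)\to 1$, as claimed.

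I expect no serious obstacle here: the proof is entirely elementary once the centering is handled, and the main thing to get right is the bookkeeping of the $\varepsilon/2$ split so that the Chebyshev bound is applied to the self-centered variable rather than to $B^m-b$ directly. This lemma is then exactly the ``convergence in probability'' building block that, together with Slutsky's theorem, drives the syntactic proof of Thm.~\ref{thm:distres}.
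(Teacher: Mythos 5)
Your proof is correct and complete: the triangle-inequality recentering at $\mathsf{E}(B^m)$ followed by Chebyshev's inequality applied to the self-centered variable is the canonical argument, and your bookkeeping is right --- in particular the inclusion $\{\|B^m-b\|>\varepsilon\}\subseteq\{\|B^m-\mathsf{E}(B^m)\|>\varepsilon/2\}$ for all $m\ge m_0$ follows correctly since $\|B^m-\mathsf{E}(B^m)\|\ge\|B^m-b\|-\|\mathsf{E}(B^m)-b\|>\varepsilon/2$ there. Note that the paper states Lemma~\ref{lem:convprob} without any proof, treating it as one of the standard probability facts (alongside Slutsky's theorem) that drive the ``syntactical'' proof of Thm.~\ref{thm:distres}, so there is no in-paper argument to compare against; your proposal simply supplies the standard missing proof, and it does so correctly.
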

A random variable sequence satisfying Eq.~\eqref{eq:convprob} is said to {\it converge in probability} to $b$. This is denoted $B^m\to_{\mathsf{P}} b$.

\begin{lemma}[Slutsky's theorem \cite{wikipedia_slutsky}]
  Let $\{B^m\;|\;m\in\mathbb{N}\}$ be a sequence of random variables, and $g:\mathbb{R}\to\mathbb{R}$ be a continuous function. If $B^m\to_{\mathsf{P}} b$ and $g(b)$ exists, then $g(B^m)\to_{\mathsf{P}} g(b)$.
  \label{lem:slutsky}
\end{lemma}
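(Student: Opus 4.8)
The final statement is Thm.~\ref{thm:distres}, the distance instability result, which concludes that $\lim_{m\to\infty}\mathsf{P}(D^m_{\mathsf{max}}\le(1+\varepsilon)D^m_{\mathsf{min}})=1$ under the relative-variance hypothesis Eq.~\eqref{eq:distrescond}. The plan is to deduce it from the two lemmas already stated (Lem.~\ref{lem:convprob} and Slutsky's theorem, Lem.~\ref{lem:slutsky}) by a purely ``syntactical'' chain of convergence-in-probability arguments, exactly as the paper advertises.

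First I would fix the constant $p>0$ from the hypothesis and introduce the normalized quantities $\mu_m=\mathsf{E}((\dist{Z^m,X^m_i})^p)$ (which is common to all $i$ by identical distribution) and $V_i^m=(\dist{Z^m,X^m_i})^p/\mu_m$. The hypothesis Eq.~\eqref{eq:distrescond} says $\mathsf{Var}(V_i^m)\to 0$, and since each $V_i^m$ has expectation exactly $1$ by construction, Lem.~\ref{lem:convprob} applies with $b=1$ to give $V_i^m\to_{\mathsf{P}}1$ for every $i\le\ell$. Next I would package the whole vector: define $B^m=(V_1^m,\ldots,V_\ell^m)$ and note componentwise convergence in probability to the all-ones vector $\mathbf{1}$. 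I would then invoke the vector/multivariate version of Slutsky's theorem (Lem.~\ref{lem:slutsky} applied coordinatewise, together with the fact that finitely many convergences-in-probability can be combined) to push $B^m$ through the two continuous maps $g_{\min}(w)=\min_i w_i$ and $g_{\max}(w)=\max_i w_i$. This yields
\[
\frac{D_{\mathsf{min}}^m}{\mu_m^{1/p}}=\big(\min_i V_i^m\big)^{1/p}\to_{\mathsf{P}}1
\quad\text{and}\quad
\frac{D_{\mathsf{max}}^m}{\mu_m^{1/p}}=\big(\max_i V_i^m\big)^{1/p}\to_{\mathsf{P}}1,
\]
using also that $t\mapsto t^{1/p}$ is continuous on $[0,\infty)$ so a further application of Slutsky is legitimate.

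With both the minimum and maximum (rescaled by the same deterministic factor $\mu_m^{1/p}$) converging in probability to $1$, I would form the ratio $R^m=D_{\mathsf{max}}^m/D_{\mathsf{min}}^m=(\max_i V_i^m)^{1/p}/(\min_i V_i^m)^{1/p}$. Since the common scaling cancels, $R^m$ is a continuous function (away from $0$) of two quantities each tending to $1$ in probability; one more application of Slutsky's theorem gives $R^m\to_{\mathsf{P}}1$. Translating back, for any $\varepsilon>0$ the event $\{D^m_{\mathsf{max}}\le(1+\varepsilon)D^m_{\mathsf{min}}\}$ is exactly $\{R^m\le 1+\varepsilon\}$, which contains $\{|R^m-1|\le\varepsilon\}$, so its probability tends to $1$, establishing Eq.~\eqref{eq:distres}.

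The only genuine subtlety — and the step I expect to be the main obstacle — is the passage from the single-index convergence $V_i^m\to_{\mathsf{P}}1$ to the joint convergence of the vector $B^m$ needed before applying the $\min$/$\max$ maps. Slutsky's theorem as stated (Lem.~\ref{lem:slutsky}) is one-dimensional, so I would need the elementary fact that if finitely many random variables each converge in probability to constants, then they converge jointly in probability to the vector of those constants (this follows from a union bound over the $\ell$ coordinates, since $\ell$ is fixed and finite and does not grow with $m$). Once joint convergence is secured, $\min$ and $\max$ are continuous functions $\mathbb{R}^\ell\to\mathbb{R}$ and the multivariate continuous-mapping/Slutsky step is routine. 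I would take care to note explicitly that $\ell$ is constant while $m\to\infty$, since that is what makes the union bound harmless, and that no assumption on $n$ or on the specific distributions $\mathcal{D}_1^m,\mathcal{D}_2^m$ is used beyond Eq.~\eqref{eq:distrescond} — underscoring the paper's claim that the argument is entirely syntactical.
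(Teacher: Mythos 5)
There is a genuine gap, and it is a global one: your proposal does not prove the statement under review. The statement is Lemma~\ref{lem:slutsky} itself --- the continuous-mapping form of Slutsky's theorem, asserting that $B^m\to_{\mathsf{P}}b$ and continuity of $g$ imply $g(B^m)\to_{\mathsf{P}}g(b)$ --- but your argument instead proves Thm.~\ref{thm:distres}, the distance instability result, while invoking Lemma~\ref{lem:slutsky} (and a multivariate extension of it) as a black box. As a proof of the lemma this is circular: every key step in your chain --- pushing convergence in probability through $\min$ and $\max$, through $t\mapsto t^{1/p}$, and through the ratio map --- is an application of the very result you were supposed to establish. (Incidentally, what you wrote is essentially the paper's own proof of Thm.~\ref{thm:distres}, with somewhat more care about the exponent $1/p$ and about joint convergence of the vector $(V_1^m,\ldots,V_\ell^m)$, a point the paper glosses over; but that theorem is not the statement at hand.)

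A correct proof of the lemma in the constant-limit form stated is short and elementary, and this is what was expected. Fix $\varepsilon>0$. By continuity of $g$ at $b$ there is $\delta>0$ such that $|x-b|\le\delta$ implies $|g(x)-g(b)|\le\varepsilon$; contrapositively, the event $\{|g(B^m)-g(b)|>\varepsilon\}$ is contained in the event $\{|B^m-b|>\delta\}$, whence $\mathsf{P}(|g(B^m)-g(b)|>\varepsilon)\le\mathsf{P}(|B^m-b|>\delta)\to 0$ as $m\to\infty$, since $B^m\to_{\mathsf{P}}b$. Note that the paper itself gives no proof of this lemma, citing \cite{wikipedia_slutsky} instead, so there is no in-paper argument your attempt could be matching; and note also that only continuity of $g$ at the point $b$ is actually used, so the paper's hypothesis of global continuity is stronger than necessary. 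If you additionally wanted the multivariate variant you rely on later (for $\min$ and $\max$ over $\ell$ coordinates), the same two-line argument works verbatim with $\|\cdot\|$ in place of $|\cdot|$, once joint convergence is obtained from the union bound over the $\ell$ fixed coordinates, as you correctly sketched --- but that belongs to the proof of Thm.~\ref{thm:distres}, not of this lemma.
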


\begin{corollary}
  If $\{A^m\;|\;m\in\mathbb{N}\}$ and $\{B^m\;|\;m\in\mathbb{N}\}$ are sequences of random variables such that $A^m\to_{\mathsf{P}}a$ and $B^m\to_{\mathsf{P}}b\not=0$, then $\frac{A^m}{B^m}\to_{\mathsf{P}} \frac{a}{b}$.
  \label{cor:slutsky}
\end{corollary}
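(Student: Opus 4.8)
The plan is to reduce the quotient to a product: first handle the reciprocal $1/B^m$ using the single-variable Slutsky theorem already available as Lemma~\ref{lem:slutsky}, and then establish a product rule for convergence in probability.

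First I would apply Lemma~\ref{lem:slutsky} to the sequence $\{B^m\}$ with the continuous function $g(t)=1/t$. Since $b\neq 0$, this $g$ is continuous at $b$ and $g(b)=1/b$ exists, so we immediately obtain $1/B^m\to_{\mathsf{P}} 1/b$. It then remains only to show that the product $A^m\cdot(1/B^m)$ converges in probability to $a\cdot(1/b)=a/b$; that is, to establish the general fact that if $U^m\to_{\mathsf{P}} u$ and $V^m\to_{\mathsf{P}} v$, then $U^mV^m\to_{\mathsf{P}} uv$.

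To prove this product rule I would start from the algebraic identity
\[ U^mV^m - uv = (U^m-u)V^m + u(V^m-v), \]
so that for any $\varepsilon>0$ the event $\{|U^mV^m-uv|>\varepsilon\}$ is contained in $\{|(U^m-u)V^m|>\varepsilon/2\}\cup\{|u|\,|V^m-v|>\varepsilon/2\}$. The probability of the second event tends to $0$ directly from $V^m\to_{\mathsf{P}} v$ (and is identically zero if $u=0$). For the first event, the key observation is that $V^m$ is bounded in probability: since $V^m\to_{\mathsf{P}} v$, for any $\eta>0$ there is an $M>0$ with $\mathsf{P}(|V^m|>M)<\eta$ for all large $m$. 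On the complementary event $|V^m|\le M$ we have $|(U^m-u)V^m|\le M|U^m-u|$, which exceeds $\varepsilon/2$ with vanishing probability because $U^m\to_{\mathsf{P}} u$. Combining these estimates and letting $\eta\to 0$ shows the first event also has vanishing probability, completing the product rule; applying it with $U^m=A^m$ and $V^m=1/B^m$ then yields $A^m/B^m\to_{\mathsf{P}} a/b$.

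The only step that goes slightly beyond the two tools used here (convergence in probability and Slutsky's theorem) is the tightness bound $\mathsf{P}(|V^m|>M)<\eta$, so the main obstacle is simply recognizing that convergence in probability forces boundedness in probability, and threading that bound through the triangle-inequality split of the product; everything else is routine $\varepsilon$–$\eta$ bookkeeping.
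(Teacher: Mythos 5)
Your proof is correct, but it takes a genuinely different route from the paper, which in fact offers no written proof of Cor.~\ref{cor:slutsky} at all: the corollary is presented as an immediate consequence of Lemma~\ref{lem:slutsky}, the implicit derivation being the continuous mapping theorem applied \emph{jointly} to the pair $(A^m,B^m)$ --- since marginal convergence in probability to constants gives joint convergence in probability, a bivariate analogue of Lemma~\ref{lem:slutsky} with $g(x,y)=x/y$, continuous at $(a,b)$ because $b\not=0$, yields the claim in one line. Your argument is more elementary and, arguably, more faithful to the tools the paper actually states: Lemma~\ref{lem:slutsky} as written covers only a single scalar sequence and $g:\mathbb{R}\to\mathbb{R}$, so the ``immediate'' derivation quietly relies on a multivariate extension never stated in the paper. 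You sidestep this by invoking the scalar lemma only for the reciprocal ($g(t)=1/t$, continuous at $b\not=0$, consistent with the lemma's proviso that $g(b)$ exist) and then proving the product rule from first principles via the decomposition $U^mV^m-uv=(U^m-u)V^m+u(V^m-v)$, the triangle-inequality split of the bad event, and boundedness in probability of $V^m$ (which indeed follows from $V^m\to_{\mathsf{P}}v$ by taking, say, $M>|v|+1$); the degenerate case $u=0$ is also handled correctly. The cost of your approach is length and some $\varepsilon$--$\eta$ bookkeeping; what it buys is a self-contained derivation that closes the small logical gap left by citing the scalar Slutsky lemma alone.
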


\noindent {\it Proof of Thm.~\ref{thm:distres}.} Let $\mu_m=\mathsf{E}((d^m(Z^m,X^m_i))^p)$. We note that $\mu_m$ is independent of $i$ since all $X^m_i$ are identically distributed.

We claim $V_m=\frac{(d^m(Z^m,X^m_i))^p}{\mu_m}\to_{\mathsf{P}} 1$:
\begin{itemize}
\item we have $\mathsf{E}(V_m)=1$ since it is a random variable over its mean: hence, trivially, $\lim_m\mathsf{E}(V_m)=1$;
\item by the hypothesis of the theorem (Eq.~\eqref{eq:distrescond}), $\lim_m\mbox{\sf Var}(V_m)=0$;
\item by Lemma \ref{lem:convprob}, $V_m\to_{\mathsf{P}}1$, which establishes the claim.
\end{itemize}
Now, let $\mathbf{V}^m=(V_m\;|\;i\le\ell)$. By the claim above, we have $\mathbf{V}^m\to_{\mathsf{P}}\mathbf{1}$. Now by Lemma \ref{lem:slutsky} we obtain $\min(\mathbf{V}^m)\to_{\mathsf{P}}\min(\mathbf{1})=1$ and, similarly, $\max(\mathbf{V}^m)\to_{\mathsf{P}}1$. By Cor.~\ref{cor:slutsky}, $\frac{\max(\mathbf{V}^m)}{\min(\mathbf{V}^m)}\to_{\mathsf{P}} 1$. Therefore,
\[\frac{D_{\mathsf{max}}^m}{D_{\mathsf{min}}^m}=\frac{\mu_m\max(\mathbf{V}^m)}{\mu_m\min(\mathbf{V}^m)}\to_{\mathsf{P}} 1.\]
By definition of convergence in probability, we have
\[\forall\varepsilon>0\quad\lim_{m\to\infty} \mathsf{P}(|D_{\mathsf{max}}^m/D_{\mathsf{min}}^m-1|\le\varepsilon)=1.\]
Moreover, since $\mathsf{P}(D_{\mathsf{max}}^m\ge D_{\mathsf{min}}^m)=1$, we have
\[\mathsf{P}(D_{\mathsf{max}}^m\le(1+\varepsilon)D_{\mathsf{min}}^m)= \mathsf{P}(D_{\mathsf{max}}^m/D_{\mathsf{min}}^m-1\le\varepsilon)=\mathsf{P}(|D_{\mathsf{max}}^m/D_{\mathsf{min}}^m-1|\le\varepsilon)=1.\]
The result follows by taking the limit as $m\to\infty$. 

\subsection{In practice}

In Fig.~\ref{fig:distres}, we show how $\varepsilon$ (Eq.~\eqref{eq:distres}) varies with increasing dimension $n$ (recall we assume $m=n$) between $1$ and $10000$.
\begin{figure}[!ht]
  \begin{center}
    \includegraphics[width=8cm]{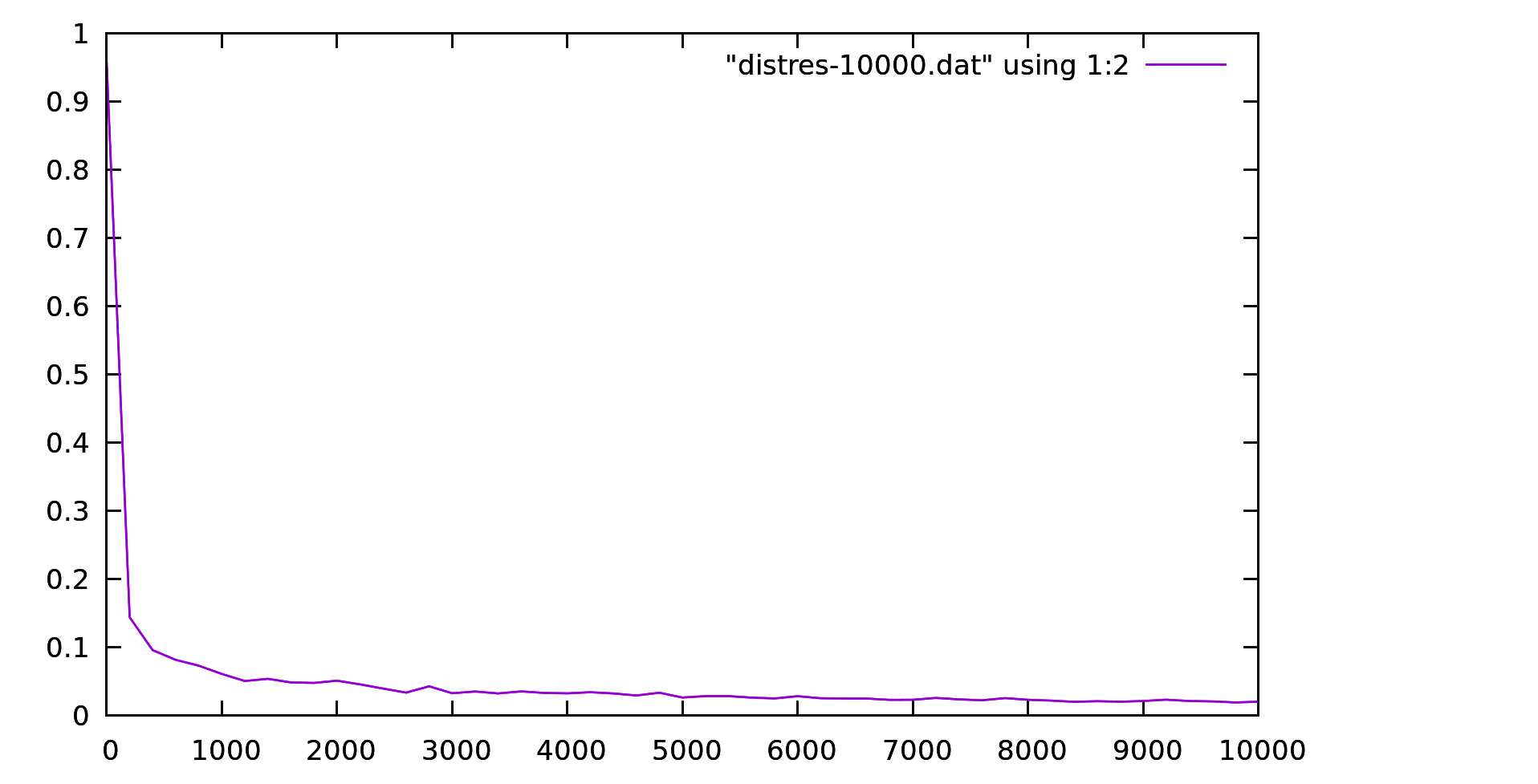}
    \includegraphics[width=8cm]{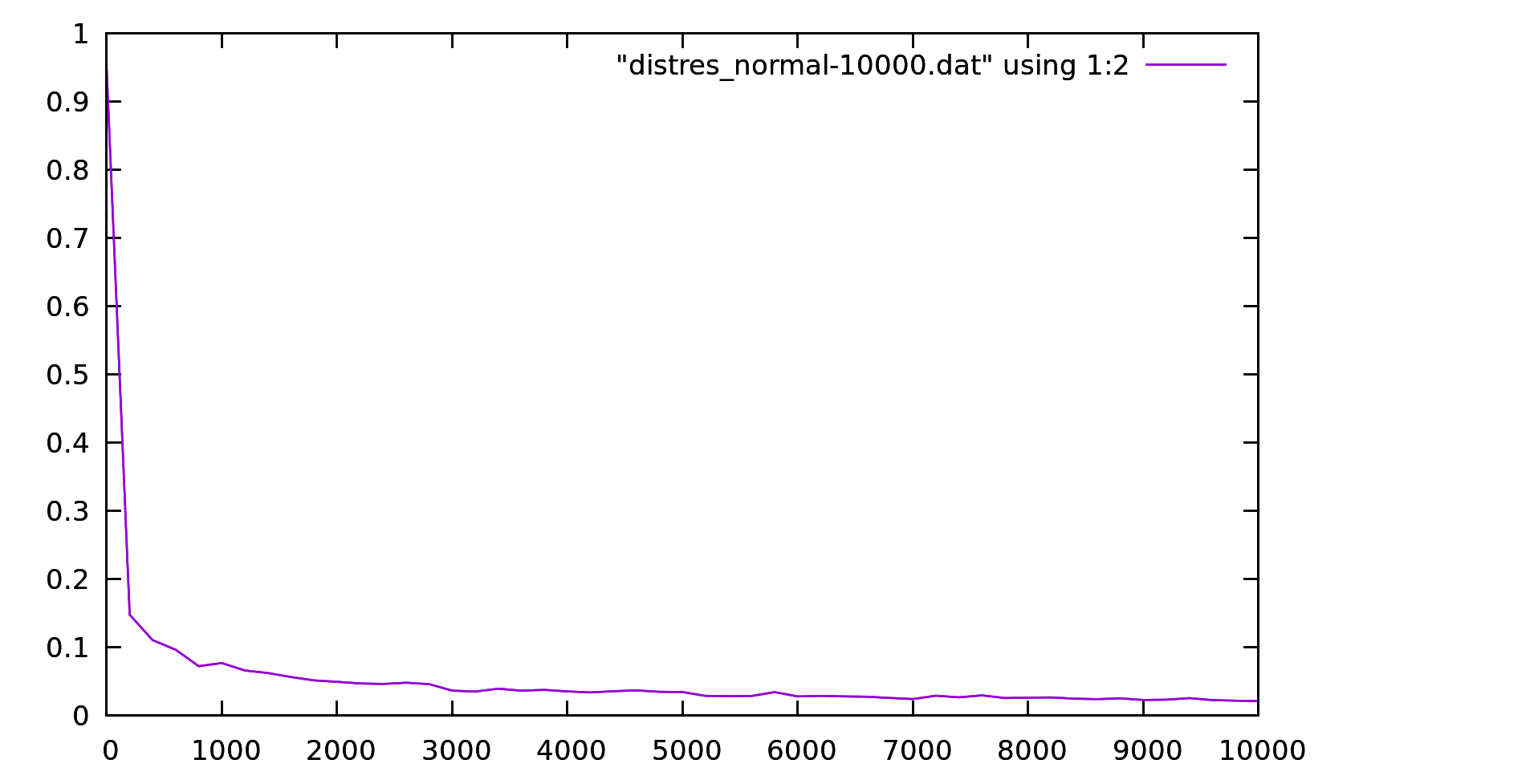}
    \includegraphics[width=8cm]{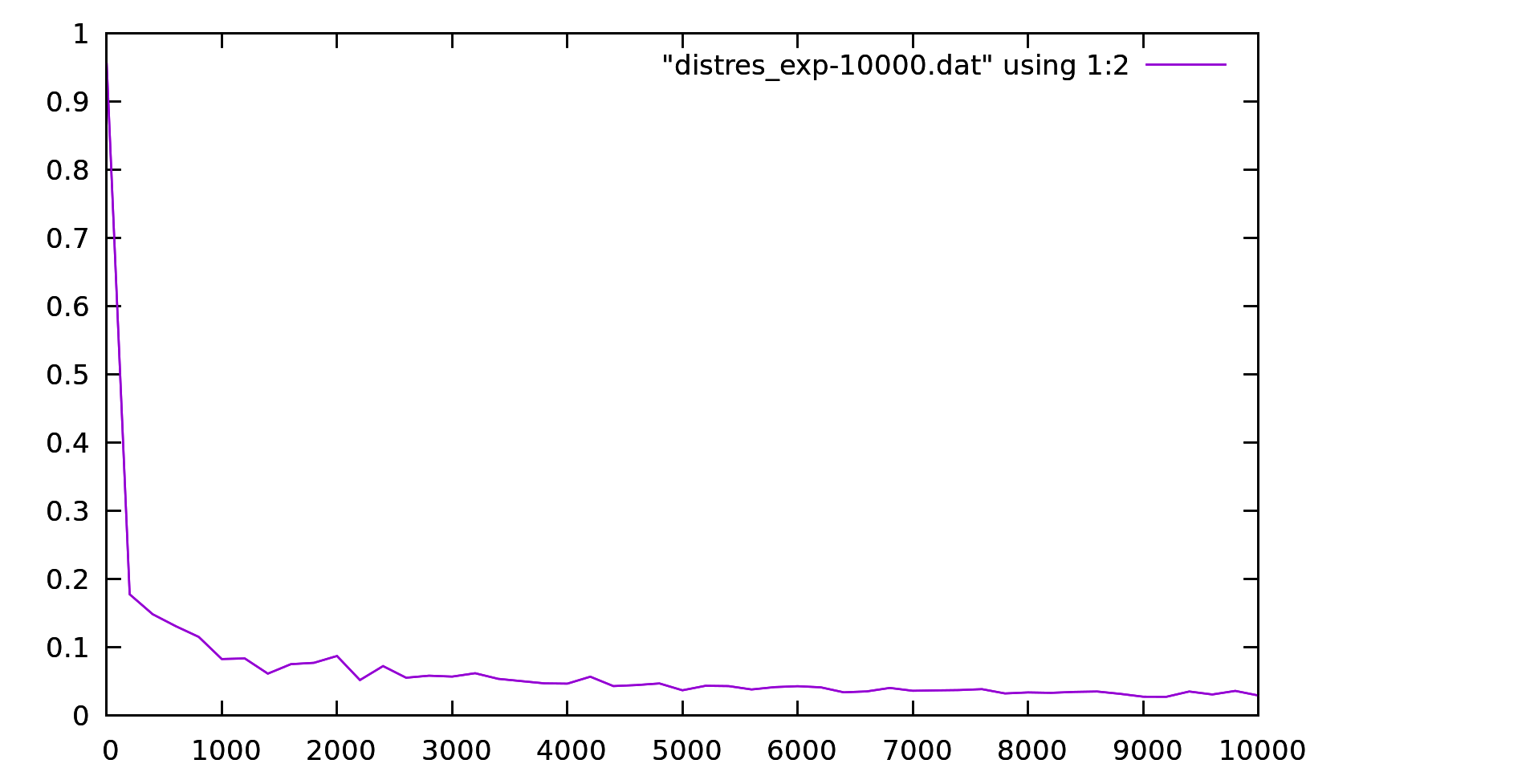}
  \end{center}
  \caption{Plots of $\varepsilon$ versus $n$ for the uniform distribution on $[0,1]$ (left), $\mathsf{N}(0,1)$ (center), the exponential distribution with parameter $1$ (right).}
  \label{fig:distres}
\end{figure}
It is clear that $\varepsilon$ decreases very rapidly towards zero, and then reaches its asymptotic value more slowly. On the other hand, $\varepsilon$ is the distortion between minimum and maximum distance values; most algorithms need to discriminate between smallest and second smallest distance values.

Most of the papers listed in Sect.~\ref{s:distreslitrev} include empirical tests which illustrate the impact and limits of the distance instability phenomenon.

\section{An application to neural networks}
\label{s:anndg}
In this last section we finally show how several concepts explained in this survey can be used conjunctively. We shall consider a natural language processing task (Sect.~\ref{s:datgph}) where we cluster some sentences (Sect.~\ref{s:clustering}) using an ANN (Sect.~\ref{s:ann}) with different training sets $T=(X,Y)$. We compare ANN performances depending on the training set used.

The input set $X$ is a vector representation of the input sentences. The output set $Y$ is a vectorial representation of cluster labels: we experiment with (a) clusterings obtained by running k-means (Sect.~\ref{s:kmeans}) on the input sets, and (b) a clustering found by a modularity maximization heuristic (Sect.~\ref{s:modularity}). All of these clusterings are considered ``ground truth'' sets $Y$ we would like our ANN to learn to associate to various types of input vector sets $X$ representing the sentences. The sentences to be clustered are first transformed into graphs (Sect.~\ref{s:gphtext}), and then into vectors (Sect.~\ref{s:dgpsol}), which then undergo dimensionality reduction (Sect.~\ref{s:dimred}).

Our goal is to compare the results obtained by the same ANN with different vector representations for the same text: most notably, the comparison will confront how well or poorly input vector sets can predict the ground truth outputs. We will focus specifically on a comparison of the well-known incidence vectors (Sect.~\ref{s:incvec}) embeddings w.r.t.~the newly proposed DGP methods we surveyed in Sect.~\ref{s:dgpsol}.

In our implementations, all our code was developed using Python 3 \cite{python3}.

\subsection{Performance measure}
\label{s:performance}


We are going to measure the performance quality of the error of an ANN, which is based on a comparison of its output with the ground truth that the ANN is supposed to learn. Using the notation of Sect.~\ref{s:ann}, if the ANN output for a given input $x\in\mathbb{R}^n$ consists of a vector $y\in\mathbb{R}^k$, and if the ground truth corresponding to $x$ is $z\in\mathbb{R}^k$, then we define the error as the {\it loss} function:
\begin{eqnarray}
  \mathsf{loss}(y,z) &=& \|y-z\|_2. \label{loss1} 
\end{eqnarray}
An ANN $\mathcal{N}=(G,T,\phi)$ is usually evaluated over many (input,output) pairs. Let $\hat{X}\subset\mathbb{R}^n$ and $\hat{Y}\subset\mathbb{R}^k$ be, respectively, a set of input vectors and the corresponding set of output vectors evaluated by the trained ANN. Let $\hat{Z}$ be a set of ground truth vectors corresponding to $\hat{X}$, and assume $|\hat{X}|=|\hat{Y}|=|\hat{Z}|=q$. The cumulative loss measure evaluated on the {\it test set} $(\hat{X},\hat{Z})$ is then
\begin{eqnarray}
  \mathsf{loss}(\mathcal{N}) &=& \frac{1}{q} \sum\limits_{y\in\hat{Y}\atop z\in\hat{Z}}\mathsf{loss}(y,z). \label{loss} 
\end{eqnarray}

\subsection{A Natural Language Processing task}
\label{s:natlangproc}
Clustering of sentences in a text is a common task in Natural Language Processing. We considered ``On the duty of civil disobedience'' by H.D.~Tho\-reau \cite{civildis,wikipedia_civdis}. This text is stored in an ASCII file {\tt walden.txt} which can be obtained from {\tt archive.org}. The file is 661146 bytes long, organized in 10108 lines and 116608 words. The text was parsed into sentences using basic methods from NLTK \cite{nltk} under Python 3. Common words, stopwords, punctuation and unusual characters were removed. After ``cleaning'', the text was reduced to 4083 sentences over a set of 11431 ``significant'' words (see Sect.~\ref{s:sentences}).

As mentioned above, we want to train our ANN to learn different types of clusterings:
\begin{itemize}
\item (\textbf{\textit{k-means}}) obtained by running the the k-means unsupervised clustering algorithm (Sect.~\ref{s:kmeans}) over the different vector representations of the sentences in the text;
\item (\textbf{\textit{sentence graph}}) obtained by running a modularity clustering heuristic (Sect.~\ref{s:modularity}) on a graph representation of the sentences in the document (see Sect.~\ref{s:sentencegraph}).
\end{itemize}
These clusterings are used as ground truths, and provide the output part of the training sets to be used by the ANN, as well as of the test sets for measuring purposes (Sect.~\ref{s:performance}). See Sect.~\ref{s:outputset} for more information on the construction of these clusterings.

\subsubsection{Selecting the sentences}
\label{s:sentences}
We constructed two sets of sentences. 
\begin{itemize}
\item {\bf The large sentence set.} Each sentence in {\tt walden.txt} was mapped to an incidence vector of $3$-grams in $\{0,1\}^{48087}$, i.e.~a dictionary of 48087 $3$-grams over the text. In other words, 48087 $3$-grams were found in the text, then each sentence was mapped to a vector having $1$ at component $i$ iff the $i$-th $3$-gram was present in the sentence. Since some sentences had fewer than 3 significant words, only 3940 sentences remained in the sentence set $S$, which was therefore represented as a $3940\times 48087$ matrix $\bar{S}$ with components in $\{0,1\}$.
\item {\bf The small sentence set.} It turns out that most of the $3$-grams in the set $S$ only appear a single time. We selected a subset $S'\subset S$ of sentences having $3$-grams appearing in at least two sentences. It turns out that $|S'|=245$, and the total number of $3$-grams appearing more than once is 160. $S'$ is therefore naturally represented as a $245\times 160$ matrix $\bar{S}'$ with components in $\{0,1\}$.
\end{itemize}
We constructed training sets (Sect.~\ref{s:trainingset}) for each of these two sets.

\subsubsection{Construction of a sentence graph}
\label{s:sentencegraph}
In this section we describe the method used to construct a sentence graph $G^{\mathsf{s}}=(S,E)$ from the text, which is used to produce a ground truth for the (\textbf{\textit{sentence graph}}) type. $G^{\mathsf{s}}$ is then clustered using the greedy modularity clustering heuristic in the Python library {\tt networkX} \cite{networkX}. 

Each sentence in the text is encoded into a weighted graph-of-word (see Sect.~\ref{s:graphofwords}) over $3$-grams, with edges $\{u,v\}$ weighted by the number $c_{uv}$ of $3$-grams where the two words $u,v$ appear. The union of the graph-of-words for the sentences (contracting repeated words to a single vertex) yields a weighted graph-of-word $G^{\mathsf{w}}$ for the whole text. 


The graph $G^{\mathsf{w}}=(W,F)$ is then ``projected'' onto the set $S$ of sentences as follows. We define the logical proposition $P(u,v,s,t)$ to mean $(u\in s\land v\in t)\vee (v\in s\land u\in t)$ for words $u,v$ and sentences $s,t$. The edge set $E$ of $G^{\mathsf{s}}$ is then defined by the following implication:
\[\forall \{u,v\}\in F,\; s,t\in S \qquad P(u,v,s,t)\to \{s,t\}\in E.\]
In other words, $s,t$ form an edge in $E$ if two words $u,v$ in $s,t$ (respectively) or $t,s$ form an edge in $F$. For each edge $\{s,t\}\in E$, the weight $w_{st}$ is given by:
\[ w_{st} = \sum\limits_{\{u,v\}\in F\atop P(u,v,s,t)} c_{uv}, \]
with edge weights meaning similarity. 

\subsection{The ANN}
\label{s:annimpl}
We consider a very simple ANN $\mathcal{N}=(G,T,\phi)$. In the terminology of Sect.~\ref{s:ann}, the underlying digraph $G=(V,A)$ is tripartite with $V=V_1\dot{\cup}V_2\dot{\cup}V_3$. The ``input layer'' $V_1$ has $n$ nodes, where $n$ is the dimensionality of the input vector set $X$. The ``output layer'' $V_3$ has a single node. The ``hidden layer'' $V_2$ has a constant number of nodes ($20$ in our experiments). The training set $T$ is discussed in Sect.~\ref{s:trainingset}. We adopt the piecewise-linear mapping known as {\it rectified linear unit} (ReLu) \cite{wikipedia_relu} for the activation functions $\phi$ in $V_2$, and a traditional sigmoid function for the single node in $V_3$. Both types of activation functions map into $[0,1]$.

We implemented $\mathcal{N}$ using the Python library {\tt keras} \cite{keras}, which is a high-level API running over TensorFlow \cite{tensorflow}. The default configuration was chosen for all layers. We used the {\sc Adam} solver \cite{adam} in order to train the network. Each training set was split in three parts: 35\% of the vectors were used for training, 35\% for {\it validation} (a training phase used for deciding values of any model parameter aside from $v,b,w$, if any exist, and/or for deciding when to stop the training phase), and 30\% for testing. The performance of the ANN is measured using the loss function in Eq.~\eqref{loss}. 

\subsection{Training sets}
\label{s:trainingset}
Our goal is to compare training sets $T=(X,Y)$ where the vectors in $X$ were constructed in a variety of ways, and the vectors in $Y$ were obtained by running k-means (Sect.~\ref{s:kmeans}) on the vectors in $X$. 

In particular, we consider input sets $X(\sigma,\mu,\rho)$ where:
\begin{itemize}
\item $\sigma\in\Sigma=\{S',S\}$ is the sentence set: $\sigma=S'$ corresponds to the small set with 245 sentences, $\sigma=S$ corresponds to the large set with 3940 sentences;
\item $\mu\in M=\{\mathsf{inc},\mathsf{uie},\mathsf{qrt},\mathsf{sdp}\}$ is the method used to map sentences to vectors: $\mathsf{inc}$ are the incidence vectors (Sect.~\ref{s:incvec}), $\mathsf{uie}$ is the universal isometric embedding (Sect.~\ref{s:uie}), $\mathsf{qrt}$ is the unconstrained quartic (Sect.~\ref{s:unconstrained}), $\mathsf{sdp}$ is the SDP (Sect.~\ref{s:sdp});
\item $\rho\in R=\{\mathsf{pca},\mathsf{rp}\}$ is the dimensional reduction method used: $\mathsf{pca}$ is PCA (Sect.~\ref{s:pca}), $\mathsf{rp}$ are RPs (Sect.~\ref{s:rp}).
\end{itemize}
The methods in $M$ were all implemented using Python 3 with some well known external libraries (e.g.~{\tt numpy}, {\tt scipy}). Specifically, $\mathsf{qrt}$ was implemented using the {\sc Ipopt} \cite{ipopt} NLP solver, and $\mathsf{sdp}$ was implemented using the SCS \cite{scs} SDP solver. As for the dimensional reduction methods in $R$, the PCA implementation of choice was the probabilistic PCA algorithm implemented in the Python library {\tt scikit-learn} \cite{scikitlearn}. The RPs we chose were the simplest: each component of the RP matrices was sampled from an appropriately scaled zero-mean Gaussian distribution (Thm.~\ref{thm:rpnorm}).

\subsubsection{The output set}
\label{s:outputset}
The output set $Y$ should naturally contain discrete values, namely the labels of the $h$ clusters $\{1,2,\ldots,h\}$ in the ground truth clusterings. We map these values to scalars in $[0,1]$ (or, according to Sect.~\ref{s:ann}, to $k$-dimensional vectors with $k=1$) as follows. We divide the range $[0,1]$ into $h-1$ equal sub-intervals of length $1/(h-1)$, and hence $h$ discrete values in $[0,1]$. Then we assign labels to sub-intervals endpoints: label $j$ is mapped to $(j-1)/(h-1)$ (for $1\le j\le h$).

As mentioned above, we consider two types of output sets:
\begin{itemize}
  \item (\textbf{\textit{k-means}}) for each input set $X(\sigma,\mu,\rho)$ we obtained an output set $Y(\sigma,\mu,\rho)$ using k-means (Sect.~\ref{s:kmeans}) implementation in {\tt scikit-learn} \cite{scikitlearn} on the vectors in $X$, for each sentence set $\sigma\in\Sigma$, method $\mu\in M$, and dimensional reduction method $\rho\in R$;
  \item (\textbf{\textit{sentence graph}}) for each sentence set $\sigma\in\Sigma$ we constructed a sentence graph as detailed in Sect.~\ref{s:sentencegraph}.
\end{itemize}

\subsubsection{Realizations to vectors}
\label{s:rlz2vec}
The $\mathsf{inc}$ method (Sect.~\ref{s:incvec}) is the only one (in our benchmark) that can natively map sentences of various lengths into vectors all having the same number of components.

For all other methods in $M\smallsetminus\{\mathsf{inc}\}$, we loop over sentences (in small/large sets $S',S$). For each sentence we construct its graph-of-words (Sect.~\ref{s:graphofwords}). We then realize it in some arbitrary dimensional Euclidean space $\mathbb{R}^K$ (specifically, we chose $K=10$) using \textsf{uie}, \textsf{qrt}, \textsf{sdp}. At this point, we are confronted with the following difficulty: a realization of a graph $G$ with $p$ vertices in $\mathbb{R}^K$ is a $p\times K$ matrix, and we have as many graphs $G$ as we have sentences, with $p$ varying over the number of unique words in the sentences (i.e.~the cardinalities of the vertex sets of the graphs-of-words).

In order to reduce all of these differently-sized realizations to vectors having the same dimension, we follow the following procedure. Given realizations $\{x^i\in\mathbb{R}^{p_i\times K}\;|\;i\in \sigma\}$, where $\sigma$ is the set of sentences (for $\sigma\in\Sigma$) and $x^i$ realizes the graph-of-word of sentence $i\in \sigma$,
\begin{enumerate}
\item we stack the columns of $x^i$ so as to obtain a single vector $\hat{x}^i\in\mathbb{R}^{p_iK}$ for each $i\in \sigma$;
\item we let $\hat{n}=\max_i p_iK$ be the maximum dimensionality of the stacked realizations;
\item we pad every realization vector $\hat{x}^i$ shorter than $\hat{n}$ with zeros to achieve dimension $\hat{n}$ for stacked realization vectors;
\item we form the $s\times\hat{n}$ matrix $\hat{X}$ having $\hat{x}^i$ as its $i$-th row (for $i\in\sigma$ and with $s=|\sigma|$);
\item we reduce the dimensionality of $\hat{X}$ to an $s\times n$ matrix $X$ with \textsf{pca} or \textsf{rp}.
\end{enumerate}

\subsection{Computational comparison}
We discuss the details of our training sets, a validation test, and the comparison tests.

\subsubsection{Training set statistics}

In Table \ref{t:stats} we report the dimensionalities of the vectors in the input parts $X(\sigma,\mu,\rho)$ of the training sets, as well as the number of clusters in the output sets $Y(\sigma,\mu,\rho)$ of the (\textbf{\textit{k-means}}) class.
\begin{table}[!ht]
  \begin{center}
    \begin{tabular}{|l|rrrr|rrrr|} \hline
      \multicolumn{9}{|c|}{\it Dimensionality of input vectors} \\ \hline
      \hfill$\mu$  & \multicolumn{4}{c|}{$|\sigma|=245$} & \multicolumn{4}{c|}{$|\sigma|=3940$} \\
      $\rho$ & \textsf{inc} & \textsf{uie} & \textsf{qrt} & \textsf{sdp} & \textsf{inc} & \textsf{uie} & \textsf{qrt} & \textsf{sdp} \\ \hline
      \textsf{pca} & 3 & 159 & 244 & 200 & 3 & 10 & 400 & 400  \\
      \textsf{rp} & 100 & 248 & 248 & 248 & 373 & 373 & 373 & 373 \\ 
      {\it original} & {\it 160} & {\it 1140} & {\it 1140} & {\it 1140} & {\it 48087} & {\it 1460} & {\it 1460} & {\it 1460} \\ \hline \hline
      \multicolumn{9}{|c|}{\it Number of clusters to learn} \\ \hline
      \textsf{pca} & 4 & 3 & 11 & 6 & 3 & 8 & 9 & 14 \\
      \textsf{rp} & 4 & 3 & 7 & 5 & 3 & 9 & 16 & 14 \\ \hline
    \end{tabular}
  \end{center}
  \caption{Training set statistics for $X(\sigma,\mu,\rho)$ and corresponding output sets in the (\textbf{\textit{k-means}}) class.}
  \label{t:stats}
\end{table}
We recall that the number of clusters was found with k-means in the {\tt scikit-learn} implementation. The choice of `k' corresponds to the smallest number of clusters giving a nontrivial clustering (with ``trivial'' meaning having a cluster of zero cardinality, or too close to zero relative to the set size, only possibly allowing some outlier clusters with a single element). Some more remarks follow.
\begin{itemize}
  \item For $\rho=\mathsf{pca}$ we employed the smallest dimension such that the residual variance in the neglected components was almost zero; this ranges from $3$ to $244$ in Table \ref{t:stats}. For the two cases where the dimensionality reduction was set to $400$ ($\mathsf{qrt}$ and $\mathsf{sdp}$ in the large sentence set $S$), the residual variance was nonzero.
  \item It is interesting that for $\mu=\mathsf{uie}$ we have higher projected dimensionality ($248$) in the small set $S'$ than in the large set $S$ (10): this depends on the fact that the large set has more easily distinguishable clusters (8 found by k-means) than the small set (only 3 found by k-means). The dimension of $X(\mathsf{inc},\mathsf{pca},S)$ is smaller (3) than that of $X(\mathsf{uie},\mathsf{pca},S)$ (10) even though the original number of dimensions of the former (48087) vastly exceeds that of the latter (1460) for the same reason.
  \item The training sets $X(\sigma,\mathsf{inc},\mathsf{pca})$ are the smallest-dimensional ones (for $\sigma\in\{S',S\}$): they are also ``degenerate'', in the sense that the vectors in a given clusters are all equal; the co-occurrence patterns of the incidence vectors conveyed relatively little information to this vectorial sentence representation.
  \item The RP-based dimensionality reduction method yields the same dimensionality ($373$) of $X(\mu,\mathsf{rp},S)$ for $\mu\in M$. This occurs because the target dimensionality in RP depends on the number of vectors, which is the same for all methods (3940), rather than on the number of dimensions (see Sect.~\ref{s:rp}).
\end{itemize}

There is one output set in the (\textbf{\textit{sentence graph}}) class for each $\sigma\in\Sigma$. For $\sigma=S'$ we have $|V|=245$, $|E|=28519$, and $230$ clusters, with the first $5$ clusters having $6,5,4,3,2$ elements, and the rest having a single element. For $\sigma=S$ we have $|V|=3940$, $|E|=7173633$, and 3402 clusters, with the first $10$ clusters having $161,115,62,38,34,29,19,16,14,11$ elements, and the rest having fewer than $10$ elements.

\subsubsection{Comparison tests}
We first report the comparative results of the ANN on
\[T=(X(\sigma,\mu_1,\rho_1),Y(\sigma,\mu_2,\rho_2))\]
for $\sigma\in\Sigma$, $\mu_1,\mu_2\in M$, $\rho_1,\rho_2\in R$. The sums in the rightmost columns of Table~\ref{t:comparison} are only carried out on terms obtained with an input vector generation method $\mu_1$ different from the method $\mu_2$ used to obtain the ground truth clustering via k-means (since we want to compare methods). The results corresponding to cases where $\mu_1=\mu_2$ are emphasized in italics in the table. The best performance sums are emphasized in boldface, and the worst are shown in grey.
\begin{table}[!ht]
  \begin{center}
    {\footnotesize
    \begin{tabular}{|c|r|rrrrrrrr|r|} \hline
       & \multicolumn{10}{c|}{Training set outputs} \\ \cline{2-11} 
      \multirow{20}{*}{\rotatebox[origin=c]{90}{\hspace*{-2cm}Training set inputs}} & $\mu$ & \textsf{inc} & \textsf{inc} & \textsf{uie} & \textsf{uie} & \textsf{qrt} & \textsf{qrt} &  \textsf{sdp} & \textsf{sdp} & sum \\ 
      & $\rho$ & \textsf{pca} & \textsf{rp} & \textsf{pca} & \textsf{rp} & \textsf{pca} & \textsf{rp} & \textsf{pca} & \textsf{rp} & $\mu'\not=\mu$ \\ \cline{2-11} \cline{2-11} 
      & $|\sigma|$ & \multicolumn{9}{c|}{\it $245$} \\ \cline{2-11} 
      & \begin{minipage}{0.5cm}\textsf{inc}\\[-0.3em] \textsf{pca}\end{minipage} & {\it 0.061} & {\it 0.042} & 0.059 & 0.013 & 0.094 & 0.108 & 0.064 & 0.025 & {\bf 0.363} \\ \cline{2-11} 
      & \begin{minipage}{0.5cm}\textsf{inc}\\[-0.3em] \textsf{rp}\end{minipage} & {\it 0.005} & {\it 0.010} & 0.055 & 0.015 & 0.104 & 0.109 & 0.065 & 0.025 & 0.373 \\ \cline{2-11} 
      & \begin{minipage}{0.5cm}\textsf{uie}\\[-0.3em] \textsf{pca}\end{minipage} & 0.271 & 0.052 & {\it 0.070} & {\it 0.169} & 0.233 & 0.201 & 0.127 & 0.111 & {\color{darkgrey}0.995} \\ \cline{2-11} 
      & \begin{minipage}{0.5cm}\textsf{uie}\\[-0.3em] \textsf{rp}\end{minipage} & 0.093 & 0.026 & {\it 0.094} & {\it 0.076} & 0.191 & 0.236 & 0.079 & 0.117 & {\color{darkgrey}0.976} \\ \cline{2-11} 
      & \begin{minipage}{0.5cm}\textsf{qrt}\\[-0.3em] \textsf{pca}\end{minipage} & 0.082 & 0.067 & 0.105 & 0.047 & {\it 0.084} & {\it 0.133} & 0.071 & 0.087 & 0.459 \\ \cline{2-11} 
      & \begin{minipage}{0.5cm}\textsf{qrt}\\[-0.3em] \textsf{rp}\end{minipage} & 0.057 & 0.068 & 0.059 & 0.053 & {\it 0.162} & {\it 0.073} & 0.095 & 0.055 & 0.387 \\ \cline{2-11} 
      & \begin{minipage}{0.5cm}\textsf{sdp}\\[-0.3em] \textsf{pca}\end{minipage} & 0.106 & 0.063 & 0.067 & 0.022 & 0.106 & 0.135 & {\it 0.058} & {\it 0.034} & 0.499 \\ \cline{2-11} 
      & \begin{minipage}{0.5cm}\textsf{sdp}\\[-0.3em] \textsf{rp}\end{minipage} & 0.095 & 0.065 & 0.093 & 0.021 & 0.103 & 0.139 & {\it 0.074} & {\it 0.018} & 0.516 \\ \cline{2-11} \cline{2-11} 
      & $|\sigma|$ & \multicolumn{9}{c|}{\it $3940$} \\ \cline{2-11} 
      & \begin{minipage}{0.5cm}\textsf{inc}\\[-0.3em] \textsf{pca}\end{minipage} & {\it 0.052} & {\it 0.013} & 0.068 & 0.027 & 0.106 & 0.164 & 0.079 & 0.161 & {\color{darkgrey}0.605} \\ \cline{2-11} 
      & \begin{minipage}{0.5cm}\textsf{inc}\\[-0.3em] \textsf{rp}\end{minipage} & {\it 0.001} & {\it 0.000} & 0.067 & 0.028 & 0.106 & 0.167 & 0.080 & 0.159 & {\color{darkgrey}0.607} \\ \cline{2-11} 
      & \begin{minipage}{0.5cm}\textsf{uie}\\[-0.3em] \textsf{pca}\end{minipage} & 0.063 & 0.022 & {\it 0.020} & {\it 0.016} & 0.124 & 0.201 & 0.070 & 0.127 & {\color{darkgrey}0.607} \\ \cline{2-11} 
      & \begin{minipage}{0.5cm}\textsf{uie}\\[-0.3em] \textsf{rp}\end{minipage} & 0.061 & 0.023 & {\it 0.024} & {\it 0.023} & 0.131 & 0.190 & 0.072 & 0.126 & {\color{darkgrey}0.603} \\ \cline{2-11} 
      & \begin{minipage}{0.5cm}\textsf{qrt}\\[-0.3em] \textsf{pca}\end{minipage} & 0.063 & 0.022 & 0.36 & 0.023 & {\it 0.038} & {\it 0.218} & 0.079 & 0.159 & {\bf 0.382} \\ \cline{2-11} 
      & \begin{minipage}{0.5cm}\textsf{qrt}\\[-0.3em] \textsf{rp}\end{minipage} & 0.062 & 0.024 & 0.047 & 0.025 & {\it 0.120} & {\it 0.035} & 0.076 & 0.164 & 0.398 \\ \cline{2-11} 
      & \begin{minipage}{0.5cm}\textsf{sdp}\\[-0.3em] \textsf{pca}\end{minipage} & 0.063 & 0.021 & 0.023 & 0.024 & 0.126 & 0.195 & {\it 0.033} & {\it 0.149} & 0.452 \\ \cline{2-11} 
      & \begin{minipage}{0.5cm}\textsf{sdp}\\[-0.3em] \textsf{rp}\end{minipage} & 0.059 & 0.021 & 0.025 & 0.024 & 0.121 & 0.176 & {\it 0.083} & {\it 0.037} & 0.426 \\ \hline
    \end{tabular}
    }
  \end{center}
  \caption{Comparison tests on output sets of (\textbf{\textit{k-means}}) class.}
  \label{t:comparison}
\end{table}

According to Table \ref{t:comparison}, for the small sentence set the best method is $\mathsf{inc}$, but $\mathsf{qrt}$ and $\mathsf{sdp}$ are not far behind; the only really imprecise method is $\mathsf{uie}$. For the large sentence set the best method is $\mathsf{qrt}$, with $\mathsf{sdp}$ not far behind; both $\mathsf{inc}$, $\mathsf{uie}$ are imprecise.

\begin{table}[!ht]
  \begin{center}
    {\footnotesize
    \begin{tabular}{|c|r|rrrrrrrr|} \hline
       & \multicolumn{9}{c|}{Training set outputs} \\ \cline{2-10} 
      \multirow{6}{*}{\rotatebox[origin=c]{90}{Training inputs}} & $\mu$ & \textsf{inc} & \textsf{inc} & \textsf{uie} & \textsf{uie} & \textsf{qrt} & \textsf{qrt} &  \textsf{sdp} & \textsf{sdp} \\ 
      & $\rho$ & \textsf{pca} & \textsf{rp} & \textsf{pca} & \textsf{rp} & \textsf{pca} & \textsf{rp} & \textsf{pca} & \textsf{rp} \\ \cline{2-10} \cline{2-10} 
      & $|\sigma|$ & \multicolumn{8}{c|}{\it $245$} \\ \cline{2-10} 
      & & {\bf 0.107} & 0.108 & {\color{darkgrey}0.196} & {\color{darkgrey}0.184} & 0.129 & {\color{darkgrey}0.151} & 0.109 & 0.122  \\ \cline{2-10} \cline{2-10} 
      & $|\sigma|$ & \multicolumn{8}{c|}{\it $3940$} \\ \cline{2-10} 
      & & {\bf 0.097} & 0.098 & {\color{darkgrey}0.124} & {\color{darkgrey}0.119} & {\color{darkgrey}0.136} &  {\color{darkgrey}0.113} &  {\color{darkgrey}0.114} & 0.106 \\ \hline
    \end{tabular}
    }
  \end{center}
  \caption{Comparison tests on output sets of (\textbf{\textit{sentence graph}}) class.}
  \label{t:comparison2}
\end{table}
In Table \ref{t:comparison2}, which has a similar format as Table \ref{t:comparison}, we report results on training sets
\[\bar{T}=(X(\sigma,\mu,\rho),\bar{Y}(\sigma))\]
for $\sigma\in\Sigma$, $\mu\in M$, $\rho\in R$, where $\bar{Y}(\sigma)$ are output sets of the (\textbf{\textit{sentence graph}}) class. For the small set, \textsf{inc} is the best method (independently of $\rho$), with $(\mu=\mathsf{sdp},\rho=\mathsf{pca})$ following very closely, and, in general, \textsf{sdp} and \textsf{qrt} still being acceptable; \textsf{uie} is the most imprecise method. For the large set \textsf{inc} is againt the best method, with $(\mu=\mathsf{sdp},\rho=\mathsf{rp})$ following closely. While the other method do not excel, the performance difference between all methods is less remarkable than with the small set.

\section{Conclusion}
\label{s:concl}
We have surveyed some of the concepts and methodologies of distance geometry which are used in data science. More specifically, we have looked at algorithms (mostly based on mathematical programming) for representing graphs as vectors as a pre-processing step to performing some machine learning task requiring vectorial input.

We started with brief introductions to mathematical programming and distance geometry. We then showed some ways to represent data by graphs, and introduced clustering on vectors and graphs. Following, we surveyed robust algorithms for realizing weighted graphs in Euclidean spaces, where the robustness is with respect to errors or noise in the input data. It turns out that most of these algorithms are based on mathematical programming. Since some of these algorithms output high-dimensional vectors and/or high-rank matrices, we also surveyed some dimensional reduction techniques. We also discussed a result about the instability of distances with respect to randomly generated points.

The guiding idea in this survey is that, when one is confronted with clustering on graphs, then distance geometry allows the use of many supervised and unsupervised clustering techniques based on vectors. To demonstrate the applicability of this idea, we showed that vectorial representations of graphs obtained using distance geometry offer competitive performances when training an artificial neural network. While we do not think that our limited empirical analysis allows any definite conclusion, we hope that it will entice more research in this area.

\section*{Acknowledgements}
I am grateful to J.J.~Salazar, the Editor-in-Chief of TOP, for inviting me to write this survey. This work would not have been possible without the numerous co-authors with whom I pursued my investigations in distance geometry, among which I will single out the longest-standing: C.~Lavor, N.~Maculan, A.~Mucherino. I have first heard of concentration of measure as I passed by D.~Malioutov's office at the T.J.~Watson IBM Research laboratory: the door was open, the Johnson-Lindenstrauss lemma was mentioned, and I could not refrain from interrupting the conversation and asking for clarification, as I thought it must be a mistake; incredibly, it was not, and I am grateful to Dmitry Malioutov for hosting the conversation I eavesdropped on. I very thankful to the co-authors who helped me investigate random projections, in particular P.L.~Poirion and K.~Vu, without whom none of our papers would have been possible. I learned about the existence of the distance instability result thanks to N.~Gayraud, who suggested it to me as I expressed puzzlement at the poor quality of k-means clusterings during my talk. I am very grateful to S.~Khalife and M.~Escobar for reading the manuscript and making insightful comments.

\bibliographystyle{plain}
\bibliography{dgds-arXiv}   

\end{document}